\renewcommand{\cite}{\citep}
\journalname{International Journal of Computer Vision}
\newcommand{\draftnote}[1]{}
\newcommand{\indraftnote}[1]{}
\numberwithin{equation}{section}
\begin{document}
\title{%
Multiview Differential Geometry of Curves
}
\author{%
Ricardo~Fabbri\and 
Benjamin~B.~Kimia
}
\usdate
\institute{
Brown University, 
Division of Engineering,
Providence RI 02912, USA. \ \ \ \textbf{Draft date:} \today\ \currenttime\\
\email{\{rfabbri,kimia\}@lems.brown.edu}
}

\date{ }

\maketitle

\begin{abstract}
The field of multiple view geometry has seen tremendous progress in
reconstruction and calibration due to methods for extracting reliable point
features and key developments in projective geometry. Point features, however,
are not available in certain applications and result in unstructured point cloud
reconstructions. General image curves provide a complementary feature when
keypoints are scarce, and result in 3D curve geometry, but face challenges 
not addressed by the usual projective geometry of points and algebraic curves. We
address these challenges by laying the theoretical foundations of a framework
based on the differential geometry of general curves, including stationary
curves, occluding contours, and non-rigid curves, aiming at stereo
correspondence, camera estimation (including calibration, pose, and multiview
epipolar geometry), and 3D reconstruction given measured image curves.  By
gathering previous results into a cohesive theory, novel results were made
possible, yielding three contributions. First we derive the differential
geometry of an image curve (tangent, curvature, curvature derivative) from that
of the underlying space curve (tangent, curvature, curvature derivative,
torsion). Second, we derive the differential geometry of a space curve from that
of two corresponding image curves. Third, the differential motion of an image
curve is derived from camera motion and the differential geometry and motion of
the space curve. The availability of such a theory enables novel curve-based
multiview reconstruction and camera estimation systems to augment existing
point-based approaches. This theory has been used to reconstruct a ``3D curve
sketch'', to determine camera pose from local curve geometry, and tracking;
other developments are underway.
\end{abstract}
\keywords{Structure from Motion \and Multiview Stereo \and Torsion \and
Non-Rigid Space Curves}

\section{Introduction}

The automated estimation of camera parameters and the 3D reconstruction from
multiple views is a fundamental problem in computer vision.  These tasks rely
on correspondence of image structure commonly in the form of keypoints,
although dense patches and curves have also been used.
\begin{figure}[h!]
  \centering
   \includegraphics[width=\linewidth]{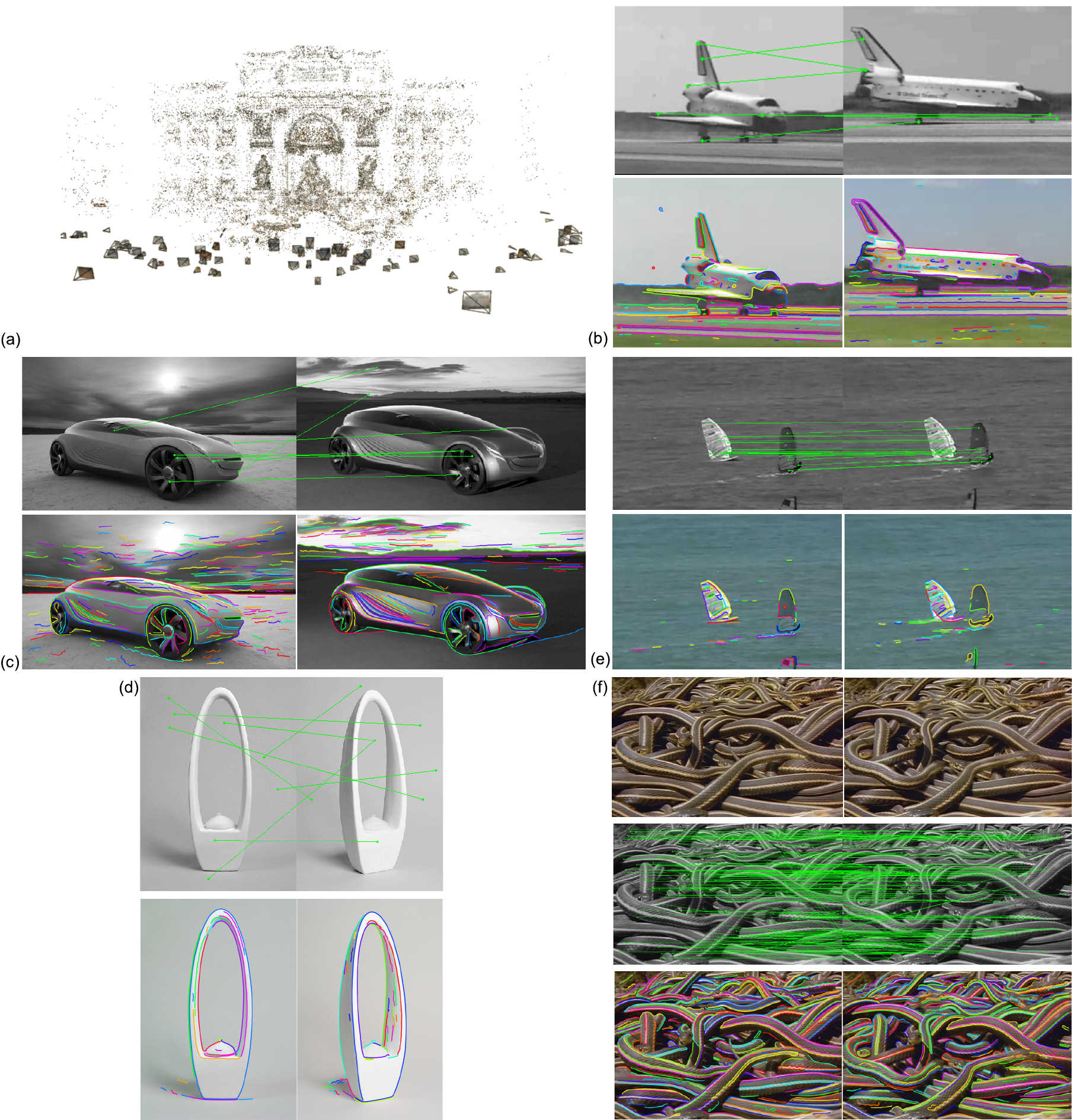}
   \caption{
   \small
   (a) Keypoint-based approaches give a sparse
   point cloud reconstruction~\cite{Argarwal:Snavely:etal:ICCV09,Heinly:Frahm:etal:CVPR2015} which
   can be made richer, sharper, and denser with curve structure. (b) Wide
   baseline views not sharing interest points often share curve structure. (c,d) Not
   enough points matching views of homogenous
   objects with
   sufficient curve structure. (e)~Each moving object or non-rigid object requires its own set of
   features, but may not have enough texture.
   }
   \vspace{-2em}
\label{fig:jul04:xx1}
\end{figure}
\textbf{Keypoint-based methods} extract point features designed
be stable with view variations. 
They satisfy certain local conditions in the spatial and
scale dimensions~\cite{mikolajczyk:schmid:IJCV04,Harris:Stephens:Edge:Corner,Moravec:ICAI77,
lowe:distinctive:IJCV04}, and are attributed with a local description of
the image~\cite{Mikolajczyk:Schmid:PAMI05}. While many of these 
points are not stable with view changes in that they disappear/appear or vary
abruptly, many are stable enough for matching with
\ransac~\cite{Fischler:Bolles:RANSAC:CACM81} to
initialize camera models for bundle
adjustment~\cite{Hartley:Zisserman:multiple:view,Pollefeys:VanGool:etal:handheld:IJCV2004,Argarwal:Snavely:etal:ICCV09,Heinly:Frahm:etal:CVPR2015,Diskin:Vijayan:JEI2015}.

A major drawback of interest points is their sparsity compared to the
curves and surfaces composing the scene, producing a cloud of 3D points
where geometric structure is not
explicit, Figure~\ref{fig:jul04:xx1}(a).  This is less of a problem for
camera estimation~\cite{Argarwal:Snavely:etal:ICCV09}, but in applications
such as 
architecture, industrial design, object recognition, and
robotic manipulation, explicit 3D geometry is required. Moreover, meshing
unstructured point
clouds produces oversmoothing~\cite{Kazhdan:etal:SGP06,Furukawa:Ponce:PAMI2010}.
These techniques are therefore inadequate for
man-made environments~\cite{Simoes:etal:SVR2014} and objects such as
cars~\cite{Shinozuka:Saito:VRIC14}, non-Lambertian surfaces such as that of the
sea, appearance variation due to changing
weather~\cite{Baatz:Pollefeys:etal:ECCV12}, and wide
baseline~\cite{Moreels:Perona:IJCV07}, Figure~\ref{fig:jul04:xx1}(b).
We claim that by using even \emph{some} curve information the 3D
reconstruction can be made more structually rich, sharper and less sparse. 

The use of keypoints requires an
abundance of features surviving the variations
between views. While this occurs for many scenes, in many others this is not the case, such as
$(i)$ homogeneous regions from man-made objects,
Figure~\ref{fig:jul04:xx1}(c,d); $(ii)$ moving objects require their
own set of features which are too few without sufficient
texture, Figure~\ref{fig:jul04:xx1}(e); $(iii)$ non-rigid objects require a
rich set of features per local patch,
Figure~\ref{fig:jul04:xx1}(f).
While curve features, like keypoints,
may not be abundant the interior of highly homogeneous regions or
in nonrigid or small moving objects,
\emph{curves rely on texture to a much lesser extent}, as shown in Figure~\ref{fig:jul04:xx1}(c,d,e);
the boundary of homogeneous regions are good cues (often the
\emph{only} cues), enabling extracting information structure and motion.
In all these situations, there may be sufficient curvilinear structure,
Figure~\ref{fig:jul04:xx1}, motivating augmenting the use of interest points
with curves.

\textbf{Pixel-based multiview stereo} 
relies on matching
intensities across views, resulting in a dense point cloud
or a mesh
reconstruction. 
This produces
detailed 3D reconstructions of objects imaged under controlled conditions by a
large number of precisely calibrated cameras~\cite{Furukawa:Ponce:CVPR2007,
Habbecke:Kobbelt:CVPR2007,Hernandez:Schmitt:CVIU04,Goesele:etal:ICCV07,Seitz:etal:CVPR06,Calakli:etal:3DIMPVT2012,Restrepo:etal:JPRS2014}.  However, 
there are a number of limitations: they typically assume that the
scene consists of a single object or that objects are of a specific type, such
as a building; they often require accurate camera calibration and operate
under controlled acquisition; and they need to be initialized by the visual hull
of the object or a bounded 3D voxel volume, compromising aplicability for
general scenery.




\textbf{Curve-based multiview methods} 
can be divided into three categories: $(i)$
convex hull construction, $(ii)$ occluding contour reconstruction, 
and $(iii)$ use of differential geometry in binocular and trinocular stereo.
First, when many views are
available around an object, a visual hull has been constructed
from silhouette curves and
then evolved to optimize
photometric constraints while constraining the
projection to the silhouettes. The drawbacks
are similar to those of pixel-based multiview stereo.
Second, the occluding contours
extracted from frames of a video have been used to reconstruct a local surface model
given the camera parameters
for each frame. These methods require highly controlled acquisition and
image curves that are easy to segment and track.
In addition, since only silhouettes are used, internal surface variations
which may not map to apparent contours in any view will not be captured, \eg,
surface folds of a sculpture. 

Third, some methods employ curve differential geometry in correlating structure
across views. Complete 3D reconstruction pipelines based on straight
lines~\cite{Lebeda:etal:ACCV2014,Zhang:line:PHDThesis2013,Fathi:etal:AEI2015},
algebraic and general curve features~\cite{Teney:Piater:3DIMPVT12,Litvinov:etal:IC3D2012,Fabbri:Kimia:CVPR10,Fabbri:Kimia:Giblin:ECCV12,Wendel:etal:CVWW2011}
have been proposed. The compact curve-based 3D representation that has found
demand in seveal tasks: fast recognition of general 3D
scenery~\cite{Wendel:etal:CVWW2011}, efficient transmission of general 3D
scenes, scene understanding and modeling by reasoning at
junctions~\cite{Mattingly:etal:JVLC2015}, consistent non-photorealistic
rendering from video~\cite{Chen:Klette:IVT2014}, modeling of branching structures, to name a
few~\cite{Rao:etal:IROS2012,Kowdle:etal:ECCV10,Ruizhe:Medioni:CVPR2014}.

\textbf{Related work:} Differential geometry does not provide hard constraints for matching
in static binocular stereo, as known for tangents and
curvatures~\cite{Robert:Faugeras:1991}, and shown here for higher order.
Heuristics have been employed in short baseline to limit orientation
difference~\cite{Arnold:Binford:1978,Grimson:Stereo:1981,Sherman:Peleg:PAMI1990},
to match appearance via locally planar
approximations~\cite{Schmid:Zisserman:2000},
or to require 3D curve reconstructions arising from two putative
correspondence pairs to have minumum torsion~\cite{Li:Zucker:2003}. 
When each stereo camera provides a video and the scene (or stereo head) moves
rigidly, differential geometry provides a hard
constraint~\cite{Faugeras:Papadopoulo:IJCV93,Papadopoulo:PhD:96,Faugeras:Papadopoulo:Chapter1992}.
Differential geometry is
more directly useful in trinocular and multiview stereo, as pioneered
by~\citet{Ayache:Lustman:1987}, due to the constraint
that corresponding pairs of points and tangents from two views
uniquely determine a point and tangent in a third to match line segments
obtained from edge linking~\cite{Ayache:Lustman:1987,Spetsakis:IJCV1990,Shahsua:Trilinearity:ECCV1994,Hartley:ICCV1995}.
\citet{Robert:Faugeras:1991} extended this
to include curvature: 3D curvature and normal can be reconstructed from 2D curvatures at two views,
determining the curvature at a third. The use of curvature
improved reconstruction precision and density,
with heuristics such as the ordering constraint~\cite{Kanade:Ordering:1985}.  
\citet{Schmid:Zisserman:2000} 
derived multiview curvature transfer when only the trifocal tensor
is available, by a projective-geometric approach to the osculating circle as a
conic.


Curves have also been employed for camera estimation using
the concept of epipolar tangencies: corresponding epipolar lines
are tangent to a curve at corresponding
points~\cite{Giblin:Motion:Book,Astrom:Cipolla:Giblin:IJCV1999,Astrom:Kahl:PAMI99,Kahl:Heyden:ICCV98,
Porrill:Pollard:1991,Kaminski:Shashua:2004,Berthilsson:etal:IJCV2001,Wong:etal:IWVF01,Mendonca:etal:PAMI2001,Wong:Cipolla:IP04,Furukawa:etal:PAMI06,Hernandez:etal:PAMI07,Cipolla:etal:ICCV95,Reyes:Corrochano:IVC05,Sinha:etal:CVPR04}.
This is used to capture epipolar geometry or relative pose.

\textbf{Curve-Based Multiview Geometry:} 
What would be desirable is a generally applicable
framework, \eg, a handheld video acquiring images around objects or a set of
cameras monitoring a scene, where image curve structure can
be used to estimate camera parameters and reconstruct a 3D curve sketch on which a surface can be
tautly stretched like a tent on a metallic scaffold. This paper provides
the \emph{mathematical foundation} for this curve-based approach. Image curve
fragments are attractive because they have good localization,
have greater invariance than interest points to
changes in illumination, are stable over a greater range of baselines, and are
denser than interest points. Moreover, for the special case of occluding
contours, dense 3D surface patch reconstructions are available. The notion that
image curves contain much of the image information is supported by recent
studies~\cite{Koenderink:Wagemans:etal:iPerception2013,Zucker:PIEEE2014,Kunsberg:Zucker:LNM2014,Cole:etal:SIGGRAPH09}.



This paper develops the theoretical foundations for using the
differential geometry of image curve structure as a complementary alternative
to interest
points. This paper is organized along the lines of these questions:
$(i)$ How does the differential geometry of a space curve map to the
differential geometry of the image curve it projects to? $(ii)$ How can
the differential geometry of a space curve be reconstructed from that of 
two corresponding image curves? $(iii)$ How does the differential geometry of an image curve evolve
under camera motion?
Section~\ref{sec:notation} establishes notation for
image and space curves, camera projection and
motion, and discusses the distinction between \emph{stationary} and
\emph{non-stationary} 3D contours.
Section~\ref{sec:single:view:projection} 
relates the differential geometry of image curves, \ie, tangent,
curvature, and curvature derivative from the differential geometry of the space
curves they arise from, \ie, tangent and normal, curvature, torsion, and
curvature derivatives. Section~\ref{sec:reconstr} derives the differential
geometry of a space curve at a point from that at two corresponding image
curve points showing the key result that the ratio of parametrization speeds
is an intrinsic quantity. The key new result 
is the reconstruction of torsion and curvature derivative, given
corresponding differential geometry in two views.
Section~\ref{sec:projection:differential:motion} considers differential camera
motion and relates the differential geometry of a space curve to that
of the image and camera motion.  Results are provided concerning
image velocities and accelerations with respect to time 
for different types of curves; in particular, distinguishing apparent and stationary contours
requires second-order time
derivatives~\cite{Giblin:Motion:Book}. We
study the spatial variation of the image velocity field along curves,
which can be useful for exploiting neighborhood consistency of
velocity fields along curves.  The main new result generalizes
a fundamental curve-based differential structure from motion
equation~\cite{Papadopoulo:Faugeras:ECCV96,Papadopoulo:PhD:96} to occluding
contours.

\emph{This paper integrates the above results under the umbrella
of a unified formulation and completes missing relationships.} As a
generalized framework, it is expected to serve as reference for
research relating local properties
of general curves and surfaces to those of cameras and images.
Much of this has already been done, but a considerable amount has not,
as mentioned earlier, and most results are
scattered in the literature.
This theoretical paper has been the foundation of practical work already
reported on reconstruction and camera estimation systems as follows.
First, the pipeline for the reconstruction of a 3D Curve Sketch from image
fragments in numerous
views~\cite{Fabbri:Kimia:CVPR10,Fabbri:Kimia:CVPR16} relies on the results on 3D reconstruction and
projection of differential geometry reported in this paper, and a future
extension of this pipeline would require most results in this
paper. Second, a recent practical algorithm for pose estimation based on differential
geometry of curves~\cite{Fabbri:Kimia:Giblin:ECCV12}
relies on
the theory reported in this paper, treating the camera pose as unknowns and
using differential geometry to solve for them, \cf\ ensuing
efforts by~\citet{Kuang:Astrom:ICCV2013,Kuang:Astrom:etal:ICPR2014}. Third, work on
differential camera motion estimation from families of curves based on
the present work has been explored
by~\citet{Jain:PHD:2009,Jain:etal:Tracking:CVIU07,Jain:etal:ICIP2007}.  These
works are currently under intense development in order to build a complete
structure from motion pipeline
based on curves, which would use the majority of the results described in
this paper, including analogous results for multiview surface differential geometry that are
under development.

\section{Notation and Formulation}\label{sec:notation}

\subsection{Differential Geometry of Curves}
For our purposes, a 3D space curve $\Curve$ is a smooth map $S \mapsto
\Gama^w(S)$ of class $C^\infty$ from an interval
of $\mathbb{R}$
to $\mathbb{R}^3$, where $S$ is an arbitrary parameter,
$\tilde S$ is the arc-length parameter, and the superscript
$w$ denotes the world coordinates.
The local Frenet frame of $\Curve$ in world
coordinates is defined by the unit vectors tangent $\T^w$,
normal $\N^w$, binormal $\B^w$; $G$ is speed of parametrization, 
curvature $K$, and
torsion $\tau$. 
Similarly, a 2D curve $\gamma$ is a map $s \mapsto \gama(s)$ of class $C^\infty$ from
an interval of $\mathbb{R}$ to $\mathbb{R}^2$, where $s$ is an arbitrary parameter, $\tilde s$
is arc-length, $g$ is speed of parametrization,
$\t$ is (unit) tangent, $\n$ is (unit) normal, $\kappa$ is curvature, and $\kappa'$ is curvature derivative.
We will be concerned with regular curves, so that $G\neq 0$ and $g\neq 0$ unless
otherwise stated.
By classical differential geometry~\cite{Carmo:Diff:Geom}, we have
\begin{equation}\label{eq:frenet:explicit}
\left\{
\begin{aligned}
G  &= \|\Gama^{w'}\|  \\
\T^w &= \frac{\Gama^{w'}}{G}  &\hspace{3mm}
\N^w &= \frac{\T^{w'}}{\|\T^{w'}\|}  &\hspace{3mm}
\B^w &= \T^w \times \N^w \\
K  &= \frac{\|\T^{w'}\|}{G}  &
\dot{K} &= \frac{K'}{G}  &
\tau &= \frac{-\B^{w'}\cdot\N^w}{G},
\end{aligned}\right.
\ \ \ 
\begin{bmatrix}
\T^{w'}\\
\N^{w'}\\
\B^{w'}
\end{bmatrix} = 
G
\begin{bmatrix}
\hfill 0\,   & \hfill K &\,\,\, 0\ \  \\
-K & \hfill 0 \, &\,\,\, \tau\ \ \\
\hfill 0\, & -\tau\, &\,\,\, 0\ \ 
\end{bmatrix}
\begin{bmatrix}
\T^w\\
\N^w\\
\B^w
\end{bmatrix},
\end{equation} 
and
\begin{equation}\label{eq:frenet2D:explicit}
\begin{aligned}
g  &= \|\gama'\|, &\hspace{7mm}
\t &= \frac{\gama'}{g}, &\hspace{7mm}
\n &= \t^\perp, &\hspace{7mm}
\kappa &= \frac{\t'\cdot \n}{g}, &\hspace{7mm}
\dot \kappa &= \frac{\kappa'}{g},
\end{aligned}
\end{equation} 
where prime ``$\,'$'' denotes
differentiation with respect to an arbitrary spatial parameter ($S$ or $s$).
We use dot ``$\dot\ $'' 
to denote differentiation with respect to arc-length ($\tilde S$ or $\tilde s$)
only when an entity clearly belongs to either a space or an image
curve.
The matrix equations on the right of~\eqref{eq:frenet:explicit} are the Frenet equations. 
Note that both the curvature
derivatives $\dot{K}$ and $\dot \kappa$ are intrinsic quantities.

\subsection{Perspective Projection}
\begin{figure}
\centering
   \subfigure[]{ %
      \label{fig:1view}
      \includegraphics[height=2in]{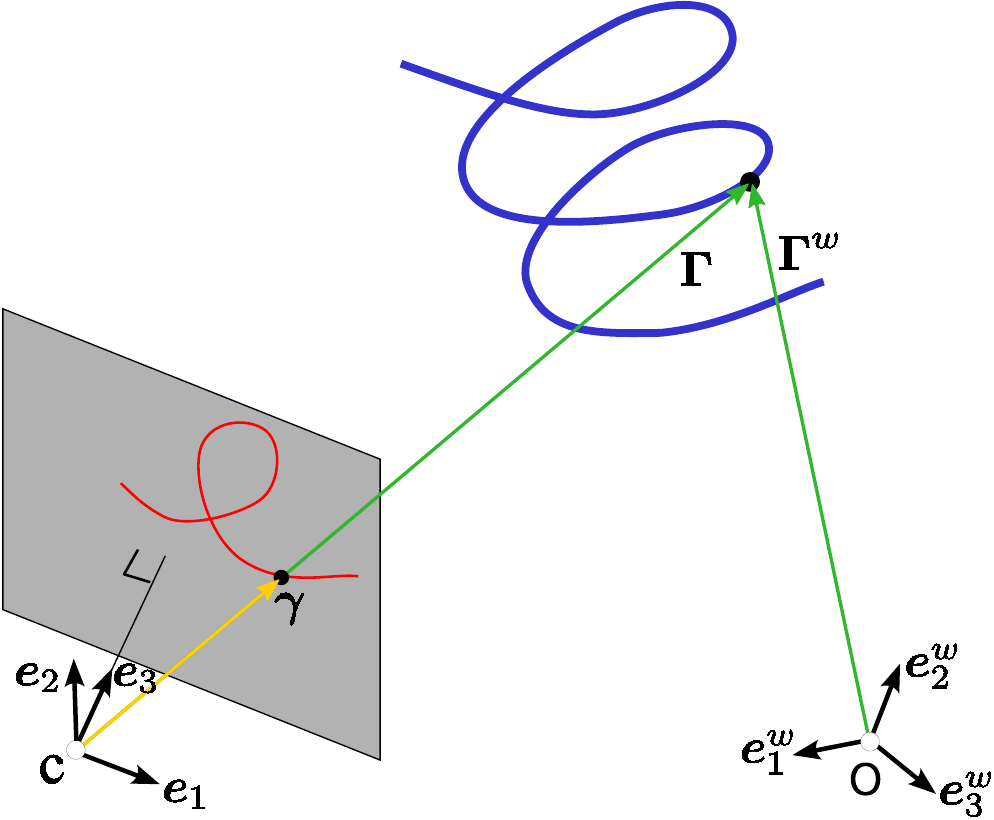}
    }
   \subfigure[]{ %
      \label{fig:mview:rig}
      \includegraphics[height=2.5in]{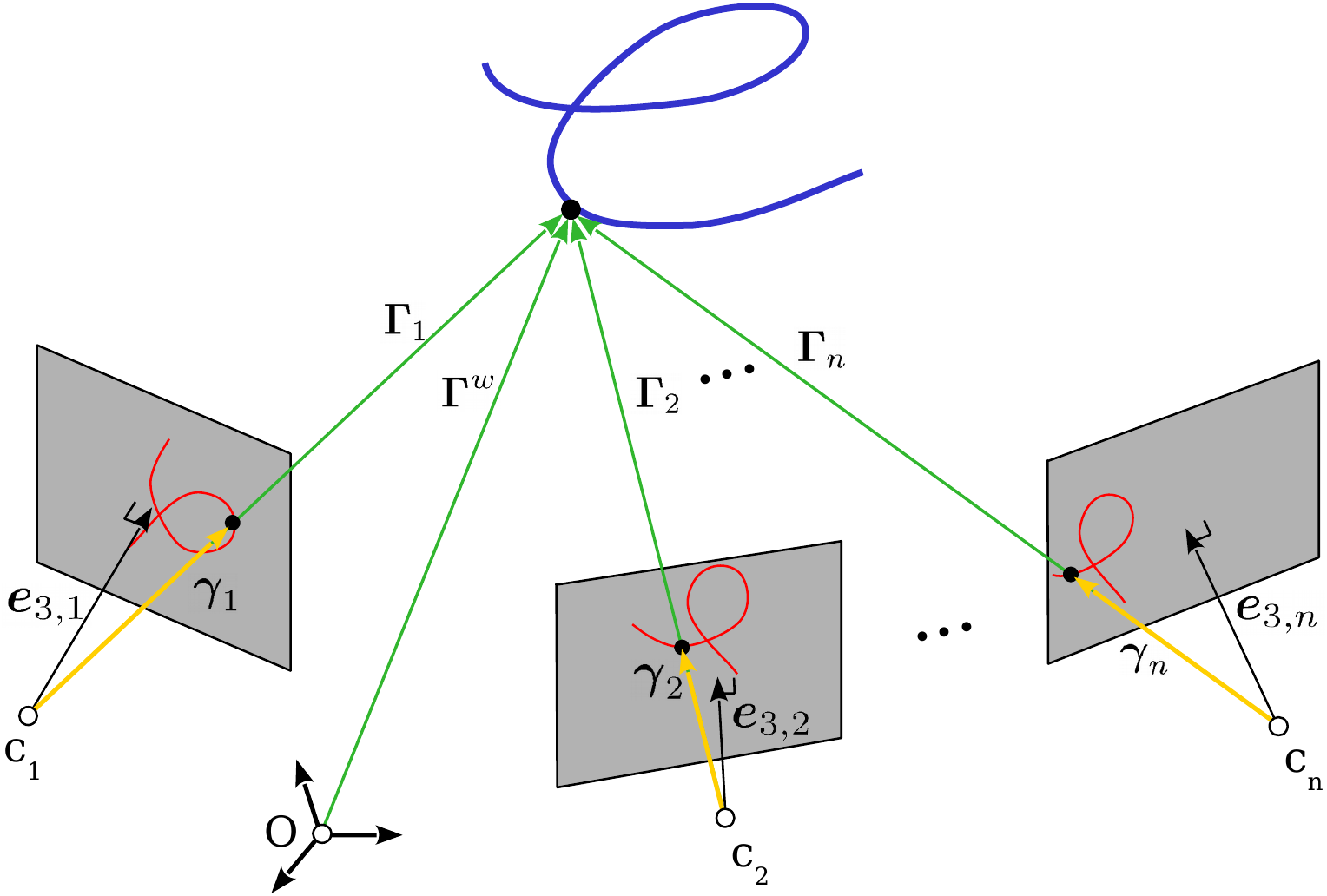}
    }
\caption{%
The perspective projection of a space curve in (a) one view, and
(b) $n$ views.
}
\end{figure}
The projection of a 3D space curve $\Gamma$ into a 2D image curve $\gamma$ is
illustrated by Figure~\ref{fig:1view},
where the world coordinate system is centered at $O$ with basis
vectors $\{\e_1^w, \e_2^w, \e_3^w\}$. The
\emph{camera coordinate system} is centered at $\bc$
with basis vectors $\{\e_1, \e_2, \e_3\}$. A generic way of referring to
individual coordinates is by means of the specific subscripts $x,y$ and $z$ attached
to a symbol, \ie, $\boldsymbol{v} = [v_x, v_y, v_z]^\top$ for any vector
$\boldsymbol v$; other subscripts denote partial differentiation.
When describing coordinates in the
camera coordinate system we drop the $w$ superscript, \eg, $\Gama$ versus
$\Gama^w$, which are related by
\begin{equation}\label{eq:coord:transf:RT}
\Gama = \rot (\Gama^w - \bc) = \rot\Gama^w + \transl,
\end{equation}
where $\rot$ is a rotation and $\transl = -\rot\bc$
denotes the world coordinate origin in the camera coordinate system.
\nomenclature[04000]{$\transl(t)$}{
$(meters)$
The translation vector that is added when
passing from coordinates at time~$t$ to the world coordinates
}%

The projection of a 3D point $\Gama = [x,\, y,\, z]^\top$ onto the image plane
at $z=1$ is the point $\gama = [\uu,\,\vv,\,1]^\top$ related by
\begin{equation}\label{eq:projection} \Gama =
\depth\gama\,\,\,\,\text{or}\,\,\,\, [x,\, y,\, z]^\top =
[\depth\uu,\,\depth\vv,\,\depth]^\top,
\end{equation}
where we say that $\gama$ is in normalized image coordinates (focal distance is
normalized to 1), and the depth is $\depth = z = \e_3^\top\Gama$%
  \nomenclature[12000]{$\depth(\xi,\eta,t)$}{$(meters)$ Depth at pixel
  $(\xi,\eta)$ measured along $z$ axis of camera at time $t$: $\Gama =
  \depth\gama$}
  \nomenclature[00800]{$\uu,\vv$}{$(pixels)$ The horizontal and vertical coordinates of the image,
  respectively.}%
  \nomenclature[10100]{$\gama(\xi,\eta,t)$}{
  (vector, $meters$) 
  The vector from the camera center to
  image point $(\xi,\eta)$ at time $t$, in the coordinates of the camera at time~$t$.
  }%
  \nomenclature[10300]{$\gama^w(\xi,\eta,t)$}{
  (vector, $meters$) 
  $\gama(\xi,\eta,t)$ in world coordinates
  }%
from the third coordinate equation. Observe that image points are treated as 3D points
with $z = 1$. Thus, we can write
\begin{align}
\gama &= \frac{\Gama}{\depth}.
\label{eq:projection:isolated:gamma}
\end{align}
We note that $\f^\top\gama^{(i)} = 0$ and $\f^\top\Gama^{(i)} = \depth^{(i)}$,
where $\gama^{(i)}$ is the $i^{th}$ derivative of $\gama$ with respect to an
arbitrary parameter, for any positive integer $i$. Specifically, 
\begin{equation}\label{eq:depth:derivs}
\depth = z,\qquad \depth' = G T_z,\qquad \depth'' = G'T_z +
G^2K N_z.
\end{equation}
It is interesting to note that at near/far points of the curve, \ie, $\depth' =
0$, $T_z = 0$.

In practice, normalized image coordinates $\gama = 
[\uu,\,\vv,\,1]^\top $ are described in terms of image
pixel coordinates $\gama_{im} = 
[x_{im},\, y_{im},\, 1]^\top$ through the 
intrinsic parameter matrix $\KK_{im}$ according to
\begin{equation}\label{eq:intrinsic:parameter:transf}
\gama_{im} = \KK_{im}\gama,
\ \ \ \ \ \
\KK_{im} = \begin{bmatrix}
\alpha_\uu & \sigma & \uu_o\\
0 &\alpha_\vv &  \vv_o\\
0 & 0 &  1
\end{bmatrix},
\end{equation}
where as usual $\uu_o$ and $\vv_o$ are the principal points, $\sigma$ is skew, and
$\alpha_\uu$ and $\alpha_\vv$ are given by the focal length divided by the width and
height of a pixel in world units, respectively.


\subsection{Discrete and Continuous Sets of Views}
Two scenarios are considered. The first scenario consists of a
\emph{discrete set of views}
where a set of $n$ pinhole cameras observe a scene as shown in
Figure~\ref{fig:mview:rig}, with the last subscript in the symbols indentifying
the camera, \eg, $\gama_i$ denotes an image point in the $i^{th}$ camera,
and $\e_{3,i}$ denotes $\e_3$ in the $i^{th}$ view. The second scenario consists
of a \emph{continuous
set of views} from a continuously moving camera observing a space curve which may
itself be moving, $\Gama^w(S,t) = [x^w(S,t),\, y^w(S,t),\, z^w(S,t)]^\top$, 
where $S$ is the parameter along the curve and $t$ is time, 
described in the camera coordinate system associated with time $t$ as $\Gama(S,t) = \left[ x(S,t),\,
y(S,t),\, z(S,t) \right]^\top$, Figure~\ref{fig:mview:rig:continuous}. 
For simplicity, we often omit the parameters $S$ or $t$.
\begin{figure}
\centering
\scalebox{1.0}{\includegraphics{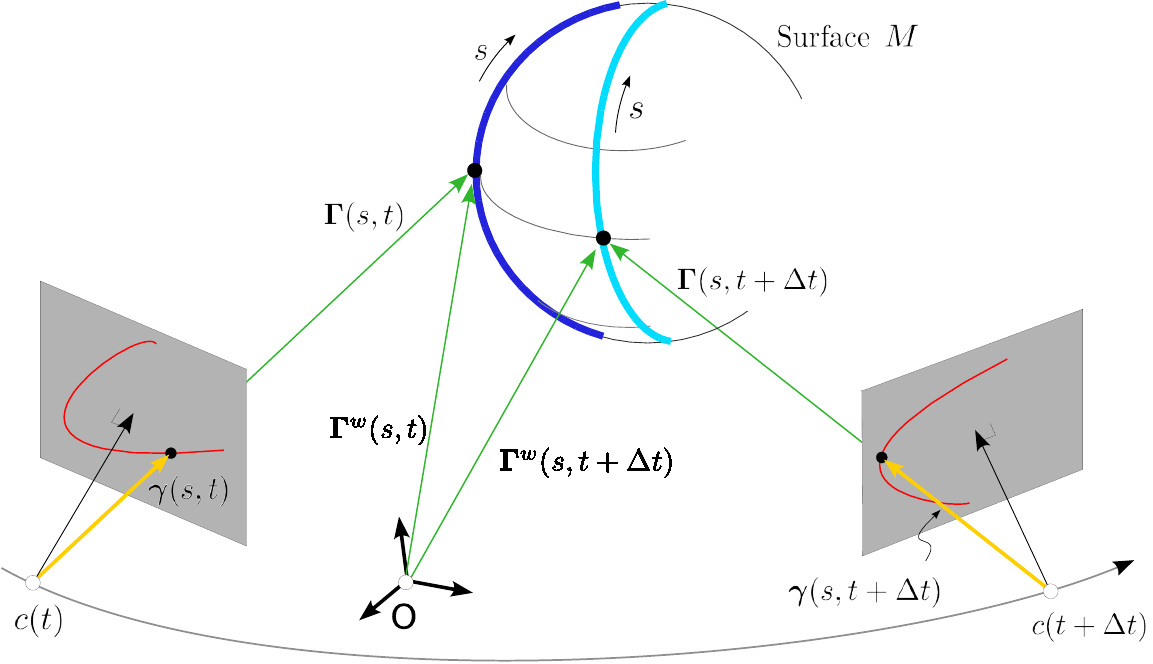}}
\caption{%
Multiview formulation of continuous camera motion and a possibly moving
contour.
}\label{fig:mview:rig:continuous}
\end{figure}
Let the camera position over time (\emph{camera orbit}) be described by the
space curve
$\bc(t)$ and the camera orientation
by a rotation matrix $\rot(t)$.
\nomenclature[02000]{$t$}{
$(seconds)$ Time, parametrizing camera or object motion
}%
\nomenclature[03900]{$\rot(t)$}{The rotation matrix from
coordinates at time~$t$ to the world coordinates}%
For simplicity, and without loss of
generality, we take the camera coordinate system at $t=0$ to be the world
coordinate system, \ie, 
$\bc(0) = 0$, $\transl(0) = 0$, and $\rot(0) = \id$,
where $\id$ is the identity matrix. Also, a stationary point can be modeled in
this notation by making $\Gama^w(t) = \Gama^w(0) = \Gama_0$.

A differential camera motion model using
time derivatives of $\rot(t)$ and $\transl(t)$ can be used to relate frames in a
small time interval. Since
$\rot\rot^\top = \id$,
\begin{equation}
\frac{d\rot}{dt}\rot^\top + \rot \frac{d\rot}{dt}^\top = 0,
\end{equation}
which implies that $\skewm \OO \doteq \frac{d\rot}{dt}\rot^\top$ is a
skew-symmetric matrix, explicitly written as
\begin{equation}
\skewm \OO = 
\begin{bmatrix}
0 & -\Omega_z & \Omega_y \\ \Omega_z & 0 & -\Omega_x \\ -\Omega_y & \Omega_x & 0 
\end{bmatrix},
\end{equation}
so that $\frac{d\rot}{dt} = \skewm\OO \rot$.
Denote $\OO = \left[\Omega_x,\, \Omega_y,\,
\Omega_z\right]^\top$ as a vector form characterization of $\skewm\OO$.
Similarly, the second-derivative of $\rot(t)$ is represented by only three
additional numbers $\frac{d\skewm\OO}{dt}$, so that
\begin{equation}\label{eq:rot:tt:atzero}
\frac{d^2\rot}{dt^2} = \frac{d\skewm \OO}{dt} \rot  + \skewm \OO
\frac{d\rot}{dt}  = 
\frac{d\skewm \OO}{dt} \rot  + \skewm \OO^2
\rot. 
\end{equation}
Thus, a second-order Taylor approximation of the camera rotation matrix using
$R(0) = \id$ is
\begin{equation}
\rot(t) \approx \id + \skewm \OO (0) t + \frac{1}{2} \left[ \frac{d\skewm \OO}{dt}(0) +
\skewm\OO^2(0) \right]t^2.
\end{equation}
Similarly, the camera translation can be described by a differential model
\nomenclature[04001]{$\OO = \frac{d\rot}{dt}(t)\rot^\top(t)$}{
($rad/s$)
The first-order approximation of
Rotation matrix, $\OO = \frac{d\rot(t)}{dt}$,  represented by rotation velocities
$[\Omega_x,\,\Omega_y,\,\Omega_z]^\top$ about the
x,y, and z axis, respectively.}%
\nomenclature[04500]{$\skewm \OO$}{(matrix, $rad/s$)
Entries of $\OO$ arranged into a
skew-symmetric matrix such that $\skewm \OO \mathbf v = \OO\times\mathbf
v$ for any vector $\mathbf v$\nomrefpage}%
\nomenclature[06000]{$\VV(t) = \frac{d\transl}{dt}(t)$}{(vector, $m/s$)
Tangential velocity vector to the curve
$\transl(t)$, \ie,~$\frac{d\transl}{dt}$}%
\nomenclature[07001]{$\VVspeed(t) = \norm{\frac{d\transl}{dt}(t)}$}{$(m/s)$ Tangential velocity to
the curve $\transl(t)$}%
\nomenclature[08000]{$\ttransl(t)$}{
(unit vector) Unit tangent vector to the curve $\bc(t)$, in camera coordinates
at time $t$.
}%
\nomenclature[08001]{$\ttransl^w(t)$}{
(unit vector) Unit tangent vector to the curve $\bc(t)$, in world coordinates.
}%
\begin{equation}\label{eq:vv:def}
\VV(t) \doteq \frac{d\transl}{dt}(t) = -\skewm\OO(t)\rot(t)\bc(t) -
\rot(t)\frac{d\bc}{dt}(t),
\qquad\qquad
\VV(0) = -\frac{d\bc}{dt}(0),
\end{equation}
and
\begin{equation}
\frac{d\VV}{dt}(t) = \frac{d^2\transl}{dt^2}(t) = -\frac{d^2 \rot}{dt^2}(t)\bc(t) -
2\frac{d\rot}{dt}(t)\frac{d\bc}{dt}(t) - \rot(t)\frac{d^2\bc}{dt^2}(t),
\end{equation}
which at $t=0$ gives $\frac{d\VV}{dt}(0) = -2\skewm\OO(0)\frac{d\bc}{dt}(0) -
\frac{d^2\bc}{dt^2}(0)$.

The choice of whether to adopt the Taylor approximation of $\bc(t)$ or
$\transl(t)$ as primary is entirely dependent in which domain the higher
derivatives are expected to diminish, giving
\begin{align}
\transl(t) \approx \VV(0)\,t + \frac{1}{2}\VV_{t}(0)\,t^2,
\qquad
\bc(t) \approx -\VV(0)t + \frac{1}{2} \left[ -\VV_t(0) +
2\skewm\OO(0)\VV(0) \right]t^2.
\end{align}

\begin{table}
  \renewcommand{\arraystretch}{1.4}
  \renewcommand{\tabcolsep}{0.1cm}
  \begin{center}
  \scriptsize
  \begin{tabular}{|c|l||c|l|}
  \hline
  \multicolumn{1}{|c|}{\textbf{Symbol}} &
  \multicolumn{1}{c||}{\textbf{Description}}&
  \multicolumn{1}{|c|}{\textbf{Symbol}} &
  \multicolumn{1}{c|}{\textbf{Description}}\\\hline\hline
  $\Gama^w$ & 3D point in the world coordinate system &
  $\t$ & Image curve tangent $\t = \gama'/g$\\\hline
  $\Gama$ & 3D point in the camera coord. syst.  {\tiny$\Gama = \rot\Gama^w + \transl$} &
  $\n$ & Image curve normal $\n = \t^\perp$\\\hline
  $\rot$ & Rotation matrix: world to camera coordinates &
  $\kappa$ & Curvature of the image curve $g\kappa\n = \t'$\\\hline
  $\transl$ & Translation vector: world to camera coord. {\tiny $\transl =
  -R\bc$} &
  $S$, $\tilde S$ & Space curve arbitrary parameter \& arclength, resp.\\\hline
  $\bc$ & The camera center &
  $G$ & Space curve speed of parametrization $G = \|\Gama'\|$\\\hline
  $\skewm \OO$ & $\frac{d\rot}{dt} = \skewm \OO \rot$ &
  $\T$, $\T^w$ & Space curve tangent camera \& world coord., resp.\\\hline
  $\OO$ & Vector form of the 3 entries of $\skewm \OO$ &
  $\N$, $\N^w$ & Space curve normal: camera \& world coord., resp.\\\hline
  $\VV$ & $\VV = \frac{d\transl}{dt} = \skewm\OO\transl - \rot\bc_t$, also 
  $\VV = [V_x,\,V_y,\,V_z]^\top$
  &
  $\B$, $\B^w$ & Space curve binormal: camera \& world coord., resp.\\\hline
  $\depth$ & Depth of image point $\Gama = \depth\gama$ &
  $\e_1$, $\e_2$, $\e_3$ & Basis vectors of the camera coordinate system\\\hline
  $\gama$ & 2D point in normalized image coordinates &
  $\e_1^w$, $\e_2^w$, $\e_3^w$ & Basis vectors of the world coordinate system\\\hline
  $\gama_{im}$ & 2D point in pixel image coordinates &
  $'$ & Diff. with resp. to $S$ or $s$, depending on context\\\hline
  $s$, $\tilde s$ & Image curve arbitrary parameter \& arclength, resp. &
  $\dot\ $  & Diff. with resp. to arclength $\tilde S$ or $\tilde s$ \\\hline
  $g$ & Image curve speed of parametrization $g = \|\gama'\|$ & 
  $\theta$ & The angle $\measuredangle(\T,\gama)$\\\hline
  $(u,v)$ & Image velocities $\gama_t = [u,\,v,\,0]^\top$ & &\\\hline
 \end{tabular}
 \end{center}\label{tab:notation} 
 \caption{Notation.} 
\end{table}

\subsection{Relating World and Camera-Centric Derivatives.}

\begin{proposition}
The velocity of a 3D point $\Gama(t)$ in
camera coordinates, $\Gama_t(t)$, is related to its velocity in the world
coordinates $\Gama^w_t(t)$ by
\begin{empheq}[left=\empheqlbrace]{align}\label{eq:3D:point:velocity:allt}
\Gama_t &=  \skewm\OO\rot\Gama^w + \rot\Gama^w_t + \VV = 
    \skewm\OO\Gama + \rot\Gama^w_t - \rot\bc_t,\\
\Gama_t &= \skewm\OO\rot\Gama_0 + \VV = \skewm\OO\Gama - \rot\bc_t,
\ \ \ \ \text{for a fixed point, $\Gama^w = \Gama_0$.}
\label{eq:3d:velocity:camera}
\end{empheq}
\end{proposition}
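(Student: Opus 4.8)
The plan is to differentiate the rigid coordinate transformation~\eqref{eq:coord:transf:RT} in time and then substitute the two differential motion models already introduced, namely $\frac{d\rot}{dt} = \skewm\OO\rot$ and $\VV = \frac{d\transl}{dt}$.

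First I would write $\Gama(t) = \rot(t)\Gama^w(t) + \transl(t)$ and differentiate with respect to $t$, which by the product rule gives $\Gama_t = \frac{d\rot}{dt}\Gama^w + \rot\,\Gama^w_t + \frac{d\transl}{dt}$. Replacing $\frac{d\rot}{dt}$ by $\skewm\OO\rot$ and $\frac{d\transl}{dt}$ by $\VV$ yields the first claimed expression, $\Gama_t = \skewm\OO\rot\Gama^w + \rot\Gama^w_t + \VV$.

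Next, to pass to the purely camera-centric form, I would eliminate the world position using~\eqref{eq:coord:transf:RT} in the form $\rot\Gama^w = \Gama - \transl$, so that $\skewm\OO\rot\Gama^w = \skewm\OO\Gama - \skewm\OO\transl$, and expand $\VV$ via~\eqref{eq:vv:def}: since $\transl = -\rot\bc$, we have $\VV = -\skewm\OO\rot\bc - \rot\bc_t = \skewm\OO\transl - \rot\bc_t$. Adding the two, the $\skewm\OO\transl$ contributions cancel and we are left with $\Gama_t = \skewm\OO\Gama + \rot\Gama^w_t - \rot\bc_t$, establishing the equality of the two right-hand sides in~\eqref{eq:3D:point:velocity:allt}. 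Finally, for a stationary point $\Gama^w = \Gama_0$ one has $\Gama^w_t = 0$; dropping that term from both forms gives $\Gama_t = \skewm\OO\rot\Gama_0 + \VV = \skewm\OO\Gama - \rot\bc_t$, which is~\eqref{eq:3d:velocity:camera}.

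The derivation is essentially mechanical, so there is no real obstacle; the only point requiring care is the bookkeeping that ties $\VV$, $\transl$, and $\bc$ together — in particular checking $\skewm\OO\transl = -\skewm\OO\rot\bc$ so that the translation-induced rotational term cancels cleanly — and recognizing that the two displayed right-hand sides are the \emph{same} quantity rewritten, not two independent assertions to be proved separately.
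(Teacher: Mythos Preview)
Your proposal is correct and follows essentially the same route as the paper: differentiate $\Gama = \rot\Gama^w + \transl$, substitute $\rot_t = \skewm\OO\rot$ and $\transl_t = \VV$, then use $\rot\Gama^w = \Gama - \transl$ together with $\VV = \skewm\OO\transl - \rot\bc_t$ to cancel the $\skewm\OO\transl$ term. The fixed-point case is handled identically by setting $\Gama^w_t = 0$.
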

\begin{proof}
Differentiating Equation~\ref{eq:coord:transf:RT} with respect to time,
\begin{align}
\Gama_t &= \rot_t\Gama^w + \rot\Gama^w_t + \transl_t\\
&= \skewm\OO \rot\Gama^w + \rot\Gama^w_t + \VV\\
&= \skewm\OO(\Gama - \transl) + \rot\Gama^w_t + \VV\\
&= \skewm\OO\Gama + \rot\Gama^w_t + \VV - \skewm\OO\transl.
\end{align}
The result follows from using $\transl = -\rot \bc$,
\begin{equation}
\VV = \transl_t = -\rot_t\bc - \rot\bc_t = -\skewm\OO\rot\bc - \rot\bc_t =
\skewm\OO\transl - \rot\bc_t.
\end{equation}
\end{proof}
%
%
%
%

\subsection{Stationary and Non-Stationary Contours}\label{sec:apparent:contour:basics}
It is important to differentiate between image contours arising from a space
curve that is changing at most with a rigid transform (stationary contours),
\eg, reflectance contours and sharp ridges, 
and image curves arising from deforming space curves (non-stationary contours),
\eg, occluding contours,  the \emph{contour
generators} projecting to \emph{apparent contours}.
Stationary contours are characterized by $\Gama_t^w = 0$ while for 
occluding contours 
the viewing direction $\Gama(S,t)$ is tangent to the surface $\surface$ with
surface normal $\N$ ($\N^w = \rot^\top \N$)
\begin{align}\label{eq:occlusion:condition}
\Gama^\top\N = 0,\qquad \text{or}\qquad (\Gama^w-\bc)^\top\N^w = 0.
\end{align}
For the image curve $\gama(s,t)$ arising from 
the occluding contour, Figure~\ref{fig:mview:rig:continuous}, the
normal $\N$ to $\surface$ at an occluding contour~\cite{Giblin:Motion:Book} can
be consistently taken as $\N = \frac{\gama\times\t}{\|\gama\times\t\|}$.

Unless otherwise stated, we assume that the parametrization $\Gama^w(S,t)$ of $\surface$ is regular for
occluding contours, so that $\Gama^w_S(S,t)$ and $\Gama^w_t(S,t)$ form the
tangent plane to $\surface$ at $\Gama^w(S,t)$, and $t$ can be seen as a
spatial parameter~\cite{Giblin:Weiss:IVC1995}. The
correlation of the parametrization $S$ of $\Gamma$ at time $t$ to that of nearby
times is captured by
$\Gama_t^w(S,t)$, which 
is orthogonal to $\N^w$ (since $\N^w$ is orthogonal to the tangent plane),
but is otherwise arbitrary as a one dimensional choice. It is common to require
that
$\Gama^w_t(S,t)$ lay on the (infinitesimal) epipolar plane, spanned by
$\Gama^w(S,t)$, $\bc(t)$, and $\bc_t(t)$, referred to as the \emph{epipolar
parametrization}~\cite{Giblin:Motion:Book,Giblin:Weiss:IVC1995},
\begin{equation}\label{eq:epipolar:param:eq}
\Gama_t^w\times(\Gama^w-\bc) = 0,\qquad\text{or}\qquad \Gama_t^w =
\lambda(\Gama^w - \bc)\text{ for some $\lambda$.}
\end{equation}

\section{Projecting Differential Geometry Onto a Single
View}\label{sec:single:view:projection}

This section relates the intrinsic differential-geometric attributes of the
space curve and those of its perspective image curves.
Specifically, the derivatives $\Gama'$, $\Gama''$, and $\Gama'''$ are 
first expressed in terms of the differential geometry of $\Curve$,
namely $\{\T,
\N, \B, K,\dot{K}, \tau\}$, and second, they are
expressed in
terms of the differential geometry of $\gama$, namely $\{\t, \n, \kappa,
\dot{\kappa}\}$ using $\Gama = \depth\gama$.  Note that $\dot{K}$ and $\dot{\kappa}$ are
both intrinsic quantities. In equating these two expressions, we relate
$\{\T, \N, \B, K, \dot{K}, \tau\}$ to $\{\t, \n, \kappa, \dot{\kappa}\}$. Our purpose is to eliminate
the dependence on the parametrizations $(g,G)$, and depth $\depth$, \ie, final expressions
do not contain these unknowns nor their derivatives $(g, g',
g'')$, $(G, G', G'')$, or unknown depth and its derivatives
$(\depth, \depth', \depth'', \depth''')$. Intrinsic camera parameters are dealt
with in Section~\ref{sec:intrinsics}.

\begin{proposition}
%
%
%
$\{\T,\, \N,\, \B,\, K,\,
\dot{K},\, \tau,\, G,\, G',\, G''\}$ are related to $\{\gama,\,\t,\, \n,\,
\kappa,\, \dot{\kappa},\, g, \, g',$ $g'',\, \depth,\, \depth',\, \depth'',\, \depth'''\}$ by
\begin{empheq}[left=\empheqlbrace]{align}
G\T &= \depth'\gama + \depth g \t  \label{eq:T:t} \\
G'\T + G^2K\N &=  \depth''\gama + (2\depth'g + \depth g')\t + \depth
  g^2\kappa\n \label{eq:N:n}\\
\begin{split}
(G'' - G^3K^2)\T &+ (3GG'K + G^3\dot{K})\N + G^3K\tau\B = \\
\depth'''\gama + [3\depth''g + 3\depth'g' & + \depth(g'' - g^3\kappa^2)] \t
+\, [3\depth'g^2\kappa + \depth(3gg'\kappa + g^3\dot{\kappa})] \n,
\end{split}
\end{empheq}
where $\Gamma$ and $\gamma$ are linked by a common parameter 
via projection $\Gama = \depth\gama$.
\end{proposition}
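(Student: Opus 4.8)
The plan is to write each of $\Gama'$, $\Gama''$, $\Gama'''$ in two ways and equate them. For the first way I differentiate $\Gama'=G\T$ repeatedly, using the Frenet equations~\eqref{eq:frenet:explicit} together with $K'=G\dot K$, which gives
\begin{equation*}
\Gama'=G\T,\qquad \Gama''=G'\T+G^2K\N,\qquad \Gama'''=(G''-G^3K^2)\T+(3GG'K+G^3\dot K)\N+G^3K\tau\B;
\end{equation*}
these are the bracketed coefficients of the geometric Taylor expansion of $\Curve$, and they are exactly the left-hand sides of the three claimed identities.

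For the second way I differentiate the projection relation $\Gama=\depth\gama$ with respect to the shared parameter. By Leibniz, $\Gama^{(i)}=\sum_{j=0}^{i}\binom{i}{j}\depth^{(j)}\gama^{(i-j)}$ for $i=1,2,3$, so everything reduces to knowing $\gama'$, $\gama''$, $\gama'''$. Since $\gama$ is a planar curve (its third coordinate is constant), the very same Frenet computation applies to it with $\tau\equiv 0$ and $G,K,\dot K$ replaced by $g,\kappa,\dot\kappa$, using $\t'=g\kappa\n$, $\n'=-g\kappa\t$, and $\kappa'=g\dot\kappa$; this yields
\begin{equation*}
\gama'=g\t,\qquad \gama''=g'\t+g^2\kappa\n,\qquad \gama'''=(g''-g^3\kappa^2)\t+(3gg'\kappa+g^3\dot\kappa)\n.
\end{equation*}
Substituting these into the Leibniz expansions and collecting the terms proportional to $\gama$, $\t$, and $\n$ produces precisely the right-hand sides of~\eqref{eq:T:t}, \eqref{eq:N:n}, and the third identity. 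Equating the two expressions for $\Gama^{(i)}$, $i=1,2,3$, finishes the proof.

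I expect no deep obstacle: this is essentially a bookkeeping exercise in repeated differentiation. The two points that require care are (i) the convention that links the parametrization of $\gama$ to that of $\Gama$ via projection — which is what ``linked by a common parameter'' encodes, and what makes the prime mean the same thing on both sides — and (ii) substituting $K'=G\dot K$ and $\kappa'=g\dot\kappa$ consistently, so that only the intrinsic curvature derivatives $\dot K$ and $\dot\kappa$ (and not $K'$, $\kappa'$) survive, as in the statement. It is worth noting that nothing about the camera is used beyond the relation $\Gama=\depth\gama$ itself, and that the actual elimination of the auxiliary unknowns $g,g',g'',G,G',G'',\depth,\depth',\depth'',\depth'''$ is deferred to the subsequent results rather than attempted here.
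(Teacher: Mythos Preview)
Your proposal is correct and essentially identical to the paper's own proof: both compute $\Gama',\Gama'',\Gama'''$ first via the Frenet equations for the space curve, then via Leibniz applied to $\Gama=\depth\gama$ together with the planar Frenet expressions for $\gama',\gama'',\gama'''$, and equate. The only cosmetic difference is that the paper writes $G^2K'$ and $g^2\kappa'$ at an intermediate step before converting to $G^3\dot K$ and $g^3\dot\kappa$, whereas you substitute $K'=G\dot K$ and $\kappa'=g\dot\kappa$ immediately.
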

\begin{proof}
First, writing $\Gama'$, $\Gama''$, and $\Gama'''$ in the Frenet frame of $\Gama$ as
\begin{empheq}[left=\empheqlbrace]{align}
  \Gama' &= G\T \label{eq:Gama:prime:T}\\
  \Gama'' &= G'\T + G^2K\N\\
  \Gama''' &= (G'' - G^3K^2)\T + (3GG'K + G^2K')\N + G^3K\tau\B.
  \label{eq:Gama:3prime:B} 
\end{empheq}
Note that when expressed with respect to the arc-length of $\Gama$, i.e., 
$G \equiv 1$, simple expressions result: 
\begin{empheq}[left=\empheqlbrace]{align}
\boldsymbol{\dot{\Gama}} &= T \label{eq:Gama:dot:T}\\
\boldsymbol{\ddot{\Gama}} &= K\N\\
\boldsymbol{\dddot{\Gama}} &= -K^2\T + \dot{K}\N +
K\tau\B\label{eq:Gama:3dot:B}.
\end{empheq}
Second, differentiating $\Gama = \depth\gama$ gives
\begin{empheq}[left=\empheqlbrace]{align}
\Gama' &= \depth'\gama  + \depth\gama'\label{eq:gamaprime}\\
\Gama'' &= \depth''\gama + 2\depth'\gama' + \depth\gama'' \label{eq:Gama2prime}\\
\Gama''' &= \depth'''\gama + 3\depth''\gama' + 3\depth'\gama'' +
\depth\gama'''. \label{eq:Gama3prime}
\end{empheq}
This can be rewritten using expressions for the derivatives of $\gama$, which are
\begin{empheq}[left=\empheqlbrace]{align}
\gama' &= g\t \label{eq:gama:prime:t}\\
\gama'' &= g'\t + g^2\kappa\n \label{eq:gama:2prime:tn}\\
\gama''' &= (g'' - g^3\kappa^2)\t + (3gg'\kappa + g^2\kappa')\n. \label{eq:gama:3prime:n}
\end{empheq}
Thus, $\Gama',\, \Gama''$, and $\Gama'''$ can be written in terms of $\gama,\,
\t,\, \n,\, \kappa,\,  
\dot{\kappa},$ $g$, $g'$, $g''$, $\depth,\, \depth',\, \depth'',\, \depth'''$ as
\begin{empheq}[left=\empheqlbrace]{alignat=2}
\Gama' &= \depth'\gama & &+ \depth g \t \label{eq:Gama:prime:t}\\
\Gama'' &=  \depth''\gama & &+ (2\depth'g + \depth g')\t + \depth
g^2\kappa\n\\
\Gama''' &= \depth'''\gama & &+ [3\depth''g + 3\depth'g' +
\depth(g'' - g^3\kappa^2)]\t \notag\\
& & &+ [3\depth'g^2\kappa + \depth(3gg'\kappa + g^3\dot{\kappa})] \n.\label{eq:Gama:3prime:n}
\end{empheq}
Equating~(\ref{eq:Gama:prime:T}-\ref{eq:Gama:3prime:B}) and
(\ref{eq:Gama:prime:t}-\ref{eq:Gama:3prime:n}) proves the proposition. 
\end{proof}

\begin{corollary}Using the arc-length $\tilde{S}$ of the space curve as the common
parameter, i.e., when $G \equiv 1$, we have
\begin{empheq}[left=\empheqlbrace]{alignat=2}
\T &= \depth'\gama & &+ \depth g \t  \label{eq:T:t:stilde} \\
K\N &=  \depth''\gama & &+ (2\depth'g + \depth g')\t + \depth
  g^2\kappa\n \label{eq:N:n:stilde}\\
-K^2\T + \dot{K}\N + K\tau\B &= \depth'''\gama & &+ [3\depth''g + 3\depth'g' +
\depth(g'' - g^3\kappa^2)] \t \notag\\
& & &+ [3\depth'g^2\kappa + \depth(3gg'\kappa + g^3\dot{\kappa})] \n. \label{eq:TNB:tn}
\end{empheq}
\end{corollary}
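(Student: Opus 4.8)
The plan is to derive the Corollary as the direct specialization of the preceding Proposition obtained by taking the common parameter to be the arc-length $\tilde S$ of the space curve $\Gamma$. First I would record that this choice is legitimate: since $\Gamma$ is regular ($G\neq 0$), reparametrization by $\tilde S$ is a smooth change of parameter, and the projection identity $\Gama=\depth\gama$ is a pointwise relation, hence unchanged by it. Under this parametrization $G=\|\Gama'\|\equiv 1$, and consequently $G'\equiv 0$ and $G''\equiv 0$.

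Next I would substitute $G\equiv 1$, $G'=G''=0$ into Equations~\eqref{eq:T:t},~\eqref{eq:N:n} and the third displayed equation of the Proposition. On the left-hand sides this collapses $G\T$ to $\T$, collapses $G'\T+G^2K\N$ to $K\N$, and collapses $(G''-G^3K^2)\T+(3GG'K+G^3\dot K)\N+G^3K\tau\B$ to $-K^2\T+\dot K\N+K\tau\B$; equivalently, these are exactly the right-hand sides of the arc-length Frenet expansion~\eqref{eq:Gama:dot:T}--\eqref{eq:Gama:3dot:B} for $\dot\Gamma$, $\ddot\Gamma$, $\dddot\Gamma$. The right-hand sides of the three equations are untouched, since they are written purely in terms of $\gama,\t,\n,\kappa,\dot\kappa,g,g',g''$ and $\depth,\depth',\depth'',\depth'''$ — all differentiated with respect to the common parameter — and hence retain their form under the particular choice $\tilde S$. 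Matching the collapsed left-hand sides with the intact right-hand sides yields~\eqref{eq:T:t:stilde}--\eqref{eq:TNB:tn}.

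There is essentially no obstacle here beyond bookkeeping; the one point I would make explicit is the reinterpretation of symbols in the specialized identities. The prime now denotes $d/d\tilde S$, so $g=\|\gama'\|$ is the ratio of image arc-length to space arc-length and is generally not $1$; $\dot\kappa$ and $\dot K$ remain the intrinsic curvature derivatives, with the relation $\dot K=K'$ now holding simply because $G=1$, which is why the coefficient $G^3\dot K$ in the Proposition becomes $\dot K$ in~\eqref{eq:TNB:tn}. This completes the proof of the Corollary.
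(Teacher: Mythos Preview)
Your proposal is correct and matches the paper's treatment: the Corollary is stated as an immediate specialization of the preceding Proposition, obtained by setting $G\equiv 1$ (hence $G'=G''=0$), which collapses the left-hand sides to the arc-length Frenet forms \eqref{eq:Gama:dot:T}--\eqref{eq:Gama:3dot:B} while leaving the right-hand sides intact. The paper gives no separate proof for this Corollary, so your explicit write-up of the substitution and the remark on reinterpreting the primes as $d/d\tilde S$ simply spells out what is left implicit there.
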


\noindent\textbf{First-Order Differential Geometry.} We are now in a position to derive the first-order differential attributes of
the image curve  $(g,\t)$ from that of the 
space curve ($G$, $\T$). Note from~\eqref{eq:T:t} 
or~\eqref{eq:T:t:stilde}
that $\T$ lies on the plane spanned by $\t$ and $\gama$, i.e., $\T$ is
a linear combination of these vectors.  An exact relationship is expressed
bellow.

\begin{theorem}\label{th:tfromT}
Given the tangent $\T$ at $\Gama$ when $\T$ is not aligned with $\gama$, then
the corresponding tangent $\t$ and normal $\n$ at
$\gama$ are determined by
\begin{align}
\t = \frac{\mathbf{T} - T_z\gama}
{\|\mathbf{T} - T_z\gama\|} \label{eq:tfromT},\qquad
\n = \t^\perp \doteq \t \times \f.
\end{align}
\end{theorem}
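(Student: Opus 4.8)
The plan is to start from the single‑view projection identity already established, namely equation~\eqref{eq:T:t}, $G\T = \depth'\gama + \depth g\,\t$. Since $G \neq 0$, this says $\T$ is a linear combination of $\gama$ and $\t$, and we want to invert the relation to recover $\t$ from $\T$. The key observation is that $\t$ is orthogonal to $\f = \e_3$: because $\gama = [\uu,\vv,1]^\top$ has constant third coordinate $1$, its derivative $\gama'$ has vanishing third component, hence $\t = \gama'/g$ satisfies $\f^\top\t = 0$. This is exactly the fact recorded in the excerpt just before~\eqref{eq:depth:derivs} that $\f^\top\gama^{(i)} = 0$. So the strategy is to project equation~\eqref{eq:T:t} onto $\f$ to pin down the $\gama$‑coefficient, then substitute back and normalize.

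First I would take the inner product of $G\T = \depth'\gama + \depth g\,\t$ with $\f$. Using $\f^\top\t = 0$ and $\f^\top\gama = 1$, this gives $G\,T_z = \depth'$, where $T_z = \f^\top\T$. (This is consistent with~\eqref{eq:depth:derivs}, where $\depth' = G T_z$.) Substituting $\depth' = G T_z$ back into~\eqref{eq:T:t} yields $G\T = G T_z\,\gama + \depth g\,\t$, i.e.\ $\depth g\,\t = G(\T - T_z\gama)$. Hence $\t$ is a positive scalar multiple of $\T - T_z\gama$ (positive because $\depth = z > 0$ for a point in front of the camera and $g > 0, G > 0$ for regular curves — and in any case $\t$ is forced to be the unit vector in that direction up to sign, which the orientation convention fixes). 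Therefore $\t = (\T - T_z\gama)/\|\T - T_z\gama\|$, which is~\eqref{eq:tfromT}. The hypothesis that $\T$ is not aligned with $\gama$ is precisely what guarantees $\T - T_z\gama \neq 0$: if $\T - T_z\gama = 0$ then $\T = T_z\gama$, i.e.\ $\T \parallel \gama$, excluded. Then $\n = \t^\perp$ is obtained by the standard $2$D rotation by $90^\circ$ in the image plane, which in the $3$D embedding with third coordinate $1$ is realized by the cross product with $\f$, giving $\n = \t\times\f$.

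The only step requiring a little care is the claim that $\T - T_z\gama$ actually lies in the image plane's direction space (third component zero) so that crossing with $\f$ produces a vector again in that plane and $\t^\perp$ is well defined as stated; but this is immediate since the third component of $\T - T_z\gama$ is $T_z - T_z\cdot 1 = 0$. I do not anticipate a genuine obstacle here — the whole content is the projection onto $\f$ that eliminates the unknown depth derivative $\depth'$ and the unknown speeds, exactly the elimination program described in the paragraph preceding the theorem. The main thing to state cleanly is the non‑degeneracy: $\T \not\parallel \gama$ $\iff$ $\T \neq T_z\gama$ $\iff$ $\T - T_z\gama \neq 0$, which is also the geometric condition that the viewing ray is not tangent to the space curve (a cusp of the image curve), so that $\t$ is genuinely defined.
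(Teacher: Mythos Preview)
Your proof is correct and follows essentially the same route as the paper: dot equation~\eqref{eq:T:t} with $\e_3$ to obtain $\depth' = G T_z$, substitute back to get $\t$ proportional to $\T - T_z\gama$, and normalize; the normal is then $\t\times\e_3$. Your added remarks on positivity of the scaling factor and on the non-degeneracy condition $\T\not\parallel\gama$ are sound and make the argument slightly more explicit than the paper's version.
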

\begin{proof}
Equation~\ref{eq:T:t} states that $\T$, $\t$, and $\gama$ are coplanar. Taking
the dot product with $\e_3$ and using $\e_3^\top\gama = 1$,
$\e_3^\top \t = 0$, and $\depth' =
GT_z$ (Equation \ref{eq:depth:derivs}), isolate $\t$ in the original
equation as
\begin{align}
\t = \frac{1}{\depth g}\left[ G\T - \depth'\gama \right] = \frac{G}{\depth g}
[\T - T_z\gama]
\label{eq:t:from:T:proof}
\end{align}
and the result follows by normalizing. The formula for the normal comes from the
fact that it lies in the image plane, therefore being orthogonal to both $\t$ and
$\e_3$.
\end{proof}

Observe that the depth scale factor $\depth$ is not needed to find $\t$ from $\T$. 
Moreover, when $\gama$ and $\T$ are aligned for a point on $\gamma$,
Equation~\ref{eq:t:from:T:proof} still holds, but implies that $g=0$ and $\t$
is undefined, \ie, that the image curve will have stationary points and
possibly corners or cusps. Stationary points are in principle not detectable
from the trace of $\gama$ alone, but by the assumption of general position these
do not concern us.

A crucial quantity in relating differential geometry along the space
curve to that of the projected image curve is the \emph{ratio of speeds of
parametrizations}
$\frac{g}{G}(s)$. The following theorem derives the key result that this
quantity is \emph{intrinsic} in that it does not depend on the parametrization
of $\Gamma$ or of $\gamma$, thus allowing a relationship between the differential
geometry of the space and image curves.

\begin{theorem}\label{thm:g:stilde}
The ratio of speeds of the projected 2D curve $g$ and of the
3D curve $G$ at corresponding points is an intrinsic quantity given by
\begin{equation}\label{eq:gG}
\frac{g}{G} = \frac{ \|\T - T_z\gama\|}{z}
\qquad\text{ or }\qquad g = \frac{\|G\T - \depth'\gama\|}{\depth},
\end{equation}
\ie, it does not depend on the parametrization of $\Gama$ or of $\gama$.
\end{theorem}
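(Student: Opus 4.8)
The plan is to extract a scalar identity already implicit in the proof of Theorem~\ref{th:tfromT}. Starting from~\eqref{eq:T:t}, namely $G\T = \depth'\gama + \depth g\,\t$, I would take the dot product with $\e_3$ and use $\e_3^\top\gama = 1$ and $\e_3^\top\t = 0$ to recover $\depth' = G\,T_z$ (i.e.\ the middle relation of~\eqref{eq:depth:derivs}). Substituting this back into~\eqref{eq:T:t} gives $\depth g\,\t = G\T - \depth'\gama = G(\T - T_z\gama)$, so the vector $\T - T_z\gama$ is a scalar multiple of the unit tangent $\t$, with the scalar being $\depth g / G > 0$.

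The next step is simply to take norms of both sides. Since $\|\t\| = 1$, $g>0$ for a regular image curve, and $\depth = z > 0$ for a point in front of the camera, this yields $\depth g = G\,\|\T - T_z\gama\|$, which rearranges to $\frac{g}{G} = \frac{\|\T - T_z\gama\|}{z}$, or equivalently, without first eliminating $G$, $g = \frac{\|G\T - \depth'\gama\|}{\depth}$. That already proves the stated formula.

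Finally, to justify the word ``intrinsic'': $\Gamma$ and $\gamma$ are tied to a single common parameter through the projection $\Gama = \depth\gama$, so under any reparametrization $s \mapsto \sigma$ both $\gama'$ and $\Gama'$ are rescaled by the same Jacobian factor $ds/d\sigma$; hence $g$ and $G$ scale identically and the ratio $g/G$ is unchanged. The right-hand side $\|\T - T_z\gama\|/z$ is assembled only from the geometric unit tangent $\T$, the scale-fixed normalized image point $\gama$, and the depth $z = \e_3^\top\Gama$ — none of which depends on a choice of parameter — so the two sides are consistent as parametrization-independent objects.

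I do not expect a genuine obstacle here; the computation is short. The only point requiring care is the degenerate configuration in which $\T$ is aligned with $\gama$, where $\T - T_z\gama = 0$ and the identity forces $g = 0$: the image curve then has a stationary point and $\t$ is undefined, precisely the exceptional case already noted after Theorem~\ref{th:tfromT}, which the standing general-position assumption rules out.
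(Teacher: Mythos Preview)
Your proof is correct and follows essentially the same route as the paper: the paper's proof simply cites Equation~\eqref{eq:t:from:T:proof} (which you re-derive from~\eqref{eq:T:t} via the dot product with $\e_3$) and then dots with $\t$, which is equivalent to your taking norms since $\t$ is the unit vector along $\T - T_z\gama$. Your added remarks on intrinsicality under reparametrization and on the degenerate case $\T \parallel \gama$ are sound and go slightly beyond what the paper spells out.
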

\begin{proof}
Follows from a dot product of Equation~\ref{eq:t:from:T:proof} with $\t$ and
dividing by $\depth = z$.
\end{proof}

\noindent \textbf{Second-Order Differential Geometry.} The curvature of
an image curve can be derived from the curvature of the space curve, as shown by the next theorem.
\begin{theorem}\label{thm:curvature:3d:2d}
The curvature $\kappa$ of a projected image curve
is given by
\begin{align}\label{eq:kfromK}
\kappa = \left[\frac{\N-N_z\gama}{\depth g^2} \cdot \n\right] \,K ,
\text{ when $G\equiv 1$,}
\qquad{or}\qquad
\kappa  = \left(\frac{G}{g}\right)^2\left[ \frac{\N^\top(\gama\times\t)}{\depth
} \right]\, K,
\end{align}
where $g$ and $\frac{G}{g}$ are given by Equation~\ref{eq:gG}, 
and $\depth=\f^\top\Gama$. 
The tangential acceleration of a projected curve with respect to the arc
length of the space curve is given by
\begin{equation}\label{eq:gprime}
\frac{dg}{d\tilde{S}} = \frac{[\N - N_z\gama]^\top\t\,K}
{\depth} -2g\frac{T_z}
{\depth}, 
\qquad\text{or}\qquad
\frac{dg}{d\tilde{S}}  = - \frac{K\N^\top(\gama\times\n)}{\depth} 
 - 2g \frac{T_z}{\depth}.
\end{equation}
\end{theorem}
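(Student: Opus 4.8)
The plan is to read off both identities by projecting the already-established second-order projection relation, Equation~\eqref{eq:N:n} (and its arclength specialization Equation~\eqref{eq:N:n:stilde}), onto the image-plane frame. The decisive observation is that $\{\t,\n,\f\}$ is an \emph{orthonormal} basis of $\mathbb{R}^3$: $\t$ and $\n=\t^\perp=\t\times\f$ are mutually orthogonal unit vectors lying in the plane $z=1$, and $\f=\e_3$ is the unit normal to that plane. Hence, since $\f^\top\gama=1$ and $N_z=\f^\top\N$, I can expand $\gama=(\gama^\top\t)\,\t+(\gama^\top\n)\,\n+\f$ and $\N=(\N^\top\t)\,\t+(\N^\top\n)\,\n+N_z\,\f$. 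Note that, unlike in an affine picture, $\gama$ genuinely has nonzero components along $\t$ and $\n$, and this is precisely what produces the $N_z\gama$ correction terms appearing in the statement.

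For the curvature, I would take the dot product of Equation~\eqref{eq:N:n} with $\n$. This annihilates the $\t$-component and leaves $G'(\T^\top\n)+G^2K(\N^\top\n)=\depth''(\gama^\top\n)+\depth g^2\kappa$. To eliminate the two parametrization-dependent terms I use $\depth''=G'T_z+G^2KN_z$ from Equation~\eqref{eq:depth:derivs}, together with the relation $\T^\top\n=T_z(\gama^\top\n)$ obtained by dotting Equation~\eqref{eq:T:t} with $\n$ (which kills $\t$) and using $\depth'=GT_z$. Substituting both, the $G'T_z(\gama^\top\n)$ pieces cancel, leaving $\depth g^2\kappa=G^2K\,(\N-N_z\gama)^\top\n$, i.e. $\kappa=(G/g)^2\,K\,(\N-N_z\gama)^\top\n/\depth$; the ratio $G/g$ is then supplied by Theorem~\ref{thm:g:stilde}, giving the second stated form. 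Specializing to $G\equiv1$ — in which case $\depth''=KN_z$ outright and the cancellation is immediate directly from Equation~\eqref{eq:N:n:stilde} — gives the first form $\kappa=[(\N-N_z\gama)/(\depth g^2)]\cdot\n\;K$.

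For the tangential acceleration I would work in the arclength parametrization of $\Gamma$ ($G\equiv1$, so $'=d/d\tilde S$, $\depth'=T_z$, $\depth''=KN_z$) and dot Equation~\eqref{eq:N:n:stilde} with $\t$ instead of $\n$: now the $\kappa\n$ term drops out and $g'$ survives inside the coefficient $2\depth'g+\depth g'$, giving $K(\N^\top\t)=KN_z(\gama^\top\t)+2T_zg+\depth g'$, hence $\depth\,g'=K\,(\N-N_z\gama)^\top\t-2T_zg$, which is Equation~\eqref{eq:gprime} after dividing by $\depth$. The alternate cross-product forms are then pure vector algebra inside $\{\t,\n,\f\}$: from $\t\times\n=-\f$, $\f\times\t=-\n$, $\n\times\f=-\t$ and the expansion of $\gama$ one computes $\gama\times\t=(\gama^\top\n)\f-\n$ and $\gama\times\n=\t-(\gama^\top\t)\f$, so that $\N^\top(\gama\times\t)=-(\N-N_z\gama)^\top\n$ and $\N^\top(\gama\times\n)=(\N-N_z\gama)^\top\t$; substituting these (up to the orientation convention chosen for $\n=\t^\perp$) rewrites the dot-product expressions in the stated cross-product form.

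The one step that is more than mechanical is the cancellation in the general-$G$ curvature computation: one has to notice that it is exactly the $\depth''=G'T_z+G^2KN_z$ expansion, combined with $\T^\top\n=T_z\,\gama^\top\n$ (which encodes the coplanarity of $\T$, $\t$, $\gama$ from Equation~\eqref{eq:T:t}), that makes every occurrence of $g'$, $G'$, and $\depth''$ disappear, leaving a quantity that is manifestly independent of the parametrizations. Everything else — the choice of which projection ($\cdot\n$ versus $\cdot\t$) isolates which scalar, and the cross-product identities — is bookkeeping, most of which the preceding Corollary has already done.
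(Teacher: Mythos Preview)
Your argument is correct and follows essentially the same route as the paper: substitute the depth-derivative identities from Equation~\eqref{eq:depth:derivs} into the second-order relation~\eqref{eq:N:n}/\eqref{eq:N:n:stilde}, then project onto $\n$ (respectively $\t$) to isolate $\kappa$ (respectively $g'$). The one genuine difference is how the cross-product forms are obtained. The paper dots \eqref{eq:N:n} (after substitution) directly with $\gama\times\t$ and $\gama\times\n$; because $\T$ lies in the span of $\gama$ and $\t$ by Equation~\eqref{eq:T:t}, the $G'\T$ term is annihilated in one stroke and no auxiliary identity like $\T^\top\n = T_z(\gama^\top\n)$ is needed. You instead derive the dot-product forms first and then convert via the identities $\N^\top(\gama\times\t)=-(\N-N_z\gama)^\top\n$ and $\N^\top(\gama\times\n)=(\N-N_z\gama)^\top\t$. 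Both routes are equally valid; the paper's is slightly shorter for the general-$G$ formula, while yours makes the equivalence of the two stated forms explicit. Your caveat about the orientation convention for $\n=\t^\perp$ is well placed: the paper's proof tacitly uses $\t\times\n=\e_3$, whereas Theorem~\ref{th:tfromT} writes $\n=\t\times\e_3$, and this accounts for the sign you observe when matching the two displayed forms.
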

\begin{proof}
Using Equation~\ref{eq:depth:derivs} in
Equation~\ref{eq:N:n}  leads to
\begin{align}
G'\T + G^2K\N &= 
 (G'T_z + G^2KN_z)\gama + 2GT_zg\t + \depth g'\t  +  \depth
g^2\kappa\n . \label{eq:k:K:proof}
\end{align}
First, in the case of $G \equiv 1$ curvature $\kappa$ can be isolated by taking
the dot product of the last equation with $\n$ which gives the curvature
projection formula~\eqref{eq:kfromK}. By instead taking the
dot product with $\gama\times\t$ we arrive at the alternative
formula, since the only remaining terms are those containing $\N$ or $\n$,
\begin{equation}
G^2K\N^\top(\gama\times\t) = \depth g^2 \kappa \n^\top(\gama\times\t).
\end{equation}
Isolating $\kappa$ and using $\n^\top(\gama\times\t) = \gama^\top(\t\times\n) =
\gama^\top\e_3 = 1$ gives the desired
equation.
Second, the term $g'$ can be isolated by taking the dot
product with $\t$ or with $\gama\times\n$, giving the first and second formulas,
respectively, noting that $\t^\top(\gama\times\n) = -1$.
\end{proof}

Note that formulas for the projection of 3D tangent and curvatures onto 2D
tangent and geodesic curvature appear
in~\cite{Cipolla:Zisserman:1992} and~\cite[pp.~73--75]{Giblin:Motion:Book}, but an actual image
curvature was not determined there.
That the curvature of the space curve is related to the curvature of the
projected curve was derived in previous
work~\cite{Li:Zucker:2003,Robert:Faugeras:1991}, but our proof is much simpler and
more direct. Moreover, our proof methodology generalizes to relating higher
order derivatives such as curvature derivative and torsion, as shown below.

\begin{theorem}\label{thm:k:prime:from:Gama}
The curvature
derivative of a projected image curve $\gama$ is derived from the
local third-order differential geometry of the space curve as follows
\begin{equation}\label{eq:kprime:from:Ktau}
\dot{\kappa} = \frac{[\dot{K}\N +K\tau\B]^\top (\gama\times\t)}{\depth g^3} -
3\kappa\left(\frac{T_z}{\depth g} + 
\frac{g'}{g^2}\right),
\end{equation}
\end{theorem}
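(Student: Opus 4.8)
The plan is to push the argument of Theorem~\ref{thm:curvature:3d:2d} one differential order higher, reusing the third-order relation already established. Concretely, I would start from the arc-length form of the Proposition's third equation, that is, the last line of the Corollary,~\eqref{eq:TNB:tn}, so that $G\equiv 1$ and the left-hand side reads $-K^2\T + \dot{K}\N + K\tau\B$. In this normalization~\eqref{eq:depth:derivs} collapses to $\depth' = T_z$, and this is the only depth derivative that matters: on the right-hand side of~\eqref{eq:TNB:tn} the higher derivatives $\depth''$ and $\depth'''$ appear only inside the coefficients of $\t$ and $\gama$, respectively, and will be killed in the next step.

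The decisive step is to take the dot product of both sides of~\eqref{eq:TNB:tn} with $\gama\times\t$, the same vector that isolated $\kappa$ in the proof of Theorem~\ref{thm:curvature:3d:2d}. On the left-hand side the $-K^2\T$ term drops out, because~\eqref{eq:T:t:stilde} exhibits $\T$ as a linear combination of $\gama$ and $\t$, hence orthogonal to $\gama\times\t$; what remains is $[\dot{K}\N + K\tau\B]^\top(\gama\times\t)$. On the right-hand side the $\gama$- and $\t$-components vanish for the same orthogonality reason, so only the $\n$-component persists, and, as in the earlier proof, $\n^\top(\gama\times\t) = \gama^\top(\t\times\n) = \gama^\top\e_3 = 1$. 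Substituting $\depth' = T_z$, this gives the scalar identity
\[
[\dot{K}\N + K\tau\B]^\top(\gama\times\t) = 3 T_z g^2\kappa + 3\depth g g'\kappa + \depth g^3\dot{\kappa}.
\]
Solving for $\dot{\kappa}$, dividing through by $\depth g^3$, and collecting the two $\kappa$-terms as $3\kappa\bigl(T_z/(\depth g) + g'/g^2\bigr)$ yields exactly~\eqref{eq:kprime:from:Ktau}.

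I do not anticipate a real obstacle: once the right-hand side of the third-order relation is in place, the derivation is a single projection. The points that need care are purely bookkeeping --- keeping the $G\equiv1$ normalization so that the $3GG'K\N$ cross-term (which would otherwise contaminate the $\N$-component) drops, using $\depth'=T_z$ rather than $GT_z$, and remembering that the quantities $g$ and $g'$ left in the formula are themselves already intrinsic via Theorem~\ref{thm:g:stilde} and~\eqref{eq:gprime}, so that the right-hand side of~\eqref{eq:kprime:from:Ktau} depends only on the differential geometry of the space curve together with the image point $\gama$, as claimed. As with the curvature formula, dotting instead with $\n$ would produce an equivalent but messier expression carrying explicit $\gama$-components, which is precisely why $\gama\times\t$ is the natural projection to choose here.
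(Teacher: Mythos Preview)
Your proposal is correct and follows essentially the same approach as the paper: both take the dot product of~\eqref{eq:TNB:tn} with $\gama\times\t$, use $\T^\top(\gama\times\t)=0$ and $\n^\top(\gama\times\t)=1$, substitute $\depth'=T_z$, and isolate $\dot\kappa$. Your write-up is in fact slightly more explicit about why the $\T$-, $\gama$-, and $\t$-terms vanish, but the argument is identical.
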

assuming $G \equiv 1$.
\begin{proof}
Taking the scalar product of Equation~\ref{eq:TNB:tn} with $\gama\times\t$, and using
$\T^\top(\gama\times\t) = 0$ and $\n^\top(\gama\times\t) = \gama^\top(\t\times\n) =
\gama^\top\e_3 = 1$,
\begin{align}
[\dot{K}\N + K\tau\B]^\top(\gama\times\t) &= 3\depth'g^2\kappa +  \depth(3gg'\kappa +
g^3\dot{\kappa}),
\label{eq:kdot:first:derivation}
\shortintertext{which using $\depth' = T_z$ gives}
3T_z g^2\kappa +  \depth(3gg'\kappa +
g^3\dot \kappa) &= [\dot K\N + K\tau\B]^\top(\gama\times\t) .
\end{align}
Isolating $\dot{\kappa}$ gives the desired result. Since both
$g$ and $g'$
are available from Equations~\ref{eq:gG} and~\ref{eq:gprime}, the theorem follows.
\end{proof}

\subsection{Intrinsic Camera Parameters and Differential
Geometry}\label{sec:intrinsics}
This section derives the relationship between the intrinsic differential
geometry $\{\t,\, \n,\, \kappa,\, \dot \kappa\}$  of the curve in normalized
image coordinates to those in image pixel coordinates,
$\{\t_{im},\,\n_{im},\,\kappa_{im},\,\dot\kappa_{im}\}$. 
Using the intrinsic parameter matrix $\KK_{im}$ relating $\gama_{im} = \KK_{im}\gama$, 
by the linear Equation~\ref{eq:intrinsic:parameter:transf}. 

\begin{theorem}
The intrinsic quantities $\{\t,\,\n,\,\kappa,\,\dot\kappa\}$ and
$\{\t_{im},\,\n_{im},\,\kappa_{im},\,\dot\kappa_{im}\}$ under linear
transformation $\gama_{im} = \Kim\gama$ are related by
\begin{empheq}[left=\empheqlbrace]{align}
g_{im} &= \|\Kim\t\|,\qquad \t_{im} = \frac{\Kim\t}{\|\Kim\t\|},
\qquad \n_{im} = \t_{im}\times\e_3,\\
g'_{im} &= \frac{\kappa \t^\top\Kim^\top\Kim\n}{g_{im}}, \qquad \kappa_{im} =
\frac{\n_{im}^\top \Kim\kappa\n}{g_{im}^2},\\
\dot\kappa_{im} &= \frac{1}{g^3}\n_{im}^\top \Kim (-\kappa^2\t + \dot\kappa\n) -
\frac{3g_{im}'\kappa_{im}}{g_{im}^2}.
\end{empheq}
where the speed $g_{im}$ is relative to unit speed at $\gama$.
\end{theorem}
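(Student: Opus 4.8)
The plan is to exploit that $\Kim$ is a \emph{constant} matrix, so that differentiating $\gama_{im}=\Kim\gama$ with respect to any common parameter yields $\gama_{im}^{(i)}=\Kim\gama^{(i)}$ for every order $i$. This makes the proof a stripped-down version of the earlier argument that expands $\Gama=\depth\gama$: there is no scalar factor $\depth$ here, hence no $\depth',\depth'',\depth'''$ to track, which shortens the computation. I would adopt the arc length $\tilde s$ of $\gama$ as the common parameter (this is the meaning of ``$g_{im}$ relative to unit speed at $\gama$''), so that $g\equiv 1$, hence $g'=g''=0$ and $\kappa'=\dot\kappa$ along the curve. Then I would write $\gama^{(i)}$ in the Frenet frame $\{\t,\n\}$ via the $g\equiv1$ specializations of Equations~\eqref{eq:gama:prime:t}--\eqref{eq:gama:3prime:n}, i.e.\ $\gama'=\t$, $\gama''=\kappa\n$, $\gama'''=-\kappa^2\t+\dot\kappa\n$, and likewise write $\gama_{im}^{(i)}$ in the Frenet frame $\{\t_{im},\n_{im}\}$ using the same formulas with pixel-frame symbols; applying $\Kim$ to the first set and equating with the second, the coefficients are read off by projecting onto $\t_{im}$ and $\n_{im}$.

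A preliminary remark makes those projections clean: since the last row of $\Kim$ is $(0,0,1)$ and $\t$ has vanishing third component, $\Kim\t$ also has vanishing third component, so $\t_{im}=\Kim\t/\|\Kim\t\|$ is orthogonal to $\e_3$ and $\{\t_{im},\ \n_{im}=\t_{im}\times\e_3\}$ is an orthonormal basis of the image-plane directions; consequently dotting a vector with $\t_{im}$ or $\n_{im}$ extracts its tangential or normal coefficient. First order: $\gama_{im}'=g_{im}\t_{im}$ equals $\Kim\gama'=\Kim\t$, giving $g_{im}=\|\Kim\t\|$, $\t_{im}=\Kim\t/g_{im}$, and $\n_{im}=\t_{im}\times\e_3$. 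Second order: $\gama_{im}''=g_{im}'\t_{im}+g_{im}^2\kappa_{im}\n_{im}$ equals $\Kim\gama''=\kappa\,\Kim\n$; dotting with $\t_{im}$ and using $\t_{im}^\top=(\Kim\t)^\top/g_{im}$ gives $g_{im}'=\kappa\,\t^\top\Kim^\top\Kim\n/g_{im}$, and dotting with $\n_{im}$ gives $g_{im}^2\kappa_{im}=\kappa\,\n_{im}^\top\Kim\n$, i.e.\ $\kappa_{im}=\n_{im}^\top\Kim\kappa\n/g_{im}^2$.

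Third order: $\gama_{im}'''=(g_{im}''-g_{im}^3\kappa_{im}^2)\t_{im}+(3g_{im}g_{im}'\kappa_{im}+g_{im}^2\kappa_{im}')\n_{im}$ equals $\Kim\gama'''=\Kim(-\kappa^2\t+\dot\kappa\n)$; taking the dot product with $\n_{im}$ and substituting the image-curve identity $\kappa_{im}'=g_{im}\dot\kappa_{im}$ (the analog of $\kappa'=g\dot\kappa$) turns the $\n_{im}$ coefficient into $3g_{im}g_{im}'\kappa_{im}+g_{im}^3\dot\kappa_{im}$, so that
\[
\dot\kappa_{im}=\frac{\n_{im}^\top\Kim(-\kappa^2\t+\dot\kappa\n)}{g_{im}^3}-\frac{3g_{im}'\kappa_{im}}{g_{im}^2},
\]
with $g_{im}'$ and $\kappa_{im}$ already in hand from the second-order step.

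I do not expect a genuine obstacle: the whole statement is bookkeeping on a template already established for $\Gama=\depth\gama$, with the linear map $\Kim$ replacing the scaling by depth and with no depth derivatives appearing. The only two points requiring care are (i) checking that $\Kim$ fixes the third coordinate, so that $\{\t_{im},\n_{im}\}$ stays orthonormal and the coefficient extraction is legitimate, and (ii) remembering to substitute $\kappa_{im}'=g_{im}\dot\kappa_{im}$ when solving the third-order relation for the intrinsic quantity $\dot\kappa_{im}$ rather than for $\kappa_{im}'$.
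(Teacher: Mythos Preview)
Your proposal is correct and follows essentially the same approach as the paper: differentiate $\gama_{im}=\Kim\gama$ with respect to the arc length $\tilde s$ of $\gama$, express both sides in their respective Frenet frames via Equations~\eqref{eq:gama:prime:t}--\eqref{eq:gama:3prime:n}, and project onto $\t_{im}$ and $\n_{im}$ to read off the coefficients. Your added remarks (that the last row of $\Kim$ guarantees $\{\t_{im},\n_{im}\}$ is orthonormal, and the explicit substitution $\kappa_{im}'=g_{im}\dot\kappa_{im}$) are welcome clarifications but do not change the argument; note also that your $g_{im}^3$ in the $\dot\kappa_{im}$ formula is what the paper's own derivation actually produces, so the $g^3$ in the stated theorem appears to be a typo.
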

\begin{proof}
Differentiating~\eqref{eq:intrinsic:parameter:transf} with respect to the
arc-length $\tilde s$ of
$\gama$ and using~\eqref{eq:gama:prime:t}, $\gama_{im}' = \Kim\dot\gama$ gives
\begin{align}
g_{im}\t_{im} = \Kim\t.
\end{align}
Differentiating~\eqref{eq:intrinsic:parameter:transf} a second time with respect
to $\tilde s$, and using Equation~\ref{eq:gama:2prime:tn},
\begin{equation}
g_{im}'\t_{im} + g_{im}^2\kappa_{im}\n_{im} = \Kim\kappa\n.
\end{equation}
Taking the dot product with $\t_{im}$ gives the formula for $g_{im}'$, and taking the
dot product with $\n_{im}$ gives the formula for $\kappa_{im}$.
Differentiating~\eqref{eq:intrinsic:parameter:transf} a third time with respect
to $\tilde s$, and 
using Equation~\ref{eq:gama:3prime:n}, we have
\begin{equation}
(g_{im}'' - g_{im}^3\kappa_{im}^2)\t_{im} + (3g_{im} g_{im}'\kappa_{im} + g^3_{im}\dot\kappa_{im})\n_{im} = 
\Kim(-\kappa^2\t + \dot\kappa\n).
\end{equation}
Taking the dot product with $\n_{im}$,
\begin{equation}
3g_{im} g_{im}'\kappa_{im} + g_{im}^3\dot\kappa_{im} = \n_{im}^\top
\Kim(-\kappa^2\t +
\dot\kappa\n),
\end{equation}
and isolating $\kappa_{im}$, the last result follows.
\end{proof}

The above theorem can also be used in its inverse form from $\gama_{im}$ to
$\gama$ by substituting $\Kim$ for $\Kim^{-1}$, and trivially exchanging the
sub-indices. Moreover, the theorem is generally valid for relating differential
geometry under any linear transformation in place of $\Kim$.

\section{Reconstructing Differential Geometry from Multiple
Views}\label{sec:reconstr}

In the previous section, we derived the differential geometry of a projected
curve from a space curve.
In this section, we derive the differential geometry of a
space curve $\Gamma$ from that of its projected image curves in multiple views,
namely
$\gama_i$ for camera $i$, $i = 1,\dots,N$.
In order to simplify the equations, in this section \emph{all vectors are written in
the common world coordinate basis}, including $\gama_i$.
Denote $\Gama_i := \Gama^w -
\bc_i$, namely $\Gama_i$ represents the vector from the $i^{th}$ camera
center to the 3D point $\Gama^w$ in the world coordinate system.

The reconstruction of a point on the space curve $\Curve$ from two corresponding
image curve points $\gama_1$ and $\gama_2$ can be
obtained by equating the two expressions for $\Gama^w$
given by Equation~\ref{eq:projection},
\begin{equation}
\left\{ 
\begin{array}{c}
\Gama^w - \mathbf{c}_1 = \depth_1\gama_1 \\
\Gama^w - \mathbf{c}_2 = \depth_2\gama_2
\end{array}\right.  
\qquad\implies\qquad
\depth_1\gama_1 - \depth_2\gama_2 = \mathbf{c}_2 - \mathbf{c}_1.
\label{eq:depth:reconstr}
\end{equation}
Taking the dot product with $\gama_1$, $\gama_2$, and $\gama_1\times\gama_2$
gives
\begin{equation}\label{eq:jun4:star2}
\left\{\begin{aligned}
\depth_1\gama_1\cdot\gama_1 - \depth_2\gama_1\cdot\gama_2 &= (\bc_2 -
\bc_1)\cdot\gama_1\\
\depth_1\gama_1\cdot\gama_2 - \depth_2\gama_2\cdot\gama_2 &= (\bc_2 -
\bc_1)\cdot\gama_2\\
0 &= (\bc_2 - \bc_1)\cdot(\gama_1\times\gama_2),
\end{aligned}\right.
\end{equation}
which gives
\begin{equation}\label{eq:jun4:star3}
\left\{\begin{aligned}
\depth_1 &= \frac{(\bc_2 - \bc_1)\cdot\gama_1\,(\gama_2\cdot\gama_2) - (\bc_2 -
\bc_1)\cdot\gama_2\,(\gama_1\cdot\gama_2)}{
(\gama_1\cdot\gama_1)(\gama_2\cdot\gama_2) - (\gama_1\cdot\gama_2)^2}\\
\depth_2 &= \frac{(\bc_2 - \bc_1)\cdot\gama_1\,(\gama_1\cdot\gama_2) - (\bc_2 -
\bc_1)\cdot\gama_2\,(\gama_1\cdot\gama_1)}{
(\gama_1\cdot\gama_1)(\gama_2\cdot\gama_2) - (\gama_1\cdot\gama_2)^2},
\end{aligned}\right.
\end{equation}
provided that $(\bc_2 - \bc_1)\cdot(\gama_1\times\gama_2) = 0$. This is
precisely the
well-known fact that this system of three equations in two unknowns $\depth_1$
and $\depth_2$ can only be solved if the lines $\bc_1\gama_1$ and $\bc_2\gama_2$
intersect. 

The crucial factor in relating the differential geometry of image curves in
distinct views is the relationship between their parametrization in each view, given
in the next theorem.

\begin{proposition}
The ratio of parametrization speeds in two views of a space curve at
corresponding points is given by
\begin{equation}\label{eq:ratio:speeds:two:views}
\frac{g_1}{g_2} = \frac{\depth_2}{\depth_1} 
\frac{\|\T - (\e_{3,1}^\top\T)\gama_1\|}{\|\T - (\e_{3,2}^\top\T)\gama_2\|}
.
\end{equation}
\end{proposition}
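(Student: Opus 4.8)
The plan is to reduce this to Theorem~\ref{thm:g:stilde}, which already expresses the speed of a single projected curve in intrinsic form, and then simply divide the resulting expressions for the two views so that the (common) space-curve speed cancels. No new computation is really needed; the only thing to be careful about is keeping the parametrization consistent across the two views and respecting the world-basis convention adopted at the start of Section~\ref{sec:reconstr}.

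First I would fix an arbitrary parameter $S$ on the space curve $\Curve$ and use it as the common parameter for both projections $\Gama^w\mapsto\gama_1$ and $\Gama^w\mapsto\gama_2$; then the single speed $G = \|(\Gama^w)'\|$ governs both views. For the $i^{th}$ view, applying the second form of Theorem~\ref{thm:g:stilde} with $\depth_i = \e_{3,i}^\top(\Gama^w-\bc_i)$ and $\depth_i' = G\,\e_{3,i}^\top\T$ (from Equation~\ref{eq:depth:derivs}, now read in the world basis, so that $\e_{3,i}^\top\T$ is the $z$-component of the space-curve tangent along the optical axis of camera $i$) gives $g_i = \|G\T - \depth_i'\gama_i\|/\depth_i = G\,\|\T - (\e_{3,i}^\top\T)\gama_i\|/\depth_i$, where $G>0$ was pulled out of the norm. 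Dividing the $i=1$ and $i=2$ expressions cancels $G$ and yields exactly $\frac{g_1}{g_2} = \frac{\depth_2}{\depth_1}\,\frac{\|\T - (\e_{3,1}^\top\T)\gama_1\|}{\|\T - (\e_{3,2}^\top\T)\gama_2\|}$.

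There is essentially no obstacle here; the closest thing to a subtlety is the remark that the ratio is independent of the chosen common parameter: under a reparametrization $S\mapsto \tilde S(S)$ both $g_1$ and $g_2$ rescale by the identical factor $dS/d\tilde S$, so $g_1/g_2$ is unchanged — equivalently, this is just the intrinsic character of $g/G$ already established in Theorem~\ref{thm:g:stilde}. I would also note in passing that the formula presupposes $\gama_1,\gama_2$ are projections of the same point, at which $\T$, $\depth_1$, and $\depth_2$ are all well defined, and that it degenerates (one of the $g_i$ vanishing, a stationary/cusp point in that view) precisely when $\T$ is aligned with the corresponding viewing ray $\gama_i$, consistent with Theorem~\ref{th:tfromT}.
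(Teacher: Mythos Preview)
Your proof is correct and follows essentially the same approach as the paper: apply Theorem~\ref{thm:g:stilde} (Equation~\ref{eq:gG}) to each view and divide, cancelling the common space-curve speed $G$. The additional remarks you make about parametrization independence and degeneracy are sound but go beyond what the paper records in its one-line proof.
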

\begin{proof}
Follows by dividing expressions for $\frac{g_1}{G}$ and $\frac{g_2}{G}$
from Equation~\ref{eq:gG}.
\end{proof}

Next, note from Equation~\ref{eq:T:t} that the unit vector
$\T$ can be written as
\begin{align}\label{eq:T:t:gama}
\T = \frac{\depth'}{G}\gama + \depth \frac{g}{G}\t.
\end{align}
Since $\T$ is a unit vector, it can be written as
\begin{align}
\label{eq:ben:T:sincos}
\T &= \cos\theta\, \frac{\gama}{\|\gama\|} + \sin\theta\,\t, 
\qquad \text{where }\ \ \cos\theta = \frac{\depth'}{G}\|\gama\|, \ \ \sin\theta =
\depth\frac{g}{G}.
\end{align}
Note that $\depth > 0$ implies that
$\sin\theta \geq 0$ or $\theta \in [0,\pi)$.
Thus the reconstruction of $\T$ from $\t$ requires the discovery of the
additional parameter $\theta$ which can be provided from tangents at two
corresponding points, as stated in the next result.

\begin{theorem}\label{thm:ben:Tfromt}
Two tangent vectors at a corresponding pair of points, namely $\t_1$ at $\gama_1$ and $\t_2$ at
$\gama_2$, reconstruct the corresponding space tangent $\T$ at
$\Gama$ as
\begin{align}\label{eq:tangent:rec:ben}
\T = \cos\theta_1 \frac{\gama_1}{\|\gama_1\|} + \sin\theta_1\t_1 =
\cos\theta_2\frac{\gama_2}{\|\gama_2\|} + \sin\theta_2\t_2,
\end{align}
and
\begin{equation}
\begin{aligned}\label{eq:ratio:speeds:two:views:ben}
\depth_1 g_1 &= \sin\theta_1 G, \qquad \depth_1'\|\gama_1\| = \cos\theta_1 G,
\qquad \frac{\depth_1'}{\depth_1g_1} =
-\frac{\t_1\cdot(\gama_2\times\t_2)}{\gama_1\cdot(\gama_2\times\t_2)} \\
\depth_2 g_2 &= \sin\theta_2 G, \qquad \depth_2'\|\gama_2\| = \cos\theta_2
G,\qquad
\frac{\depth_2'}{\depth_2g_2} =
-\frac{\t_2\cdot(\gama_1\times\t_1)}{\gama_2\cdot(\gama_1\times\t_1)},
\end{aligned}
\end{equation}
where
\begin{equation}
\begin{aligned}\label{eq:tan:tangent:ben}
\tan\theta_1 &= -\frac{1}{\|\gama_1\|}
\frac{\gama_1\cdot(\gama_2\times\t_2)}{\t_1\cdot(\gama_2\times\t_2)}, \qquad \theta_1 \in [0,\pi)\\
\tan\theta_2 &= -\frac{1}{\|\gama_2\|}
\frac{\gama_2\cdot(\gama_1\times\t_1)}{\t_2\cdot(\gama_1\times\t_1)},\qquad \theta_2 \in [0,\pi).
\end{aligned}
\end{equation}
\end{theorem}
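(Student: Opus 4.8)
The plan is to exploit the fact that, once every vector is expressed in the common world basis, the space tangent $\T$ is a \emph{single} unit vector that must simultaneously admit the view-$1$ and the view-$2$ forms of Equation~\eqref{eq:ben:T:sincos}. Applying Equation~\eqref{eq:T:t} to each camera $i$ (noting $\Gama_i' = \Gama^{w\prime} = G\T$ since $\bc_i$ is fixed) shows that $\T$ lies in the plane $\Pi_i=\mathrm{span}\{\gama_i,\t_i\}$ through the $i^{th}$ viewing ray and the image tangent, and since $\T$ is a unit vector and $\depth_i>0$ it can be written as $\T=\cos\theta_i\,\gama_i/\|\gama_i\|+\sin\theta_i\,\t_i$ with $\theta_i\in[0,\pi)$. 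Equating the $i=1$ and $i=2$ expressions is already Equation~\eqref{eq:tangent:rec:ben}, so the only substantive step is to pin down $\theta_1$ and $\theta_2$.

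To extract $\theta_1$ I would use that $\T\in\Pi_2$ forces $\T\perp(\gama_2\times\t_2)$, the normal of $\Pi_2$. Substituting the view-$1$ decomposition of $\T$ into $\T\cdot(\gama_2\times\t_2)=0$ leaves only the $\gama_1$- and $\t_1$-terms, and solving the resulting scalar equation for the ratio $\sin\theta_1/\cos\theta_1$ gives $\tan\theta_1$ exactly as in the first line of Equation~\eqref{eq:tan:tangent:ben}; exchanging the roles of the two views yields $\theta_2$. I would then observe that the branch of $\arctan$ is fixed by $\theta_i\in[0,\pi)$ (equivalently $\sin\theta_i\ge 0$, a consequence of $\depth_i>0$), so $\T$ is unambiguous, and back-substituting $\theta_1$ or $\theta_2$ into $\T=\cos\theta_i\,\gama_i/\|\gama_i\|+\sin\theta_i\,\t_i$ reproduces the reconstruction formula~\eqref{eq:tangent:rec:ben}.

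The auxiliary identities in~\eqref{eq:ratio:speeds:two:views:ben} I would then simply read off: $\depth_i g_i=\sin\theta_i\,G$ and $\depth_i'\|\gama_i\|=\cos\theta_i\,G$ are the defining formulas for $\sin\theta_i$ and $\cos\theta_i$ in Equation~\eqref{eq:ben:T:sincos} rewritten, and dividing them yields $\depth_i'/(\depth_i g_i)=\cot\theta_i/\|\gama_i\|$. Plugging in the value of $\tan\theta_i$ from~\eqref{eq:tan:tangent:ben} turns the right-hand side into the triple-product quotient stated, e.g. $\depth_1'/(\depth_1 g_1)=-\,\t_1\cdot(\gama_2\times\t_2)\,/\,\gama_1\cdot(\gama_2\times\t_2)$.

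The algebra is elementary; the part that needs care is bookkeeping around degeneracy and orientation. Dividing by $\t_1\cdot(\gama_2\times\t_2)$ requires $\t_1\notin\Pi_2$, i.e. the viewing-plane normals $\gama_1\times\t_1$ and $\gama_2\times\t_2$ are not parallel — precisely the condition that $\Pi_1\cap\Pi_2$ is one-dimensional and so determines $\T$; the excluded configuration is the coplanar/epipolar-tangency case in which two tangents fail to constrain $\T$, and this should be stated as a hypothesis (consistent with the theorem's qualifier that $\T$ is not aligned with $\gama$). One must also fix the image-curve orientations in the two views consistently so that the two conditions $\sin\theta_i\ge 0$ are mutually compatible; otherwise the construction returns $-\T$, which is harmless since the space curve carries no intrinsic orientation. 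I expect no genuine analytic obstacle beyond flagging these two points.
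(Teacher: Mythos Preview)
Your proposal is correct and follows essentially the same approach as the paper's proof: write $\T$ in each view via Equation~\eqref{eq:ben:T:sincos}, equate the two expressions, and dot with $\gama_2\times\t_2$ (respectively $\gama_1\times\t_1$) to isolate $\tan\theta_1$ (respectively $\tan\theta_2$); the auxiliary identities are then read off from the defining relations for $\sin\theta_i$, $\cos\theta_i$. Your discussion of the degeneracy $\t_1\in\Pi_2$ and of orientation consistency is more explicit than the paper's terse proof, but the underlying argument is identical.
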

\begin{proof}
Equating the two expressions for $\T$ from Equation~\ref{eq:ben:T:sincos}, one for each view,
gives Equation~\ref{eq:tangent:rec:ben}. Solving for
$\theta_1$ by taking the dot product with $\gama_2\times\t_2$ gives
\begin{equation}\label{eq:tangent:rec:ben:dot:gama:t}
\cos\theta_1\,\frac{\gama_1}{\|\gama_1\|}\cdot(\gama_2\times\t_2) +
\sin\theta_1\,\t_1\cdot(\gama_2\times\t_2) = 0,
\end{equation}
which leads to Equation~\ref{eq:tan:tangent:ben} and similarly for $\theta_2$.
Equation~\ref{eq:ratio:speeds:two:views:ben} follows
from equating~\eqref{eq:tangent:rec:ben:dot:gama:t} and~\eqref{eq:T:t:gama},
then taking dot products with $\gama_2\times\t_2$.
\end{proof}

\noindent\textbf{Remark:} Since $\T$ is orthogonal to both $\gama_1\times\t_1$,
and $\gama_2\times\t_2$ we have
\begin{align}\label{eq:ben:Tfromt}
\varepsilon\T &= \frac{(\t_1 \times \gama_1) \times (\t_2 \times \gama_2)} 
{\|(\t_1 \times \gama_1) \times (\t_2 \times \gama_2)\|} & \varepsilon = \pm 1,
\end{align}
where $\varepsilon$ is determined from
\begin{align}\label{eq:t:sign}
\left\{
\begin{aligned}
\varepsilon\left[\T - (\T\cdot\e_{3,1})\gama_1\right]\cdot\t_1 & > 0\\
\varepsilon\left[\T - (\T\cdot\e_{3,2})\gama_2\right]\cdot\t_2 & > 0.
\end{aligned}
\right.
\end{align}

\begin{figure*}
\centering
  \scalebox{0.55}{\includegraphics{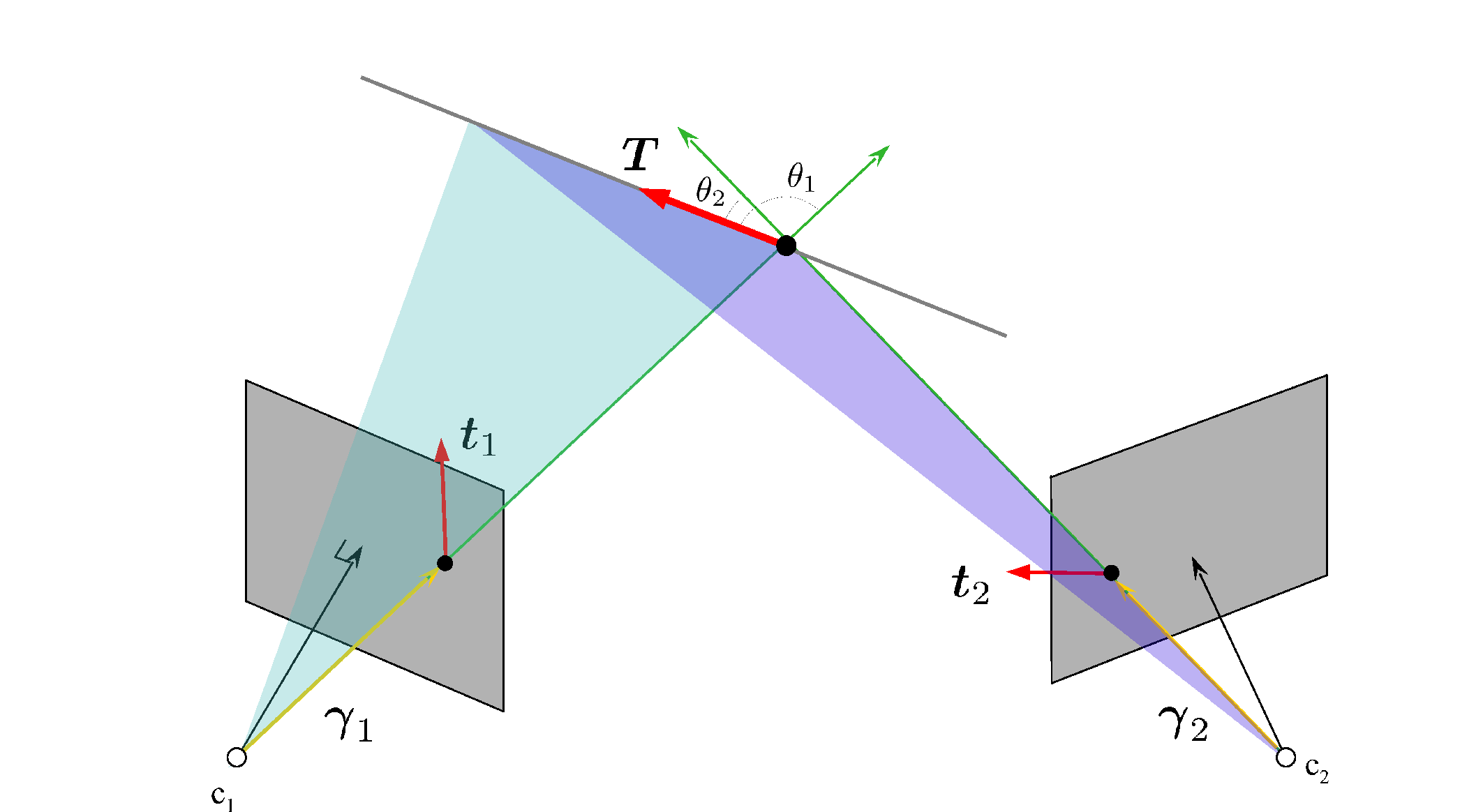}}
\caption{%
3D Tangent reconstruction from two views as the intersection of two planes.
}\label{fig:t:reconstr}
\end{figure*}
\noindent\textbf{Remark:} This theorem implies that \emph{any} two tangents at
corresponding points can be consistent with at least one space tangent.
Furthermore, the discovery of $\T$ does not require the \emph{a priori} solution
of $\depth_1$ or $\depth_2$.

\noindent\textbf{Remark:} An analogous tangent reconstruction expression under continuous motion
may be derived, see~\cite{Faugeras:Papadopoulo:IJCV93}.

\begin{theorem}\label{th:k:rec}
The normal vector $\N$ and curvature $K$ of a point on a space curve $\Gamma$ with 
point-tangent-curvature at projections in two views $(\gama_1,\t_1,\kappa_1)$ and
$(\gama_2,\t_2,\kappa_2)$ are given by solving the system in the vector $\N K$
\begin{equation}
\left\{ 
\begin{aligned}
G^2(\gama_1 \times \t_1)^\top\N K &= 
\depth_1g_1^2\,\,\kappa_1\\
G^2(\gama_2 \times \t_2)^\top \N K &= 
\depth_2g_2^2\,\,\kappa_2\\
\T^\top \N K &= 0,
\end{aligned}
\right.
\end{equation}
where $\T$ is given by Equation~\ref{eq:T:t:gama}, $\depth_1$ and $\depth_2$ by
Equation~\ref{eq:jun4:star3}, and $g_1$ and $g_2$ by
Equation~\ref{eq:gG}.
\end{theorem}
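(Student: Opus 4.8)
The plan is to proceed exactly as in the single-view curvature formula of Theorem~\ref{thm:curvature:3d:2d}, but now writing the relevant equation once per view and treating $\N K$ as the unknown three-vector. Recall Equation~\ref{eq:k:K:proof} (the $G\equiv 1$ specialization of Equation~\ref{eq:N:n}), which after taking a dot product with $\gama\times\t$ collapsed to $G^2 K\,\N^\top(\gama\times\t) = \depth g^2\kappa$. The key observation is that this is \emph{linear} in the vector $\N K$, and the coefficient vector $\gama\times\t$ depends only on the projected point and tangent in that view. So for camera $i$ I would write $G^2(\gama_i\times\t_i)^\top(\N K) = \depth_i g_i^2\kappa_i$, where all quantities on the right are measured image data (with $\depth_i$ from Equation~\ref{eq:jun4:star3} and $g_i$ from Equation~\ref{eq:gG}, and $\T$ hence $G$-normalization from Equation~\ref{eq:T:t:gama}), exactly the first two rows of the claimed system.

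Second, I would supply the third equation from the Frenet relation $\T\cdot\N = 0$, which immediately gives $\T^\top(\N K) = 0$ since $K$ is a scalar; here $\T$ is the already-reconstructed space tangent from Theorem~\ref{thm:ben:Tfromt} (equivalently Equation~\ref{eq:T:t:gama} or \eqref{eq:ben:Tfromt}). Together these three linear equations form a $3\times 3$ system in the components of $\N K$; solving it and then reading off $K = \|\N K\|$ and $\N = (\N K)/K$ recovers both the normal direction and the (unsigned) curvature. This is really all the theorem asserts, so the bulk of the proof is the bookkeeping of substituting the already-established expressions for $\depth_i$, $g_i$, and $\T$ into the right-hand sides and coefficient vectors.

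The main obstacle — or rather, the point that needs care rather than cleverness — is the consistency/solvability of the $3\times 3$ system: the two vectors $\gama_1\times\t_1$ and $\gama_2\times\t_2$ must be linearly independent, and $\T$ must not lie in their span, for the coefficient matrix to be invertible. Geometrically $\gama_i\times\t_i$ is the normal of the $i^{th}$ epipolar-like viewing plane through the point and tangent, and $\T$ is orthogonal to both (this is exactly the remark following Theorem~\ref{thm:ben:Tfromt}), so generically the three rows are independent precisely when the two viewing planes are transversal, i.e.\ when the configuration is not degenerate (tangent aligned with the baseline, or the point on the epipolar plane of tangents). I would note this nondegeneracy hypothesis explicitly and observe that under it the system has a unique solution; a sign ambiguity in $\N$ versus $-\N$ (and correspondingly the sign of $K$) is resolved by the same convention used in Equation~\ref{eq:t:sign}, or simply by requiring $K>0$ as is standard for space-curve curvature. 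No heavy computation is needed beyond verifying that each row is the correct specialization of results proved earlier in Section~\ref{sec:single:view:projection} and Section~\ref{sec:reconstr}.
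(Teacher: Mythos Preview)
Your proposal is correct and follows essentially the same approach as the paper: dot Equation~\eqref{eq:k:K:proof} with $\gama_i\times\t_i$ for each view to obtain the first two rows, and append the Frenet orthogonality $\T^\top\N=0$ as the third. The paper's proof is in fact just those two sentences; your additional remarks on nondegeneracy of the $3\times3$ system and on extracting $K=\|\N K\|$, $\N=(\N K)/K$ are useful elaborations the paper omits. One small slip: you call \eqref{eq:k:K:proof} ``the $G\equiv1$ specialization,'' but it is the general-$G$ form (note the $G'$ and $G^2$ terms), which is why $G^2$ survives in the theorem's first two equations.
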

\begin{proof}
Taking the dot product of~\eqref{eq:k:K:proof} with $\gama\times\t$, for each
view, we arrive at the first two equations. The
third equation imposes the solution $\N K$ to be normal to $\T$.

\end{proof}

An analogous curvature reconstruction expression under continuous motion
can be derived, see~\cite{Cipolla:PHD:1991}. Theorem~\ref{th:k:rec} is a variant form of a known
result~\cite{Robert:Faugeras:1991,Li:Zucker:2003,Li:Zucker:IJCV2006}, using the
proposed unified formulation which enables a more condensed and generalizable
proof for the practical case of planar images. The next theorem leverages
this effective theoretical framework to achieve the reconstructuction of the torsion and
curvature derivative of a space curve from two image curves, which is the
central novel result of the present work.

\begin{theorem}\label{thm:Kprime:tau:from:kprime}
The torsion and curvature derivative at a point of a space curve
can be obtained  from up to third order differential geometry $\{\kappa,
\dot\kappa\}$ at a pair of corresponding points in two views
by solving for an unknown vector $\ttau$ in the system
\begin{equation}\label{eq:torsion:rec:system}
\left\{ 
\begin{aligned}
(\gama_1\times\t_1)^\top\ttau &= 3g_1^2\kappa_1\e_{3,1}^\top\T +  \depth_1(3g_1g_1'\kappa_1 +
g_1^3\dot{\kappa}_1)\\
(\gama_2\times\t_2)^\top \ttau&= 3g_2^2\kappa_2\e_{3,2}^\top\T +  \depth_2(3g_2g_2'\kappa_2 +
g_2^3\dot{\kappa}_2)\\
\T^\top \ttau &= 0 ,
\end{aligned}
\right.
\end{equation}
and by solving
for the torsion $\tau$ and curvature derivative $\dot{K}$ from $\ttau = \dot{K}\N +
K\tau\B$, \ie, 
\begin{empheq}[left=\empheqlbrace]{align}
\tau &= \frac{\ttau^\top\B}{K} \label{eq:Kprime:from:kprime}\\
\dot{K} &= \ttau^\top\N  \label{eq:tau:from:kprime},
\end{empheq}
with $\T$, $\N$, $\B$, $K$, $g_1$, $g_2$, 
$g_1'$, $g_2'$, $\depth_1$, and $\depth_2$ determined from previous derivations,
and assuming $G \equiv 1$.
\end{theorem}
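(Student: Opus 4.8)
The plan is to mirror exactly the proof strategy that was used for Theorem~\ref{th:k:rec}, but now one order higher in the Frenet expansion. Starting from Equation~\eqref{eq:TNB:tn} (the $G\equiv1$ corollary), the left-hand side is $-K^2\T + \dot K\N + K\tau\B$, and the right-hand side is written in terms of $\gama,\t,\n$ and the scalars $\depth,\depth',\depth'',\depth''',g,g',g''$. The key observation is that the combination $\ttau \doteq \dot K\N + K\tau\B$ is precisely the component of $\Gama'''$ (in arclength of $\Gamma$) that lives in the plane spanned by $\N$ and $\B$, i.e.\ the part orthogonal to $\T$; the $-K^2\T$ term is killed the moment we dot with anything orthogonal to $\T$. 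So first I would take the scalar product of~\eqref{eq:TNB:tn} with $\gama_i\times\t_i$ for each view $i=1,2$. Since $\T^\top(\gama\times\t)$ is \emph{not} automatically zero, this dot product picks up the $-K^2\T$ term as well — but here we use that $\gama_i\times\t_i \perp \T$ (because $\T$ lies in the plane of $\gama_i$ and $\t_i$, by Equation~\eqref{eq:T:t:t:stilde}/\eqref{eq:T:t}), so $\T^\top(\gama_i\times\t_i)=0$ and the $\T$-term drops out cleanly. On the right, $\gama^\top(\gama\times\t)=0$ and $\t^\top(\gama\times\t)=0$, so only the $\n$-coefficient survives, multiplied by $\n^\top(\gama\times\t)=\gama^\top(\t\times\n)=\gama^\top\e_3=1$. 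Using $\depth'=\e_3^\top\T$ (from~\eqref{eq:depth:derivs} with $G\equiv1$, so $T_z=\e_{3}^\top\T$ in the appropriate basis — in this section written as $\e_{3,i}^\top\T$), this yields exactly the first two equations of~\eqref{eq:torsion:rec:system}, one per view. The third equation, $\T^\top\ttau=0$, is simply the statement that $\ttau=\dot K\N+K\tau\B$ is orthogonal to $\T$, which holds by construction of the Frenet frame.

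Next I would argue that this $3\times3$ linear system in the unknown vector $\ttau\in\mathbb{R}^3$ has a unique solution. The coefficient matrix has rows $\gama_1\times\t_1$, $\gama_2\times\t_2$, and $\T$. By the remark following Theorem~\ref{thm:ben:Tfromt} (Equation~\eqref{eq:ben:Tfromt}), $\T$ is parallel to $(\t_1\times\gama_1)\times(\t_2\times\gama_2)$, so $\T$ is orthogonal to both $\gama_1\times\t_1$ and $\gama_2\times\t_2$; hence, as long as $\gama_1\times\t_1$ and $\gama_2\times\t_2$ are themselves linearly independent — which is the generic (non-degenerate-epipolar) case, the same condition under which $\T$ was reconstructible in the first place — the three rows span $\mathbb{R}^3$ and the system is invertible. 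I would note this non-degeneracy hypothesis explicitly, paralleling the ``$\T$ not aligned with $\gama$'' caveats elsewhere in the paper. All the coefficients on the right-hand side are already known: $g_i$ from Equation~\eqref{eq:gG}, $g_i'$ from Equation~\eqref{eq:gprime}, $\depth_i$ from Equation~\eqref{eq:jun4:star3}, $\e_{3,i}^\top\T$ from the reconstructed $\T$ via Theorem~\ref{thm:ben:Tfromt}, and $\kappa_i,\dot\kappa_i$ are the measured image data; hence $\ttau$ is determined.

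Finally, having $\ttau$, I would recover $\tau$ and $\dot K$ by projecting onto the Frenet frame of $\Gamma$. Since $\N$ and $\B$ are already known — $\N$ (together with $K$) from Theorem~\ref{th:k:rec}, and $\B=\T\times\N$ — the decomposition $\ttau=\dot K\N+K\tau\B$ gives $\dot K=\ttau^\top\N$ and $K\tau=\ttau^\top\B$, whence $\tau=\ttau^\top\B/K$ provided $K\neq0$ (i.e.\ the space point is not an inflection, where torsion is in any case undefined). This yields Equations~\eqref{eq:Kprime:from:kprime}--\eqref{eq:tau:from:kprime} and completes the proof.

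I expect the only real subtlety — the ``hard part'' in an otherwise mechanical argument — to be the careful bookkeeping in a single coordinate basis. Throughout Section~\ref{sec:reconstr} all vectors are expressed in the common world basis, including the $\gama_i$ and their associated $\t_i$, so I must make sure the single-view identities from Section~\ref{sec:single:view:projection} (which were derived in each camera's own coordinates, with $T_z=\e_3^\top\T$) are correctly re-expressed with $\e_{3,i}$, and that $\depth_i'=\e_{3,i}^\top\T$ rather than $GT_z$ now that $G\equiv1$. Once the translation-of-notation is handled, equating~\eqref{eq:TNB:tn} view-by-view and dotting with $\gama_i\times\t_i$ and with $\T$ is routine, and the linear-algebra inversion step is immediate from the geometry of $\T$ established earlier.
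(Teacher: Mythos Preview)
Your proposal is correct and follows essentially the same approach as the paper: dot Equation~\eqref{eq:TNB:tn} with $\gama_i\times\t_i$ for each view (this is exactly Equation~\eqref{eq:kdot:first:derivation} applied twice), then append the orthogonality constraint $\T^\top\ttau=0$. Your write-up is in fact more thorough than the paper's, which is quite terse; your explicit discussion of the non-degeneracy needed for invertibility of the $3\times3$ system and the caveat $K\neq0$ for extracting $\tau$ are worthwhile additions that the paper omits.
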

\begin{proof}
Apply Equation~\eqref{eq:kdot:first:derivation} for two views, and let
$\ttau := \dot{K}\N + K\tau\B$ to get the first two equations
of~\eqref{eq:torsion:rec:system}. The last equation
of~\eqref{eq:torsion:rec:system} constrains $\ttau$ to be
orthogonal to $\T$.
\end{proof}

\section{Projecting Differential Geometry Under Differential
Motion}\label{sec:projection:differential:motion}

The goal of this section is to relate \emph{differential observations} in a
series of images from a continuous video sequence to the \emph{differential
geometry} of the space curve. As this relationship is governed by the
\emph{differential motion} of the camera and its intrinsic parameters, we also
aim to recover scene geometry and camera motion/pose from these observations and
equations. An account of intrinsic camera calibration in this setting is
left for future work. We explore how differential scene properties are projected
onto differential image properties for points and curves, and expect future work
to apply this to surfaces. 

Differential models of camera motion observing a \emph{rigid scene} were studied
in~\cite{LonguetHiggins:Prazdny:PRSL:1980,Waxman:Ullman:IJRobotics85,
Maybank:book:1992,Ma:Soatto:etal:book,Hengel:PhdThesis:2000,
Triggs:ICCV99,Astrom:Heyden:CVPR96,Heyden:ICPR06,Baumela:etal:ICPR00,Dornaika:Sappa:LNCS06,Kahl:Heyden:ICCV01,Vieville:Faugeras:CVIU96,Hengel:etal:LNCS07,Brodsky:Fermuller:Aloimonos:IJCV00,Brodsky:Fermuller:IJCV02}.
These papers studied how the first and second-order motion of the image of fixed
points relate to a differential camera motion model. They also envisioned
recovering local 3D surface properties from the local behavior of the
velocities of projected surface points in an image neighborhood.
Differential models for \emph{nonrigid curves} observed in a monocular video sequence
were studied
in~\cite{Faugeras:Papadopoulo:IJCV93,Papadopoulo:Faugeras:ECCV96,Papadopoulo:PhD:96,Faugeras:ECCV1990},
where it was established that multiple simultaneous video sequences would be
needed. This led to a practical work in the reconstruction of nonrigid curves from
multiview video~\cite{Carceroni:Kutulakos:CVPR99,Carceroni:PhDThesis01},
exploiting temporal consistency within each video frame, as well as consistency
across the two video sequences.
Differential models of occluding contours were studied mainly
in~\cite{Cipolla:PHD:1991,Cipolla:Blake:IJCV92,Giblin:Motion:Book}, relating the deformation of
apparent contours under differential camera motion to scene properties such as
occluding contours, and a differential-geometric model of the underlying 3D
surface.

\subsection{Differential Relations for a Point}

\begin{theorem}(Moving 3D point)\label{th:derivatives:allcontours}
Let $\Gama^w(t)$ be a \emph{moving} point in space, projected onto a moving camera as
$\gama(t)$ with depth $\depth(t)$. Let the differential velocity and rotation
of the camera be $\VV$ and $\OO$, respectively, and let $\VV_t$ and
$\OO_t$ represent their derivative with respect to time $t$, respectively.
Then, the depth gradient and second derivative at $t=0$ are
\begin{empheq}[left=\empheqlbrace,right=\qquad\text{at $t=0$}]{align}
\label{eq:depth:t:Gamaw}
\depth_t &= \depth\e_3^\top(\skewm\OO\gama  + \frac{1}{\depth}\Gama^w_t + 
\frac{1}{\depth}\VV)\\
\label{eq:depthtt:moving}
\depth_{tt} &= \depth\,\e_3^\top(\skewm\OO^2 + \skewm{[\OO_t]})\gama + 2\e_3^\top\skewm\OO\Gama^w_t + 
\e_3^\top\Gama^w_{tt} +
\e_3^\top\VV_t,
\end{empheq}
and the velocity and acceleration of the projected point at $t=0$ are given by
\begin{empheq}[left=\empheqlbrace,right=\qquad\text{ at $t=0$}]{align}
\label{eq:gama:time:deriv:Gamaw:moving}
\gama_t &= \left(\skewm\OO\gama - (\e_3^\top\skewm\OO\gama)\gama\right) 
+ \frac{1}{\depth}(\Gama^w_t -
\e_3^\top\Gama^w_t\gama) +
\frac{1}{\depth}(\VV - V_z\gama)\\
\label{eq:gama:time:secondderiv:Gamaw:moving}
\gama_{tt} &= (\skewm\OO^2 + \skewm{[\OO_t]})\gama +
\frac{2}{\depth}\skewm\OO\Gama^w_t + \frac{1}{\depth}\Gama^w_{tt} +
\frac{1}{\depth}\VV_t - \frac{2\depth_t}{\depth}\gama_t -
\frac{\depth_{tt}}{\depth}\gama,
\end{empheq}
which can be simplified as
\begin{align}
\gama_{tt} &= (\skewm\OO^2 + \skewm{[\OO_t]})\gama +
\frac{2}{\depth}\skewm\OO\Gama^w_t + \frac{1}{\depth}\Gama^w_{tt} +
\frac{1}{\depth}\VV_t \notag\\
&- 2\e_3^\top\left(\skewm\OO\gama +
\frac{\VV}{\depth} + \frac{\Gama^w_t}{\depth}\right)
\left( 
\frac{\VV}{\depth} - \frac{V_z}{\depth}\gama + \OO_\times\gama -
(\e_3^\top\OO_\times\gama)\,\gama
+ \frac{1}{\depth}\Gama^w_t - \frac{\e_3^\top\Gama^w_t}{\depth}\gama
\right)\notag\\
&- 
\e_3^\top\left( (\skewm\OO^2 + \skewm{[\OO_t]})\gama +
\frac{\VV_t}{\depth} + 2\skewm\OO\frac{\Gama^w_t}{\depth} + 
\frac{\Gama^w_{tt}}{\depth} \right)
\gama.
\end{align}
\end{theorem}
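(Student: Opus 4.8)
The plan is to differentiate the projection equation $\Gama = \depth\gama$ (Equation~\ref{eq:projection:isolated:gamma}) twice with respect to $t$, substituting the 3D point velocity and acceleration from the earlier Proposition on world-vs-camera derivatives (Equation~\ref{eq:3D:point:velocity:allt}) and its time-derivative. Since $\e_3^\top\gama = 1$ for all $t$, differentiating gives $\e_3^\top\gama_t = 0$ and $\e_3^\top\gama_{tt} = 0$; these will be the two algebraic identities that let us solve for $\depth_t$ and $\depth_{tt}$, and then back-substitute to isolate $\gama_t$ and $\gama_{tt}$. Throughout I will specialize to $t=0$ only at the end, where $\rot(0) = \id$, $\transl(0) = 0$, $\bc(0) = 0$, so $\rot\Gama^w_t = \Gama^w_t$, $\rot\Gama^w_{tt} = \Gama^w_{tt}$, $\skewm\OO\transl = 0$, and $\frac{d\rot}{dt}(0) = \skewm\OO(0)$, $\frac{d^2\rot}{dt^2}(0) = \skewm\OO^2(0) + \skewm[\OO_t](0)$ from Equation~\ref{eq:rot:tt:atzero}.

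First I would establish the depth formulas. From $\Gama = \depth\gama$ we get $\Gama_t = \depth_t\gama + \depth\gama_t$; dotting with $\e_3$ and using $\e_3^\top\gama = 1$, $\e_3^\top\gama_t = 0$ gives $\depth_t = \e_3^\top\Gama_t$. Now substitute $\Gama_t = \skewm\OO\Gama + \rot\Gama^w_t + \VV - \skewm\OO\transl$ (the general-$t$ form, with $\transl = -\rot\bc$ folded in), write $\Gama = \depth\gama$, and factor $\depth$ out of the first term to obtain~\eqref{eq:depth:t:Gamaw}. Differentiating $\depth_t = \e_3^\top\Gama_t$ once more gives $\depth_{tt} = \e_3^\top\Gama_{tt}$, and I expand $\Gama_{tt} = \frac{d}{dt}(\skewm\OO\Gama + \rot\Gama^w_t + \VV - \skewm\OO\transl)$, collecting terms and evaluating at $t=0$ using the identities above; here the $\skewm\OO_t\Gama + \skewm\OO\Gama_t$ pieces combine (with $\Gama_t = \skewm\OO\Gama + \Gama^w_t + \VV$ at $t=0$, and $\Gama = \depth\gama$) into the $\depth\,\e_3^\top(\skewm\OO^2 + \skewm[\OO_t])\gama$ and $2\e_3^\top\skewm\OO\Gama^w_t$ terms of~\eqref{eq:depthtt:moving}.

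Next I would get $\gama_t$ by solving $\gama_t = \frac{1}{\depth}(\Gama_t - \depth_t\gama)$: substitute the expressions for $\Gama_t$ and $\depth_t$ just derived, distribute the $\frac{1}{\depth}$, and group the rotation, nonrigid-motion, and translation contributions separately to recognize the three parenthesized blocks of~\eqref{eq:gama:time:deriv:Gamaw:moving} (each of the form "vector minus its $z$-component times $\gama$", which is exactly the projection of a 3D velocity onto the image plane). For the acceleration, differentiate $\Gama = \depth\gama$ twice: $\Gama_{tt} = \depth_{tt}\gama + 2\depth_t\gama_t + \depth\gama_{tt}$, so $\gama_{tt} = \frac{1}{\depth}\Gama_{tt} - \frac{2\depth_t}{\depth}\gama_t - \frac{\depth_{tt}}{\depth}\gama$; plugging in $\Gama_{tt}$ at $t=0$ yields the first three lines of~\eqref{eq:gama:time:secondderiv:Gamaw:moving}, and the final "simplified" form is obtained by further substituting the closed expressions for $\depth_t/\depth$, $\gama_t$, and $\depth_{tt}/\depth$ into those last two correction terms and expanding.

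The main obstacle is purely the bookkeeping in the $\gama_{tt}$ computation: the product $\frac{2\depth_t}{\depth}\gama_t$ multiplies two already-multi-term expressions, and one must be careful that the $\skewm\OO\gama$ terms appearing inside $\depth_t/\depth$ are written with the camera-frame $\Gama = \depth\gama$ substitution done consistently (so the $\depth$'s cancel where claimed), and that $\frac{d}{dt}\skewm\OO = \skewm[\OO_t]$ together with $\skewm\OO\cdot\skewm\OO = \skewm\OO^2$ are applied in the right order when differentiating $\rot$ twice. No conceptual difficulty arises — each step is a differentiation of $\Gama = \depth\gama$ combined with the kinematic identity for $\Gama_t$ — so the proof is essentially a careful expansion, and I would present it as such, displaying the intermediate forms for $\Gama_t$, $\Gama_{tt}$, $\depth_t$, $\depth_{tt}$ before reading off the four displayed equations.
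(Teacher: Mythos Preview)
Your proposal is correct and follows essentially the same route as the paper: differentiate $\Gama=\depth\gama$ once and twice, equate with the kinematic expression for $\Gama_t$ (and its time-derivative) coming from $\Gama=\rot\Gama^w+\transl$, dot with $\e_3$ using $\e_3^\top\gama=1$, $\e_3^\top\gama_t=0$ to isolate $\depth_t$ and $\depth_{tt}$, and back-substitute to read off $\gama_t$ and $\gama_{tt}$. The only cosmetic difference is that you phrase the depth step as $\depth_t=\e_3^\top\Gama_t$ and $\depth_{tt}=\e_3^\top\Gama_{tt}$ directly (from $\depth=\e_3^\top\Gama$), whereas the paper dots the already-equated combined equation with $\e_3$; these are the same computation.
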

\begin{proof}
The image velocity $\gama_t$ is dependent on the velocity $\Gama_t$
of the 3D structure in camera coordinates, which arises from both the motion of
$\Gama^w$ and from the moving camera. Differentiating
$\Gama = \rot\Gama^w + \transl$, we get
\begin{equation}
\Gama_t = \rot_t\Gama^w + \rot\Gama^w_t + \transl_t
= \skewm\OO\rot\Gama^w + \rot\Gama^w_t +
\VV.\label{eq:Gama:Gamaw:time:deriv}
\end{equation}
Differentiating $\Gama = \depth\gama$ we get
\begin{equation}
 \Gama_t = \depth \gama_t + \depth_t \gama.\label{eq:Gama:gama:time:deriv}
\end{equation}
Equating these two expressions leads to
\begin{equation}
\depth \gama_t + \depth_t \gama = \skewm\OO\rot\Gama^w + \rot\Gama^w_t +
\VV\ \ \ \ \ \text{ for arbitrary
$t$.}\label{eq:gama:time:deriv:Gamaw:3d:2d:separated:anyt}
\end{equation}
At $t=0$ we have $\Gama^w = \Gama = \depth\gama$, leading to
\begin{equation}\label{eq:gama:time:deriv:Gamaw:3d:2d:separated}
\depth \gama_t + \depth_t \gama = \depth\skewm\OO\gama + \Gama^w_t +
\VV
\ \ \ \ \ \text{
for $t = 0$.}
\end{equation}

The depth gradient $\depth_t$ is then isolated by taking the dot product of both sides
of Equation~\ref{eq:gama:time:deriv:Gamaw:3d:2d:separated} with $\e_3$, observing that
$\e_3^\top\gama = 1$ and $\e_3^\top\gama_t = 0$, resulting in
Equation~\ref{eq:depth:t:Gamaw}. The expression for $\depth_t$ is then
substituted into Equation~\ref{eq:gama:time:deriv:Gamaw:3d:2d:separated} from
which $\gama_t$ can be isolated in the form of Equation~\ref{eq:gama:time:deriv:Gamaw:moving}.

The second order expressions $\gama_{tt}$ and $\depth_{tt}$ require
another time derivative
of Equation~\ref{eq:gama:time:deriv:Gamaw:3d:2d:separated:anyt},
\begin{equation}
\depth\gama_{tt} + 2\depth_t\gama_t +\depth_{tt}\gama = 
(\skewm\OO^2\rot + \skewm{[\OO_t]}\rot)\Gama^w
+ \skewm\OO\rot\Gama^w_t + \rot_t\Gama_t^w + \rot\Gama_{tt}^w + \VV_t.
\end{equation}
Setting $t=0$ we have
\begin{equation}
 \depth\gama_{tt} + 2\depth_t\gama_t +\depth_{tt}\gama = 
(\skewm\OO^2 + \skewm{[\OO_t]})\depth\gama + 2\skewm\OO\Gama_t^w +
\Gama_{tt}^w + \VV_t.
\end{equation}

Now the expression for $\depth_{tt}$ in the theorem can be obtained by dotting with $\e_3$,
giving Equation~\ref{eq:depthtt:moving}. Isolating $\gama_{tt}$ we have
\begin{equation}\label{eq:gamatt:moving:compact}
\gama_{tt} = (\skewm\OO^2 + \skewm{[\OO_t]})\gama +
\frac{1}{\depth}(2\skewm\OO\Gama_t^w + \Gama_{tt}^w +
\VV_t - 2\depth_t\gama_t - \depth_{tt}\gama).
\end{equation}
Substituting Equations~\ref{eq:depthtt:moving}
and~\ref{eq:gama:time:deriv:Gamaw:moving} into the above, we obtain the final
expression for $\gama_{tt}$.
\end{proof}

\noindent\textbf{The Special Case of Fixed Points.}
The question of how the image of a fixed point moves as the camera moves was
studied by Longuet-Higgins and Prazdny~\cite{LonguetHiggins:Prazdny:PRSL:1980}
and later by Waxman and Ullman~\cite{Waxman:Ullman:IJRobotics85},
giving the velocity $\gama_t$ for a fixed point. This calculation also
leads to the well-known epipolar constraint, the notion of
Essential
matrix~\cite{LonguetHiggins:Essential:Nature:1980}, and 
the \emph{continuous epipolar
constraint}~\cite{Zhuang:Haralick:CAIA1984,Maybank:book:1992,Kanatani:IJCV1993,
Vieville:Faugeras:ICCV1995,Tian:etal:CVPR1996,Brooks:etal:JOSA1997,Astrom:Heyden:IJCV1998,Ponce:Genc:IJCV2998,Ma:etal:ECCV98,Ma:Soatto:etal:book,Stewenius:etal:CVPR2007,Lin:etal:IJCV2009,Valgaerts:etal:IJCV2012,Schneevoigt:etal:PR2014}.
Theorem~\ref{th:derivatives:allcontours} in the special case of a fixed point
gives interesting geometric insight into these classical results. Specifically,
setting $\Gama^w_t = 0$ in first-order computations of
Equations~\ref{eq:depth:t:Gamaw}
and~\ref{eq:gama:time:deriv:Gamaw:moving} results in
\begin{empheq}[left=\empheqlbrace,right=\qquad\text{ at $t=0$}]{align}
\label{eq:fixed:point:flow:depth}
\frac{\depth_t}{\depth} &= \e_3^\top(\skewm\OO\gama + \frac{\VV}{\depth})\\
\label{eq:fixed:point:flow:vectorial}
\gama_t &=  \OO_\times\gama -
(\e_3^\top\OO_\times\gama)\,\gama + \frac{\VV}{\depth} -
\frac{V_z}{\depth}\gama.
\end{empheq}

\vspace{1em}\noindent\textbf{Essential constraint.}
To derive the differential epipolar constraint,
eliminate $\depth$ from Equation~\ref{eq:fixed:point:flow:vectorial}
by first writing out the expression in terms of $\uu$, $\vv$, $u$, and $v$, where $\gama =
[\uu,\,\vv,\,1]^\top$, and $\gama_t = [u,\,v,\,0]$, and use $\e_3^\top\skewm\OO\gama =
-\Omega_y\uu + \Omega_x\vv$, giving
\begin{empheq}[left={\empheqlbrace}]{align}
\label{eq:lambda:from:udot}
u - \Omega_y\uu^2 + \Omega_x\uu\vv + \Omega_z\vv
-\Omega_y
= \frac{1}{\depth}(V_x - V_z\uu\nonumber)\\
 v +\Omega_x\vv^2 - \Omega_y\uu\vv - \Omega_z \uu +
\Omega_x
= \frac{1}{\depth}(V_y - V_z\vv)
\end{empheq}
and then eliminate $\depth$, giving
\begin{equation}\notag
\left(  u - \Omega_y\uu^2 + \Omega_x\uu\vv + \Omega_z\vv
-\Omega_y\right)
\left(V_y - \vv V_z\right) =
\left(  v +\Omega_x\vv^2 - \Omega_y\uu\vv - \Omega_z \uu +
\Omega_x \right)
\left(V_x - \uu V_z\right),
\end{equation}
which is the epipolar constraint for differential
motion. A more direct way of deriving the
epipolar constraint equation is to eliminate $\depth$ in
Equation~\ref{eq:gama:time:deriv:Gamaw:3d:2d:separated} with $\Gama^w_t = 0$ by
taking the dot-product with the vector $\VV\times\gama = \skewm\VV\gama$,
where $\skewm\VV$ is a skew-symmetric arrangement of $\VV$,%
\nomenclature[06100]{$\skewm{\VV}$}{(matrix, $m/s$)
Entries of $\VV$ arranged into a
skew-symmetric matrix such that $\skewm \VV \mathbf v = \VV\times\mathbf
v$ for any vector $\mathbf v$\nomrefpage}
and using $\skewm\OO^\top = -\skewm\OO$ gives
\begin{align}
\depth\gama_t^\top\skewm\VV\gama &=
-\depth\gama^\top\skewm\OO\skewm\VV\gama\nonumber,\\
\shortintertext{resulting in the \emph{differential epipolar constraint}}
\gama_t^\top\skewm\VV\gama &+
\gama^\top\skewm\OO\skewm\VV\gama =
0.
\end{align}
In comparison, the widely-known essential
constraint for relating two views is given by
\begin{equation}
\gama_2^\top\skewm\transl \rot\gama_1 = 0,
\end{equation}
where $\skewm\transl \rot$, the essential matrix, 
combines the effects of translation and rotation to relate
two points $\gama_1$ and $\gama_2$.
In the differential case, the two matrices $\skewm \VV$ and
$\skewm\OO\skewm\VV$ play a similar role to $\skewm\transl \rot$ in the discrete
motion case to relate a point and its velocity.

\begin{remark}
Observe from Equation~\ref{eq:fixed:point:flow:vectorial} that $\gama_t$ can also be written as the sum of two
components, one depending on $\VV$, and the other on $\OO$, \ie,
\begin{equation}\label{eq:star1square1}
\gama_t = \frac{1}{\depth}A(\gama)\VV + B(\gama)\OO,
\ \ \text{where}\ \ 
A(\gama) = 
\begin{bmatrix}
1 & 0 & -\uu\\
0 & 1 & -\vv\\
0 & 0 & 0
\end{bmatrix} \text{ and }
B(\gama) = 
\begin{bmatrix}
-\uu\vv & 1+\uu^2 & -\vv\\
-(1+\vv) & \uu\vv & \uu\\
0 & 0 & 0
\end{bmatrix}.
\end{equation}
That $\gama_t$ depends linearly on $\VV$
and $\OO$ (since $A$ and $B$ are only dependent on the position $\gama$),
with $\depth$ left in the equation, is the
basis of subspace methods in structure from motion~\cite{Jepson:Heeger:IJCV92}.
Observations of image velocities $\gama_{t,1},\gama_{t,2},\dots,\gama_{t,N}$ at
points $\gama_1, \gama_2, \dots, \gama_N$ 
provides $2N$ \emph{linear} equations in $\VV$ and $\OO$, given $\depth_1,\dots,\depth_N$.
\end{remark}
\subsection{Differential Relations for a Curve}

\begin{theorem} \label{th:deforming:curve:diff} (Deforming 3D curve) Consider a deforming 3D curve $\Gama(s,t)$
projecting to a family of 2D curves $\gama(s,t)$ with depth $\depth(s,t)$,
arising from camera motion with differential velocities of translation and
rotation $\VV$ and $\OO$, respectively, and let $\VV_t$ and $\OO_t$ be
their respective derivatives in time.
Then, the image velocity $\gama_t$ is determined from
$\left\{\VV,\, \OO,\, \frac{\VV}{\depth}, \,\frac{\Gama^w_t}{\depth},\,
\t \right\}$,
\begin{empheq}[left=\text{$\gama_t = \alpha\t + \beta \n$, where \ }\empheqlbrace]{align}
  \alpha &=- \OO\cdot\gama\times(\gama\times\n)
  -\left(\frac{\VV}{\depth} -\skewm\OO\frac{\transl}{\depth}+
   \rot\frac{\Gama^w_t}{\depth}\right)\cdot\gama\times\n ,\label{eq:tangential:velocity:deforming:anyt}\\
  \beta  &= \OO\cdot\gama\times(\gama\times\t) +
  \left(\frac{\VV}{\depth}  -\skewm\OO\frac{\transl}{\depth} + 
  \rot\frac{\Gama^w_t}{\depth}\right)\cdot\gama\times\t.\label{eq:normal:velocity:deforming:anyt}
\end{empheq}
%
\end{theorem}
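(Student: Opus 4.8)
The plan is to reuse the differentiate-then-project technique from the proof of Theorem~\ref{th:derivatives:allcontours}, but without specializing to $t=0$, so that the instantaneous camera pose $\rot,\transl$ survives in the final formulas. The starting point is the moving-point identity established there,
\[
\depth\gama_t + \depth_t\gama = \skewm\OO\rot\Gama^w + \rot\Gama^w_t + \VV ,
\]
namely Equation~\eqref{eq:gama:time:deriv:Gamaw:3d:2d:separated:anyt}, which holds at arbitrary $t$ and applies pointwise along the deforming curve because the spatial parameter $s$ is held fixed while differentiating in $t$. Combining the rigid-motion relation $\Gama = \rot\Gama^w + \transl$ with the projection $\Gama = \depth\gama$ gives $\rot\Gama^w = \depth\gama - \transl$; substituting this and dividing by $\depth$ yields
\[
\gama_t + \frac{\depth_t}{\depth}\,\gama = \skewm\OO\gama - \skewm\OO\frac{\transl}{\depth} + \rot\frac{\Gama^w_t}{\depth} + \frac{\VV}{\depth} .
\]

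Next I would observe that $\gama = [\uu,\vv,1]^\top$ has constant third coordinate, so $\gama_t$ has zero third coordinate and hence lies in the image plane spanned by $\t$ and $\n$; write $\gama_t = \alpha\t + \beta\n$. The decisive point is that $\gama\times\t$ and $\gama\times\n$ are both orthogonal to $\gama$, so dotting the last display with either of them annihilates the $\frac{\depth_t}{\depth}\gama$ term outright and $\depth_t$ need never be computed. Dotting with $\gama\times\t$ and using $\t\cdot(\gama\times\t)=0$ together with $\n\cdot(\gama\times\t)=\gama\cdot(\t\times\n)=\gama\cdot\e_3=1$ isolates $\beta$; dotting with $\gama\times\n$ and using $\n\cdot(\gama\times\n)=0$ together with $\t\cdot(\gama\times\n)=-1$ isolates $\alpha$ (these are exactly the scalar-triple-product relations already used in the proof of Theorem~\ref{thm:curvature:3d:2d}). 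On the right-hand side the rotational contribution is rewritten with $\skewm\OO\gama = \OO\times\gama$ and the identity $(\OO\times\gama)\cdot(\gama\times\mathbf{w}) = \OO\cdot(\gama\times(\gama\times\mathbf{w}))$, which produces exactly the terms $\OO\cdot\gama\times(\gama\times\t)$ in $\beta$ and $-\OO\cdot\gama\times(\gama\times\n)$ in $\alpha$, while the translational and deformation terms pass through unchanged as $(\frac{\VV}{\depth} - \skewm\OO\frac{\transl}{\depth} + \rot\frac{\Gama^w_t}{\depth})\cdot\gama\times\t$ and its $\n$-analogue.

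That is essentially the whole argument; no Taylor model of the camera and no expansion of $\depth_t$ are needed. I expect the only genuine effort to be bookkeeping: keeping the signs of $\t\times\n$ versus $\n\times\t$ and the orientations of the triple products consistent with the conventions fixed earlier in the paper, and deliberately not setting $t=0$, which is precisely what keeps $\rot$ and $\transl$ explicit in $\alpha$ and $\beta$. The one step worth a sentence of justification in the writeup is that no residual $\depth_t$-dependence survives the two projections, and that is immediate from $\gama\cdot(\gama\times\t)=\gama\cdot(\gama\times\n)=0$.
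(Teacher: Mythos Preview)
Your proposal is correct and follows essentially the same route as the paper's own proof: both derive the same master identity $\depth_t\gama + \depth\gama_t = \depth\skewm\OO\gama + \VV - \skewm\OO\transl + \rot\Gama^w_t$, then dot with $\gama\times\t$ and $\gama\times\n$ to kill the $\depth_t\gama$ term and isolate $\beta$ and $\alpha$, and finally rewrite the rotational piece via the scalar triple product. The only cosmetic difference is that you start from Equation~\eqref{eq:gama:time:deriv:Gamaw:3d:2d:separated:anyt} and substitute $\rot\Gama^w=\depth\gama-\transl$, whereas the paper starts from Equation~\eqref{eq:3D:point:velocity:allt} and uses $-\rot\bc_t=\VV-\skewm\OO\transl$; these are the same manipulation.
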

\begin{proof}
From Equation~\ref{eq:3D:point:velocity:allt} and using $-\rot\bc_t = \VV -
\skewm\OO\transl$ from Equation~\ref{eq:vv:def},
\begin{equation}
\Gama_t = \skewm\OO\Gama + \VV - \skewm\OO\transl + \rot\Gama^w_t.
\end{equation}
Using $\Gama = \depth\gama$ and $\Gama_t = \depth_t\gama +
\depth\gama_t$, 
\begin{equation}\label{eq:proof:alpha:beta:basic:eq}
\depth_t\gama + \depth\gama_t = \depth\skewm\OO\gama + \VV -
\skewm\OO\transl + \rot\Gama^w_t.
\end{equation}
Taking the dot product with $\gama\times\n$ and $\gama\times\t$,
\begin{equation}
\left\{
\begin{aligned}
\depth\gama_t\cdot(\gama\times\n) &=
\depth(\skewm\OO\gama)\cdot(\gama\times\n)
+ (\VV - \skewm\OO\transl)\cdot(\gama\times\n) +
\rot\Gama^w_t\cdot(\gama\times\n),\\
\depth\gama_t\cdot(\gama\times\t) &=
\depth(\skewm\OO\gama)\cdot(\gama\times\t)
+ (\VV - \skewm\OO\transl)\cdot(\gama\times\t)
+ \rot\Gama^w_t\cdot(\gama\times\t).
\end{aligned}\right.
\end{equation}
Now, 
\begin{equation}
\left\{
\begin{aligned}
\gama_t\cdot(\gama\times\n) &= (\alpha\t+\beta\n)\cdot(\gama\times\n) =
\alpha\t\cdot(\gama\times\n) = \alpha\n\times\t\cdot\gama = -\alpha\e_3^\top\gama =
-\alpha\\
\gama_t\cdot(\gama\times\t) &= (\alpha\t+\beta\n)\cdot(\gama\times\t) =
\beta\n\cdot(\gama\times\t) = \beta\t\times\n\cdot\gama = \beta\e_3^\top\gama =
\beta.
\end{aligned}\right.
\end{equation}
So that we can write
\begin{equation}
\left\{\begin{aligned}
\alpha &= -(\skewm\OO\gama)\cdot(\gama\times\n) -\left(
\frac{\VV}{\depth} - \skewm\OO\frac{\transl}{\depth}
+ \rot\frac{\Gama_t^w}{\depth}\right)\cdot(\gama\times\n) \\
\beta &= (\skewm\OO\gama)\cdot(\gama\times\t) +\left(
\frac{\VV}{\depth} - \skewm\OO\frac{\transl}{\depth}
+ \rot\frac{\Gama_t^w}{\depth}
\right)\cdot(\gama\times\t).
\end{aligned}\right.
\end{equation}
Since we can switch the cross and
dot products in a triple scalar product, $\OO\times\gama\cdot (\gama\times\n) = 
\OO\cdot\gama\times(\gama\times\n)$ and $\OO\times\gama\cdot (\gama\times\t) = 
\OO\cdot\gama\times(\gama\times\t)$, giving the final result.

\end{proof}
\begin{corollary}
The spatial variation of the velocity vector field $\gama_t$ along the curve and
in time can be written as
\begin{align}\label{eq:spatio:motion:derivative:deforming}
\gama_{st} &= \left(-\VV + V_z\gama\right)\frac{\depth_s}{\depth^2}
- \frac{V_z}{\depth}\gama_s + \skewm\OO\gama_s -
(\e_3^\top\skewm\OO\gama_s)\gama - (\e_3^\top\skewm\OO\gama)\gama_s\notag\\
& \frac{1}{\depth}( \Gama^w_{st} - \e_3^\top\Gama^w_{st}\gama - \e_3^\top\Gama^w_t\gama_s
) - \frac{1}{\depth^2}(\Gama^w_t - \e_3^\top\Gama^w_t\gama)\depth_s,
\end{align}
and the time acceleration $\gama_{tt}$ is defined by
%
\begin{equation}\label{eq:gama:time:secondderiv:Gamaw:moving:frenet}
\left\{\begin{aligned}
\t^\top\gama_{tt} &= \t^\top(\skewm\OO^2 + \skewm{[\OO_t]})\gama +
\frac{2}{\depth}\t^\top\skewm\OO\Gama^w_t + \frac{1}{\depth}\t^\top\Gama^w_{tt} +
\frac{1}{\depth}\t^\top\VV_t \\
&- 2\e_3^\top\left(\skewm\OO\gama +
\frac{\VV}{\depth} + \frac{\Gama^w_t}{\depth}\right)
\,\alpha
- 
\e_3^\top\left( (\skewm\OO^2 + \skewm{[\OO_t]})\gama +
\frac{\VV_t}{\depth} + 2\skewm\OO\frac{\Gama^w_t}{\depth} + 
\frac{\Gama^w_{tt}}{\depth} \right)
\t^\top\gama,\\
\n^\top\gama_{tt} &= \n^\top(\skewm\OO^2 + \skewm{[\OO_t]})\gama +
\frac{2}{\depth}\n^\top\skewm\OO\Gama^w_t + \frac{1}{\depth}\n^\top\Gama^w_{tt} +
\frac{1}{\depth}\n^\top\VV_t \\
&- 2\e_3^\top\left(\skewm\OO\gama +
\frac{\VV}{\depth} + \frac{\Gama^w_t}{\depth}\right)
\,\beta
- 
\e_3^\top\left( (\skewm\OO^2 + \skewm{[\OO_t]})\gama +
\frac{\VV_t}{\depth} + 2\skewm\OO\frac{\Gama^w_t}{\depth} + 
\frac{\Gama^w_{tt}}{\depth} \right)
\n^\top\gama.
\end{aligned}\right.
\end{equation}
\end{corollary}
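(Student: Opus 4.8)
The plan is to derive both displays by differentiating identities already in hand: the closed forms for $\gama_t$ and $\gama_{tt}$ of a moving 3D point at $t=0$ from Theorem~\ref{th:derivatives:allcontours}, together with the Frenet decomposition $\gama_t=\alpha\t+\beta\n$ of Theorem~\ref{th:deforming:curve:diff}. Throughout I use the standing normalization $\rot(0)=\id$, $\transl(0)=0$, so that the $\rot\,\Gama^w_t$ and $\skewm\OO\transl$ factors appearing in Theorem~\ref{th:deforming:curve:diff} reduce to $\Gama^w_t$ and $0$ respectively before anything is differentiated.

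For the spatial derivative $\gama_{st}$, I would start from the $t=0$ velocity of a moving point, Equation~\ref{eq:gama:time:deriv:Gamaw:moving},
\[
\gama_t=\bigl(\skewm\OO\gama-(\e_3^\top\skewm\OO\gama)\gama\bigr)+\frac{1}{\depth}\bigl(\Gama^w_t-\e_3^\top\Gama^w_t\gama\bigr)+\frac{1}{\depth}\bigl(\VV-V_z\gama\bigr),
\]
which holds pointwise for every $s$ along the curve family (equivalently $\gama_{st}=\partial_s\gama_t$ by equality of mixed partials). Here $\skewm\OO$, $\VV$, $V_z$ are global camera data independent of $s$, while $\gama=\gama(s)$, $\depth=\depth(s)$, $\Gama^w_t=\Gama^w_t(s)$ vary along the curve; differentiating in $s$ with the product rule on the $\gama$-factors and $\partial_s(1/\depth)=-\depth_s/\depth^2$ on the depth factors, the rotational term contributes $\skewm\OO\gama_s-(\e_3^\top\skewm\OO\gama_s)\gama-(\e_3^\top\skewm\OO\gama)\gama_s$, the local-deformation term contributes $\frac1\depth(\Gama^w_{st}-\e_3^\top\Gama^w_{st}\gama-\e_3^\top\Gama^w_t\gama_s)-\frac{\depth_s}{\depth^2}(\Gama^w_t-\e_3^\top\Gama^w_t\gama)$, and the translational term contributes $-\frac{V_z}{\depth}\gama_s+\frac{\depth_s}{\depth^2}(-\VV+V_z\gama)$. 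Regrouping these contributions is exactly Equation~\ref{eq:spatio:motion:derivative:deforming}.

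For the second display I would start from the simplified form of $\gama_{tt}$ in Theorem~\ref{th:derivatives:allcontours}. The crucial observation is that the long parenthetical factor multiplying $-2\e_3^\top(\skewm\OO\gama+\VV/\depth+\Gama^w_t/\depth)$ is, term by term, the right-hand side of Equation~\ref{eq:gama:time:deriv:Gamaw:moving}, i.e.\ it equals $\gama_t$; by Theorem~\ref{th:deforming:curve:diff} this is $\alpha\t+\beta\n$. Substituting and then taking the dot product of the whole identity with $\t$ and with $\n$, using orthonormality $\t^\top\t=\n^\top\n=1$, $\t^\top\n=0$ so that $\t^\top\gama_t=\alpha$ and $\n^\top\gama_t=\beta$, and projecting each remaining term (in particular the last term $-\e_3^\top(\cdots)\gama$ onto $\t^\top\gama$ resp.\ $\n^\top\gama$), produces precisely the two scalar equations of~\eqref{eq:gama:time:secondderiv:Gamaw:moving:frenet}.

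I expect the only real difficulty to be bookkeeping in the first display: there are three separate $\depth_s/\depth^2$ contributions and several product-rule terms whose signs must be tracked, plus the need to carry out the $t=0$ reductions consistently so that the $\rot$- and $\transl$-dependent pieces of Theorem~\ref{th:deforming:curve:diff} collapse correctly before differentiation. The second display is essentially immediate once the embedded copy of $\gama_t$ is recognized; the one point of care is to leave $\alpha$ and $\beta$ in the compact symbolic form supplied by Theorem~\ref{th:deforming:curve:diff} rather than expanding them.
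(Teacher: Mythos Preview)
Your proposal is correct and follows essentially the same approach as the paper: the paper also obtains \eqref{eq:spatio:motion:derivative:deforming} by differentiating the $t=0$ velocity expression \eqref{eq:gama:time:deriv:Gamaw:moving} in $s$, and obtains \eqref{eq:gama:time:secondderiv:Gamaw:moving:frenet} by dotting the $\gama_{tt}$ formula with $\t$ and $\n$ using $\gama_t\cdot\t=\alpha$, $\gama_t\cdot\n=\beta$ and substituting the expressions \eqref{eq:depth:t:Gamaw}--\eqref{eq:depthtt:moving} for $\depth_t$, $\depth_{tt}$. The only cosmetic difference is that the paper starts from the compact form \eqref{eq:gama:time:secondderiv:Gamaw:moving} and substitutes for $\depth_t,\depth_{tt}$ after dotting, whereas you start from the already-simplified display and recognize the embedded copy of $\gama_t$ before dotting; these are the same computation in a different order.
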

\begin{proof}
The $\gama_{st}$ expression
in~\eqref{eq:spatio:motion:derivative:deforming} is derived by differentiating $\gama_t$
with respect to $s$ in Equation~\ref{eq:gama:time:deriv:Gamaw:moving}. 
Notice that $\gama_t$ in the moving case decomposes into the same terms
as for the fixed case, Equation~\ref{eq:fixed:point:flow:vectorial},  plus
terms dependent on $\Gama^w_t$ given by
$\frac{1}{\depth} \left( \Gama^w_t - \e_3^\top\Gama^w_t\gama \right)$.
Differentiating with respect to $s$ then gives a term
equal to $\gama_{st}$ for the fixed case plus terms dependent on $\Gama^w_t$ and
its spatial derivative, the latter being obtained by differentiating the above
expression with respect to $s$.

The expressions of $\gama_{tt}$ in the Frenet frame were obtained by taking the
dot product of~\eqref{eq:gama:time:secondderiv:Gamaw:moving} with $\t$ and $\n$, noting that $\gama_t\cdot\t =
\alpha$ and $\gama_t\cdot\n = \beta$. We then plug in 
expressions~\eqref{eq:depth:t:Gamaw} and~\eqref{eq:depthtt:moving} for
$\depth_t$ and $\depth_{tt}$, respectively.
\end{proof}

\noindent\textbf{Special Case: Rigid Stationary Curve.}
\begin{corollary}(Rigid stationary 3D curve) Let $\Gama(\tilde s)$ be a 3D curve
projecting to a family of 2D curves $\gama(\tilde s,t)$ with depth
$\depth(\tilde s,t)$,
arising from camera motion with differential velocity of translation and
rotation $\VV$ and $\OO$, respectively. Let $\t$ denote the unit
tangent to the image curve. Then
\begin{equation}
\gama_{\tilde st} = \frac{-\depth_{\tilde s}}{\depth}\left( \frac{\VV}{\depth} -
\frac{V_z}{\depth}\gama \right) - \frac{V_z}{\depth}\t + \skewm\OO\t -
(\e_3^\top\skewm\OO\t)\gama - (\e_3^\top\skewm\OO\gama)\t.
\end{equation}
\end{corollary}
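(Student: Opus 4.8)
The plan is to read off this corollary as the rigid, stationary specialization of the deforming-curve corollary, Equation~\eqref{eq:spatio:motion:derivative:deforming}. A rigid stationary curve has $\Gama^w_t = 0$, hence also $\Gama^w_{st} = 0$, so every term in~\eqref{eq:spatio:motion:derivative:deforming} carrying a factor of $\Gama^w_t$ or $\Gama^w_{st}$ vanishes; writing the image curve in its own arc length $\tilde s$ makes $g \equiv 1$ and $\gama_{\tilde s} = \t$ by~\eqref{eq:frenet2D:explicit}, and $\depth_s = \depth_{\tilde s}$. What remains is $(-\VV + V_z\gama)\,\depth_{\tilde s}/\depth^2 - (V_z/\depth)\t + \skewm\OO\t - (\e_3^\top\skewm\OO\t)\gama - (\e_3^\top\skewm\OO\gama)\t$, and grouping $(-\VV + V_z\gama)\,\depth_{\tilde s}/\depth^2 = -(\depth_{\tilde s}/\depth)\bigl(\VV/\depth - (V_z/\depth)\gama\bigr)$ reproduces the stated formula. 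So once~\eqref{eq:spatio:motion:derivative:deforming} is available the corollary is essentially bookkeeping.

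For a self-contained derivation I would instead differentiate the fixed-point image-velocity formula~\eqref{eq:fixed:point:flow:vectorial} directly along the curve, which is legitimate because a rigid stationary 3D curve is, pointwise, a fixed point, so $\gama_t = \skewm\OO\gama - (\e_3^\top\skewm\OO\gama)\gama + \VV/\depth - (V_z/\depth)\gama$. Differentiating with respect to the image arc length $\tilde s$ and using $\gama_{\tilde s} = \t$ and $\partial_{\tilde s}(1/\depth) = -\depth_{\tilde s}/\depth^2$, the four groups differentiate term by term: $\partial_{\tilde s}(\skewm\OO\gama) = \skewm\OO\t$; $\partial_{\tilde s}\bigl((\e_3^\top\skewm\OO\gama)\gama\bigr) = (\e_3^\top\skewm\OO\t)\gama + (\e_3^\top\skewm\OO\gama)\t$; $\partial_{\tilde s}(\VV/\depth) = -\VV\,\depth_{\tilde s}/\depth^2$; and $\partial_{\tilde s}\bigl((V_z/\depth)\gama\bigr) = -V_z\,\depth_{\tilde s}\gama/\depth^2 + (V_z/\depth)\t$. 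Summing with the signs of~\eqref{eq:fixed:point:flow:vectorial} gives the claimed expression, and this also serves as a sanity check on the specialization above.

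The only delicate point, and the one I would be most careful about, is what is held fixed. Here $\tilde s$ must be understood as the arc length of the \emph{image} curve (equivalently, a material parameter along the fixed space curve that is then normalized), so that $\gama_{\tilde s} = \t$; and the camera-motion quantities $\VV(t)$ and $\skewm\OO(t)$ depend only on time, so they commute with $\partial_{\tilde s}$ and only the $\gama$- and $\depth$-dependence gets differentiated. One should note that $\depth = \depth(\tilde s,t)$ genuinely varies along the curve, so $\depth_{\tilde s} \neq 0$ in general --- this is precisely what produces the first group of terms --- and that, had $\tilde s$ been the space-curve arc length with $g \neq 1$, the general formula~\eqref{eq:spatio:motion:derivative:deforming} would supply the extra $g$ and $g_{\tilde s}$ factors that here collapse. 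Beyond these observations the computation is just the product rule.
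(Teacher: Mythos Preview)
Your proposal is correct and your first paragraph matches the paper's proof exactly: set $\Gama^w_t = 0$ in~\eqref{eq:spatio:motion:derivative:deforming} and take the spatial parameter to be image arc length so that $\gama_{\tilde s} = \t$. Your alternative self-contained derivation, differentiating~\eqref{eq:fixed:point:flow:vectorial} along the curve, is also valid and is in fact precisely how the paper proves the closely related Theorem~\ref{th:spatio:motion:derivative:fixed}; so both routes you give are already present in the paper, just attached to two different (equivalent, for rigid stationary curves) statements.
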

\begin{proof}
Follows by setting $\Gama^w_t = 0$ in
Equation~\ref{eq:spatio:motion:derivative:deforming} and using the
spatial parameter as the arc-length of the image curve.
\end{proof}

\begin{corollary}\label{thm:tangential:normal:velocity:rigid} The tangential and
normal velocities of a rigid curve induced by a moving camera are derived from 
$\{\gama,\,\t,\,\n,\, \frac{\transl}{\depth},\, \OO,\, \frac{\VV}{\depth}\}$
for any $t$ as
\begin{empheq}[left=\empheqlbrace,right=\text{\qquad for any $t$,}]{align}
  \alpha &= -\OO\cdot\gama\times(\gama\times\n)
  -\left(\frac{\VV}{\depth} -
  \skewm\OO\frac{\transl}{\depth}\right)\cdot\gama\times\n\label{eq:tangent:velocity:anyt}\\
  \beta &= \OO\cdot\gama\times(\gama\times\t) +
  \left(\frac{\VV}{\depth} -
  \skewm\OO\frac{\transl}{\depth}\right)\cdot\gama\times\t\label{eq:normal:velocity:anyt}
\end{empheq}
or
\begin{empheq}[left=\empheqlbrace,right=\text{\qquad for $t=0$.}]{align}
  \alpha &=- \OO\cdot\gama\times(\gama\times\n)
  -\gama\times\n\cdot\frac{\VV}{\depth}\label{eq:tangential:velocity}\\
  \beta  &= \OO\cdot\gama\times(\gama\times\t) +
  \gama\times\t\cdot\frac{\VV}{\depth}\label{eq:normal:velocity}
\end{empheq}
\end{corollary}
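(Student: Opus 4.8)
The plan is to obtain this corollary directly as a specialization of Theorem~\ref{th:deforming:curve:diff}, with no need to return to the projection equation. A rigid stationary space curve is, by the definition in Section~\ref{sec:apparent:contour:basics}, precisely one for which $\Gama^w_t = 0$: the world coordinates of each curve point are constant in time. So the first step is simply to set $\Gama^w_t = 0$ in the expressions~\eqref{eq:tangential:velocity:deforming:anyt} and~\eqref{eq:normal:velocity:deforming:anyt} for $\alpha$ and $\beta$ given by that theorem. The term $\rot\frac{\Gama^w_t}{\depth}$ then disappears from both, leaving $\alpha = -\OO\cdot\gama\times(\gama\times\n) - \bigl(\frac{\VV}{\depth} - \skewm\OO\frac{\transl}{\depth}\bigr)\cdot\gama\times\n$ and $\beta = \OO\cdot\gama\times(\gama\times\t) + \bigl(\frac{\VV}{\depth} - \skewm\OO\frac{\transl}{\depth}\bigr)\cdot\gama\times\t$, which are exactly~\eqref{eq:tangent:velocity:anyt} and~\eqref{eq:normal:velocity:anyt}. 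Since the proof of Theorem~\ref{th:deforming:curve:diff} begins from Equation~\ref{eq:3D:point:velocity:allt}, which holds for all $t$, these identities hold for any $t$.

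The second step handles the $t=0$ specialization by invoking the normalization conventions fixed in Section~\ref{sec:notation}, namely $\bc(0) = 0$, $\transl(0) = 0$, and $\rot(0) = \id$. With $\transl(0) = 0$ the term $\skewm\OO\frac{\transl}{\depth}$ vanishes at $t=0$, so the two formulas collapse to $\alpha = -\OO\cdot\gama\times(\gama\times\n) - \gama\times\n\cdot\frac{\VV}{\depth}$ and $\beta = \OO\cdot\gama\times(\gama\times\t) + \gama\times\t\cdot\frac{\VV}{\depth}$, i.e.~\eqref{eq:tangential:velocity} and~\eqref{eq:normal:velocity}, completing the argument.

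If a self-contained argument is preferred over citing Theorem~\ref{th:deforming:curve:diff}, I would instead replay its proof with $\Gama^w_t = 0$ inserted from the start: Equation~\ref{eq:3D:point:velocity:allt} gives $\Gama_t = \skewm\OO\Gama + \VV - \skewm\OO\transl$; then substitute $\Gama = \depth\gama$ and $\Gama_t = \depth_t\gama + \depth\gama_t$, take dot products with $\gama\times\n$ and with $\gama\times\t$, use $\gama_t\cdot(\gama\times\n) = -\alpha$ and $\gama_t\cdot(\gama\times\t) = \beta$ (which follow from $\gama_t = \alpha\t+\beta\n$ and $\e_3^\top\gama = 1$), and finally swap cross and dot in the triple products $(\skewm\OO\gama)\cdot(\gama\times\n)$ and $(\skewm\OO\gama)\cdot(\gama\times\t)$ to rewrite them as $\OO\cdot\gama\times(\gama\times\n)$ and $\OO\cdot\gama\times(\gama\times\t)$.

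The main obstacle: there is essentially none of mathematical substance, as all the work lives in Theorem~\ref{th:deforming:curve:diff}. The only points needing explicit care are (i) recording that ``rigid stationary'' is exactly the condition $\Gama^w_t = 0$, so that none of the reparametrization subtleties of occluding contours enter, and (ii) noting that the $t=0$ simplification is a consequence of the chosen world/camera frame alignment rather than a loss of generality.
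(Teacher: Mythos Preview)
Your proposal is correct and matches the paper's own proof, which simply reads ``Follows directly from Theorem~\ref{th:deforming:curve:diff} and $\Gama^w_t = 0$.'' Your additional remark that the $t=0$ simplification comes from the convention $\transl(0)=0$ makes explicit a step the paper leaves implicit, but the approach is identical.
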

\begin{proof}
  Follows directly from Theorem~\ref{th:deforming:curve:diff} and $\Gama^w_t = 0$.
\end{proof}
\begin{corollary}
The infinitesimal Essential constraint in the Frenet frame of the image of a
rigid curve is given by
\begin{equation}\label{eq:epipolar:constr:differential:intrinsic}
  (\gama\times\t)\cdot\VV\left[ \alpha + \OO\cdot\gama\times(\gama\times\n) \right]
  +(\gama\times\n)\cdot\VV\left[ \beta - \OO\cdot\gama\times(\gama\times\t)
  \right] = 0.
\end{equation}
\end{corollary}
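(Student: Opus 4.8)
The plan is to eliminate the unobservable depth $\depth$ from the two scalar velocity equations of a rigid curve, mirroring the way depth is eliminated to obtain the classical differential (continuous) epipolar constraint for points. The starting point is Corollary~\ref{thm:tangential:normal:velocity:rigid} at $t=0$, i.e.\ Equations~\eqref{eq:tangential:velocity} and~\eqref{eq:normal:velocity}. First I would collect the purely rotational terms on the left and clear the $\frac{1}{\depth}$ factor, rewriting the pair as
\[
\depth\bigl[\alpha + \OO\cdot\gama\times(\gama\times\n)\bigr] = -(\gama\times\n)\cdot\VV,
\qquad
\depth\bigl[\beta - \OO\cdot\gama\times(\gama\times\t)\bigr] = (\gama\times\t)\cdot\VV .
\]
In this form the only unknown is the common scalar $\depth$, which must cancel because it is shared by both equations.

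Second, I would solve each of these for $\depth$ and set the two expressions equal; equivalently, multiply the first equation by $(\gama\times\t)\cdot\VV$, the second by $(\gama\times\n)\cdot\VV$, and subtract. Either route cancels $\depth$ and yields
\[
(\gama\times\t)\cdot\VV\,\bigl[\alpha + \OO\cdot\gama\times(\gama\times\n)\bigr]
+ (\gama\times\n)\cdot\VV\,\bigl[\beta - \OO\cdot\gama\times(\gama\times\t)\bigr] = 0,
\]
which is exactly~\eqref{eq:epipolar:constr:differential:intrinsic}. This is the Frenet-frame (``intrinsic'') form of the infinitesimal essential constraint: it ties the measured tangential and normal image-velocity components $(\alpha,\beta)$ of the image curve to the camera's translational and rotational velocities $\VV,\OO$ with no reference to scene depth.

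Since the argument is a one-step elimination there is no genuine analytic obstacle; the only care needed is for degenerate configurations. When $\VV=0$ (pure rotation) both bracketed right-hand sides vanish and the constraint becomes vacuous, consistent with depth being unobservable in that case. When solving for $\depth$ by division one must assume the relevant bracket is nonzero, but the symmetric cross-multiplication derivation avoids this and shows the identity holds unconditionally, including when the curve point lies on the infinitesimal epipolar line so that one of $(\gama\times\t)\cdot\VV$, $(\gama\times\n)\cdot\VV$ vanishes. I would close by noting that, as in the point case, this is a single scalar relation per curve point that is \emph{bilinear} in $\VV$ and $\OO$ (linear in $\VV$ alone through the $\alpha,\beta$ terms, and carrying mixed $\VV$--$\OO$ products through the rotational brackets), so stacking it over sampled points of one or more image curves produces a system for recovering the differential camera motion from the measured $\gama,\t,\n,\alpha,\beta$.
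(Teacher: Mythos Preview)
Your proof is correct and follows exactly the paper's approach: the paper's entire proof reads ``Eliminate $\depth$ from~\eqref{eq:normal:velocity} and~\eqref{eq:tangential:velocity},'' which is precisely what you carry out. One trivial slip: after multiplying the first rearranged equation by $(\gama\times\t)\cdot\VV$ and the second by $(\gama\times\n)\cdot\VV$, you must \emph{add} (not subtract) to cancel the right-hand sides, since they are $-(\gama\times\n)\cdot\VV\,(\gama\times\t)\cdot\VV$ and $+(\gama\times\t)\cdot\VV\,(\gama\times\n)\cdot\VV$; your displayed conclusion and the alternative ``solve for $\depth$ and equate'' route are both correct as written.
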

\begin{proof}
Eliminate $\depth$ from~\eqref{eq:normal:velocity}
and~\eqref{eq:tangential:velocity}.
\end{proof}
\begin{corollary}(From~\cite{Papadopoulo:Faugeras:ECCV96,Papadopoulo:PhD:96})
The tangential velocity $\alpha$ can be fully determined
from the normal velocity $\beta$ and $\gama$, $\t$, $\n$, $\OO$,
and $\frac{\VV}{\depth}$ without the explicit knowledge of $\depth$,
as
\begin{equation}\label{eq:tangential:velocity:from:normal:velocity}
\alpha = -\left[ \beta - \OO\cdot\gama\times(\gama\times\t)
\right]\frac{\VV\cdot(\gama\times\n)}{\VV\cdot(\gama\times\t)} -
\OO\cdot\gama\times(\gama\times\n).
\end{equation}
\end{corollary}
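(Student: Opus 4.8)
The plan is to treat the two scalar equations \eqref{eq:tangential:velocity} and \eqref{eq:normal:velocity} of Corollary~\ref{thm:tangential:normal:velocity:rigid} as a linear system in the single unknown $\tfrac{1}{\depth}$ and eliminate it. First I would rewrite the normal-velocity equation, using that the dot product commutes ($\gama\times\t\cdot\VV = \VV\cdot(\gama\times\t)$), as
\[
\beta - \OO\cdot\gama\times(\gama\times\t) = \frac{1}{\depth}\,\VV\cdot(\gama\times\t),
\]
which, provided $\VV\cdot(\gama\times\t)\neq 0$, yields the reciprocal depth
\[
\frac{1}{\depth} = \frac{\beta - \OO\cdot\gama\times(\gama\times\t)}{\VV\cdot(\gama\times\t)}.
\]

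Next I would substitute this into the tangential-velocity equation written as $\alpha = -\OO\cdot\gama\times(\gama\times\n) - \tfrac{1}{\depth}\,\VV\cdot(\gama\times\n)$, which gives \eqref{eq:tangential:velocity:from:normal:velocity} immediately. The point worth emphasizing in the write-up is that $\depth$ enters the final formula only through the ratio $\VV\cdot(\gama\times\n)/\VV\cdot(\gama\times\t)$, in which a common factor of $\depth$ cancels between numerator and denominator; hence the right-hand side depends only on $\gama$, $\t$, $\n$, $\OO$ and $\frac{\VV}{\depth}$ (equivalently, on $\VV$ up to scale), with no knowledge of the absolute depth required. The same one-line elimination applied to the ``any $t$'' forms \eqref{eq:tangent:velocity:anyt}--\eqref{eq:normal:velocity:anyt} (with $\VV$ replaced by $\VV-\skewm\OO\transl$) would produce the analogous statement valid for arbitrary time.

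The only real subtlety — and the step I would flag — is the genericity hypothesis $\VV\cdot(\gama\times\t)\neq 0$ needed to divide. Geometrically this denominator vanishes exactly when the camera translational velocity $\VV$ lies in the plane spanned by $\gama$ and $\t$, i.e.\ at an epipolar tangency of the image curve, where the normal velocity $\beta$ carries no depth information and the elimination degenerates. I would note that such isolated points are excluded by the general-position assumption used elsewhere in the paper (cf.\ the treatment of stationary points and $\depth'=0$), so the formula holds at all generic curve points. Everything else is a routine algebraic substitution requiring no further computation.
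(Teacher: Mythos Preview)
Your proposal is correct and takes essentially the same approach as the paper: both eliminate $\depth$ between \eqref{eq:tangential:velocity} and \eqref{eq:normal:velocity}. The paper phrases this as ``solve \eqref{eq:epipolar:constr:differential:intrinsic} for $\alpha$'' (that equation itself being the depth-eliminated combination of the two), whereas you isolate $\tfrac{1}{\depth}$ from one and substitute into the other directly --- an algebraically equivalent one-line manipulation. Your added remarks on the genericity condition $\VV\cdot(\gama\times\t)\neq 0$ and its interpretation as an epipolar tangency are a useful supplement the paper omits.
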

\begin{proof}
Follows by solving~\eqref{eq:epipolar:constr:differential:intrinsic} for $\alpha$.
\end{proof}

\noindent\textbf{Special Case: Occluding Contours.}
A remarkable observation is derived below that the first-order deformation of an apparent contour under
epipolar parametrization does not depend on the 3D surface geometry, since the
curvature-dependent terms cancel out for an occluding contour,
\cf~\cite{Giblin:Motion:Book}.

\begin{theorem} \label{thm:occl:contours:flow} (Occluding contours) Let $\Gama(s,t)$ be the contour generator
for apparent contours $\gama(s,t)$.  Then the image velocity $\gama_t$ at $t=0$
can be determined from $\gama$ by $\depth$ and the
infinitesimal motion parameters using
Equation~\ref{eq:fixed:point:flow:vectorial}, \ie, the same one used for 
a stationary contour.
\end{theorem}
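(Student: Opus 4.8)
The plan is to obtain this statement as a direct specialization of the deforming‑curve velocity formula of Theorem~\ref{th:deforming:curve:diff}. Writing $\gama_t=\alpha\t+\beta\n$ with $\alpha,\beta$ given by~\eqref{eq:tangential:velocity:deforming:anyt}--\eqref{eq:normal:velocity:deforming:anyt}, I would first evaluate at $t=0$, where $\rot=\id$, $\transl=0$ and $\bc=0$; then the $\skewm\OO\frac{\transl}{\depth}$ terms disappear and $\rot\frac{\Gama^w_t}{\depth}=\frac{1}{\depth}\Gama^w_t$. This leaves $\alpha$ and $\beta$ equal to their rigid‑curve counterparts (Corollary~\ref{thm:tangential:normal:velocity:rigid}, Equations~\eqref{eq:tangential:velocity}--\eqref{eq:normal:velocity}) plus a single extra contribution: $-\frac{1}{\depth}\Gama^w_t\cdot(\gama\times\n)$ inside $\alpha$ and $+\frac{1}{\depth}\Gama^w_t\cdot(\gama\times\t)$ inside $\beta$. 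So the whole claim reduces to showing that these two inner products vanish for an occluding contour.

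For that I would invoke the epipolar parametrization adopted in Section~\ref{sec:apparent:contour:basics}: from~\eqref{eq:epipolar:param:eq}, $\Gama^w_t=\lambda(\Gama^w-\bc)$ for some scalar $\lambda$, which at $t=0$ reads $\Gama^w_t=\lambda\Gama=\lambda\depth\gama$, i.e.\ $\Gama^w_t$ points along the viewing ray $\gama$. Since $\gama$ is orthogonal to any cross product in which it appears, $\gama\cdot(\gama\times\n)=\gama\cdot(\gama\times\t)=0$, so both extra terms drop out. Hence $\alpha$ and $\beta$ collapse onto the tangential and normal components of the stationary‑point velocity field, whose sum is precisely~\eqref{eq:fixed:point:flow:vectorial}.

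The one point I would want to state carefully — more a clarification than an obstacle — is \emph{why} epipolar parametrization is the operative hypothesis. The occluding‑contour condition~\eqref{eq:occlusion:condition} by itself only forces $\Gama^w_t$ into the tangent plane of $\surface$, hence orthogonal to $\N^w$ and to $\gama\times\t$; this already kills the extra term in the \emph{normal} velocity $\beta$, so the normal first‑order deformation of the apparent contour is surface‑independent for \emph{any} regular parametrization. The extra term in the \emph{tangential} velocity $\alpha$, by contrast, vanishes only once the residual one‑dimensional freedom in $\Gama^w_t$ is pinned down by the epipolar condition; under a different regular parametrization $\alpha$ would pick up a tangential correction while $\beta$ would be unchanged. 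I would also note the alternative route that motivates the ``curvature‑dependent terms cancel out'' phrasing: instead of the epipolar choice one may expand $\Gama^w_t$ through the second fundamental form of $\surface$, whereupon the curvature‑dependent contributions to $\gama_t$ cancel termwise — the argument above being simply the shortcut that makes this cancellation manifest, \cf~\cite{Giblin:Motion:Book}.
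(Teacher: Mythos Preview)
Your proof is correct and uses essentially the same key step as the paper: at $t=0$ the epipolar parametrization~\eqref{eq:epipolar:param:eq} forces $\Gama^w_t=\lambda\depth\gama$, so the $\Gama^w_t$-dependent terms in the velocity formula vanish. The only cosmetic difference is that the paper cancels these terms directly in the moving-point formula~\eqref{eq:gama:time:deriv:Gamaw:moving} of Theorem~\ref{th:derivatives:allcontours} (observing $\Gama^w_t-(\e_3^\top\Gama^w_t)\gama=0$), whereas you route the same cancellation through the Frenet decomposition of Theorem~\ref{th:deforming:curve:diff}; your additional remark distinguishing the roles of the occluding-contour condition (which alone kills the extra $\beta$ term) and the epipolar choice (needed for the $\alpha$ term) is a useful clarification that the paper does not spell out.
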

\begin{proof}
Recall from Equation~\ref{eq:epipolar:param:eq} that the velocity of an
occluding contour under epipolar parametrization statisfies $\Gama^w_t =
\lambda (\Gama^w - \bc)$ for some $\lambda$, so that at $t=0$,
\begin{equation}
\Gama^w_t = \lambda\depth\gama \implies \e_3^\top\Gama_t^w = \lambda\depth,
\end{equation}
so that $\Gama^w_t = (\e_3^\top\Gama^w_t)\gama$ and the terms
$\Gama^w_t - (\e_3^\top\Gama^w_t)\gama = 0$
so that all appearances of $\Gama^w_t$ cancel-out
altogether in Equation~\ref{eq:gama:time:deriv:Gamaw:moving}, giving exactly the
same formula as for fixed contours,
Equation~\ref{eq:fixed:point:flow:vectorial}, when $\Gama^w_t = 0$.
\end{proof}

We now show exactly how the velocity of the 3D occluding contour,
$\Gama^w_t$, depends on the curvature of the occluding
surface~\cite{Cipolla:PHD:1991,Cipolla:Blake:IJCV92}.

\begin{theorem}\label{th:3D:velocity:occluding}
The velocity of a 3D occluding contour under epipolar parametrization and relative to a fixed world coordinate
system (camera at $t=0$) is given by
\begin{empheq}[left=\empheqlbrace]{align}
\Gama_t^w &= -\frac{\bc_t^\top\N^w}{K^t}\cdot\frac{\Gama^w - \bc}{\|\Gama^w -
\bc\|^2}, & & \text{ for arbitrary $t$.}\\
\Gama_t^w &= -\frac{\bc_t^\top\N}{K^t}\cdot\frac{\gama}{\depth\|\gama\|^2}, &  &\text{
for $t = 0$,}
\end{empheq}
or, in terms of $\transl$ and $\rot$, and image measurements,
\begin{equation}
\Gama_t^w = \frac{1}{K^t} \left(\frac{\VV^\top}{\depth}\,
\frac{\gama\times\t}{\|\gama\times\t\|}\right)\frac{\gama}{\|\gama\|^2},\ \ \ \ \ \text{
for $t = 0$,}\label{eq:3D:velocity:occluding:img}
\end{equation}
where $K^t$ is the normal curvature of the occluding surface along the visual direction.
\end{theorem}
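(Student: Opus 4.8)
The plan is to determine the single scalar $\lambda$ in the epipolar parametrization relation $\Gama_t^w = \lambda(\Gama^w - \bc)$ of Equation~\ref{eq:epipolar:param:eq} by differentiating the occlusion condition $(\Gama^w - \bc)^\top\N^w = 0$ (Equation~\ref{eq:occlusion:condition}) with respect to time. Carrying out the differentiation gives $(\Gama_t^w - \bc_t)^\top\N^w + (\Gama^w - \bc)^\top\N^w_t = 0$. Since the contour generator slides over the (rigid) surface $\surface$, its velocity $\Gama_t^w$ is tangent to $\surface$ and hence orthogonal to $\N^w$, so the term $\Gama_t^{w\top}\N^w$ vanishes, leaving $-\bc_t^\top\N^w + (\Gama^w - \bc)^\top\N^w_t = 0$.

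The key geometric step is to evaluate $\N^w_t$. As the point moves on $\surface$ with velocity $\Gama_t^w$, the unit normal changes by the differential of the Gauss map (Weingarten map) applied to that velocity, $\N^w_t = dN(\Gama_t^w)$; substituting $\Gama_t^w = \lambda(\Gama^w-\bc)$ and using linearity of $dN$ gives $(\Gama^w-\bc)^\top\N^w_t = \lambda\,(\Gama^w-\bc)^\top dN(\Gama^w-\bc)$. By the definition of the normal curvature $K^t$ of $\surface$ in the viewing direction $\Gama^w - \bc$, one has $(\Gama^w-\bc)^\top dN(\Gama^w-\bc) = -K^t\,\|\Gama^w-\bc\|^2$. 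Feeding this into the reduced equation and solving yields $\lambda = -\bc_t^\top\N^w/(K^t\|\Gama^w-\bc\|^2)$, which is exactly the first displayed formula, valid for arbitrary $t$.

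For the $t=0$ specializations I would invoke the normalizations fixed in Section~\ref{sec:notation}: at $t=0$ the camera frame coincides with the world frame, $\bc(0)=0$, $\rot(0)=\id$, so $\Gama^w-\bc = \Gama = \depth\gama$, $\N^w = \N$, and $\|\Gama^w-\bc\|^2 = \depth^2\|\gama\|^2$; substituting reduces the first formula to $\Gama_t^w = -(\bc_t^\top\N/K^t)\,\gama/(\depth\|\gama\|^2)$. Finally, to obtain the image-measurement form~\eqref{eq:3D:velocity:occluding:img}, I would replace $\bc_t$ using $\VV(0) = -\bc_t(0)$ from Equation~\ref{eq:vv:def} and the occluding-contour normal $\N = (\gama\times\t)/\|\gama\times\t\|$ introduced in Section~\ref{sec:apparent:contour:basics}, so that $\bc_t^\top\N = -\VV^\top(\gama\times\t)/\|\gama\times\t\|$; collecting the scalar factors then gives the stated expression.

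The step I expect to be the main obstacle is the correct treatment of $\N^w_t$: justifying that the change of the normal along the contour generator is captured by the shape operator evaluated at $\Gama_t^w$ (which relies on $\surface$ being rigid and on $\Gama_t^w$ being a genuine surface-tangent direction under the epipolar parametrization), and pinning down the sign convention so that the second fundamental form contributes $-K^t\|\Gama^w-\bc\|^2$ rather than its negative. Everything after that is routine bookkeeping with the $t=0$ conventions.
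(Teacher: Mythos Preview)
Your proposal is correct and follows essentially the same route as the paper. The paper starts from the normal curvature identity $K^t = -\Gama_t^{w\top}\N^w_t/\|\Gama_t^w\|^2$ (its Equation~\ref{eq:k:tt}), substitutes the epipolar relation $\Gama_t^w=\lambda(\Gama^w-\bc)$ to solve for $\lambda$, and then replaces $(\Gama^w-\bc)^\top\N^w_t$ by $\bc_t^\top\N^w$ via the differentiated occlusion condition; you merely reorder these two ingredients and phrase the curvature step through the Weingarten map $dN$, which is the same content. The $t=0$ specializations and the passage to the $\VV$, $\gama\times\t$ form are identical to the paper's.
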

\begin{proof}
The desired \emph{formulae} can be consistently derived by adapting
variant~\cite{Astrom:Cipolla:Giblin:IJCV1999} of the original result
by Cipolla and Blake~\cite{Cipolla:PHD:1991,Cipolla:Blake:IJCV92} to the
proposed notation. This must be performed carefully to
establish correctness in a solid way.
We thus provide an alternative, clearer proof without using unit view spheres.

The normal curvature of the occluding surface along the visual direction is
given by classical differential geometry~\cite{Giblin:Motion:Book} as
\begin{equation}\label{eq:k:tt}
  K^t = - \frac{\Gama^{w\top}_t \N^w_t}{\Gama^{w\top}_t\Gama^w_t},
\end{equation}
using epipolar parametrization.
Substituting the epipolar parametrization condition of the second form
of~\eqref{eq:epipolar:param:eq},
\begin{equation}
  K^t = - \frac{(\Gama^w - \bc)^\top\N^w_t}{\lambda \|\Gama^w - \bc\|^2}.
\end{equation}
Isolating $\lambda$ and plugging back into the epipolar parametrization
condition,
\begin{equation}\label{Gamat:occl:intermetidiate}
\Gama_t^w = -\frac{(\Gama^w-\bc)^\top\N^w_t}{K^t}\,\frac{\Gama^w - \bc}{\|\Gama^w -
\bc\|^2}.
\end{equation}
We now show that $(\Gama^w-\bc)^\top\N^w_t = -\bc_t^\top\N^w$, thereby arriving
at the desired expression for $\Gama_t^w$. In fact,
differentiating the occluding contour condition in the second form of
Equation~\ref{eq:occlusion:condition} gives
\begin{align}
(\Gama_t^w - \bc_t)^\top\N^w + (\Gama^w - \bc)^\top\N_t^w &= 0,\\
-\bc_t^\top\N^w + (\Gama^w - \bc)^\top\N_t^w &= 0
\end{align}
which, together with~\eqref{Gamat:occl:intermetidiate} produces the desired
result
\begin{equation}
\Gama_t^w = -\frac{\bc_t^\top\N^w}{K^t}\,\frac{\Gama^w - \bc}{\|\Gama^w -
\bc\|^2},\ \ \ \ \ \text{ for arbitrary $t$.}
\end{equation}
At $t=0$, we have $\N^w = \N$ and $\Gama^w - \bc = \Gama = \depth\gama$ (but note that
$\Gama_t(0)\neq \Gama^w_t(0)$), hence
\begin{align}
\Gama_t^w &= -\frac{\bc_t^\top\N}{K^t}\,\frac{\gama}{\depth\|\gama\|^2},\ \ \ \ \ \text{
for $t = 0$.}
\end{align}%
Using $\VV = -\bc_t$ from Equation~\ref{eq:vv:def} and
$\N = \frac{\gama\times\t}{\|\gama\times\t\|}$ gives
the alternative form of this equation.
\end{proof}

We now present a theorem relating observed quantities to camera motion, which is
key for calibrating 3D motion models from families of projected deforming contours
observed in video
sequences with unknown camera motion, among other applications. A form
of this theorem appears
in~\cite{Papadopoulo:Faugeras:ECCV96,Papadopoulo:PhD:96}, Equation L1, but this
is limited to rigid motion. The following theorem generalizes the results to include
occluding contours.
The fact that Equation~\ref{eq:papadopoulo:l1} in the theorem is also valid for
occluding contours is a new result, to the best of our
knowledge. The term $\Gama_t^w$ is zero for fixed contours, and is dependent on
surface curvature in the case of occluding contours. The equation is not valid
for arbitrary non-rigid contours because, in order to derive the normal flow
equation, we used $\Gama^w_t\cdot(\gama\times\t) = 0$, which is only true for
occluding and fixed contours.

\begin{theorem}(A generalized form of the L1 equation of
\cite{Papadopoulo:Faugeras:ECCV96,Papadopoulo:PhD:96})
Given a 3D \textbf{occluding contour or fixed curve}, and the family of projected curves
$\gama(t)$ observed in a monocular sequence of images from a moving
camera, and given $\t,\kappa,\n,\beta,\beta_t$ measurements at one point, then the
first and second order camera motion, $\OO$, $\VV$, $\OO_t$, $\VV_t$
satisfy the polynomial equation
\begin{equation}
\begin{aligned}\label{eq:papadopoulo:l1}
&V_z\,[\beta - \OO\cdot\gama\times(\gama\times\t)]^2 + 
\VV\cdot\gama\times\t\left(\beta_t - \OO_t\cdot\gama\times(\gama\times\t) -
\OO\cdot\left[ \gama\times(\gama\times\t) \right]_t\right) \\
&- \left[ \VV_t\cdot\gama\times\t + \VV\cdot(\gama\times\t)_t \right][\beta - \OO\cdot\gama\times(\gama\times\t)] 
+ \VV\cdot\gama\times\t\,(\e_3\cdot\skewm\OO\gama)[\beta -
\OO\cdot\gama\times(\gama\times\t)]\\
&+ \e_3\cdot\Gama_t^w[\beta - \OO\cdot\gama\times(\gama\times\t)]^2 +
(\skewm\OO\VV)(\gama\times\t)[\beta - \OO\cdot\gama\times(\gama\times\t)]
= 0.
\end{aligned}
\end{equation}
\end{theorem}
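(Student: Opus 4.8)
The plan is to obtain~\eqref{eq:papadopoulo:l1} as the result of differentiating the \emph{normal--flow equation} once in time and then eliminating the depth $\depth$ together with its time derivative $\depth_t$, using for the latter the independent expression coming from the $\e_3$--component of the $3$D velocity equation. All computations are carried out at $t=0$, where $\transl(0)=0$, $\rot(0)=\id$ and $\transl_t(0)=\VV(0)$.

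First I would record the normal--flow equation in a form valid for \emph{all} $t$. Dotting the basic relation~\eqref{eq:proof:alpha:beta:basic:eq} (from the proof of Theorem~\ref{th:deforming:curve:diff}) with $\gama\times\t$ and using $\gama_t\cdot(\gama\times\t)=\beta$ gives $\depth\bigl[\beta-\OO\cdot\gama\times(\gama\times\t)\bigr]=(\VV-\skewm\OO\transl)\cdot(\gama\times\t)+\rot\Gama^w_t\cdot(\gama\times\t)$. This is exactly where the hypothesis is used: the term $\rot\Gama^w_t\cdot(\gama\times\t)$ vanishes \emph{identically in $t$} --- for a fixed curve $\Gama^w_t=0$, while for an occluding contour under the epipolar parametrization $\Gama^w_t$ lies in the tangent plane of $\surface$, so $\rot\Gama^w_t$ lies in the camera--frame tangent plane and is therefore orthogonal to the camera--frame surface normal $\N$, which is parallel to $\gama\times\t$ by the convention $\N=(\gama\times\t)/\|\gama\times\t\|$; a general non--rigid contour has no such property, which is why the equation fails for it. Abbreviating $A:=\beta-\OO\cdot\gama\times(\gama\times\t)$ and $P:=\gama\times\t$, this reads $\depth A=(\VV-\skewm\OO\transl)\cdot P$ for all $t$, and at $t=0$ it is $\depth A=\VV\cdot P$.

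Next I would differentiate $\depth A=(\VV-\skewm\OO\transl)\cdot P$ in $t$ and evaluate at $t=0$, obtaining $\depth_t A+\depth A_t=\VV_t\cdot P-(\skewm\OO\VV)\cdot P+\VV\cdot P_t$, where $A_t=\beta_t-\OO_t\cdot\gama\times(\gama\times\t)-\OO\cdot[\gama\times(\gama\times\t)]_t$ and $P_t=(\gama\times\t)_t$ (these last derivatives are the measured quantities; expanding them by the product rule brings in $\gama_t=\alpha\t+\beta\n$ and the turning of the image tangent, hence $\n$ and $\kappa$, but the packaged form suffices for~\eqref{eq:papadopoulo:l1}). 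I would then substitute the independent expression $\depth_t=\depth\,\e_3^\top\skewm\OO\gama+\e_3^\top\Gama^w_t+V_z$ coming from~\eqref{eq:depth:t:Gamaw} (with $\e_3^\top\VV=V_z$) --- note this is the $\e_3$--component of the $3$D velocity equation, whereas the normal--flow equation is its $\gama\times\t$--component, so they are genuinely distinct scalar relations. The resulting identity contains $\depth$ only linearly; replacing it via $\depth A=\VV\cdot P$ and clearing the denominator by multiplying through by $A$ yields a polynomial in $\{\t,\n,\kappa,\beta,\beta_t,\OO,\VV,\OO_t,\VV_t\}$ which, after collecting terms, is precisely~\eqref{eq:papadopoulo:l1}.

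The algebra of the last step is routine. The two substantive points --- and the main obstacles --- are: (i) showing that $\rot\Gama^w_t\cdot(\gama\times\t)=0$ for \emph{every} $t$, not merely at $t=0$, since this is what legitimizes differentiating the normal--flow identity rather than just using it at a single instant; and (ii) correctly handling $\frac{d}{dt}(\skewm\OO\transl)$ at $t=0$, whose value $\skewm\OO\VV$ contributes the term $(\skewm\OO\VV)(\gama\times\t)$ --- the one ingredient genuinely new relative to the rigid L1 equation of \cite{Papadopoulo:Faugeras:ECCV96,Papadopoulo:PhD:96}, in which $\transl\equiv 0$ is not assumed at the reference time in the same way. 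A minor bookkeeping check is that the scalar--triple--product identities (switching dot and cross) are applied consistently so that the $\OO\times\gama$ and $\gama\times\t$ factors land exactly in the form displayed in~\eqref{eq:papadopoulo:l1}, and that the $\e_3\cdot\Gama^w_t$ term (zero for fixed curves, curvature--dependent for occluding contours by Theorem~\ref{th:3D:velocity:occluding}) is retained correctly inside the $A^2$ coefficient.
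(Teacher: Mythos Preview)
Your proposal is correct and follows essentially the same route as the paper's proof: differentiate the normal--flow identity $\depth[\beta-\OO\cdot\gama\times(\gama\times\t)]=(\VV-\skewm\OO\transl)\cdot(\gama\times\t)$ in time, evaluate at $t=0$, substitute $\depth_t$ from~\eqref{eq:depth:t:Gamaw}, and eliminate $\depth$ via the undifferentiated identity after multiplying through by $A$. Your justification of point~(i), that $\rot\Gama^w_t\cdot(\gama\times\t)$ vanishes for all $t$ because $\Gama^w_t$ lies in the tangent plane and $\gama\times\t$ is parallel to the camera--frame surface normal, is exactly the mechanism the paper uses (via Theorem~\ref{thm:occl:contours:flow} and Corollary~\ref{thm:tangential:normal:velocity:rigid}); note this holds for any regular surface parametrization, not only the epipolar one.

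One small correction on your point~(ii): the term $(\skewm\OO\VV)\cdot(\gama\times\t)\,A$ is not the genuinely new ingredient relative to the rigid L1 equation. The paper's Remark immediately following the theorem explains that this term is ``apparently missing'' from the original only because of a different choice of translational variable, and that the rigid results are mathematically equivalent. The actual novelty is the extension to occluding contours, carried by the $\e_3\cdot\Gama^w_t$ term together with the argument that the normal--flow identity survives for such contours.
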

\begin{proof}
The normal velocity $\beta$ of an image contour follows 
Equation~\eqref{eq:normal:velocity:anyt}, which holds for both stationary
curves, Corollary~\ref{thm:tangential:normal:velocity:rigid}, and for occluding
contours, Theorem~\ref{thm:occl:contours:flow}.  Differentiating it with respect to time,
\begin{align}
\depth_t\beta + \beta_t\depth = &\depth_t\OO\cdot\gama\times(\gama\times\t)
+ \depth\OO_t\cdot\gama\times(\gama\times\t) + \depth\OO
[\gama\times(\gama\times\t)]_t\\
&+ (\gama\times\t)_t(\VV -\skewm\OO\transl)
+ (\gama\times\t) (\VV_t - \OO_t\times\transl - \skewm\OO\VV)\notag
\end{align}
Rearraging the terms,\\[-0.8em]
\begin{align}
&\depth_t[\beta - \OO\cdot\gama\times(\gama\times\t)] + \depth [\beta_t - \OO_t\cdot\gama\times(\gama\times\t) -
\OO\cdot[\gama\times(\gama\times\t)]_t]\notag\\
&= (\gama\times\t)_t(\VV - \skewm\OO\transl) + (\gama\times\t) (\VV_t -
\OO_t\times\transl - \skewm\OO\VV).
\shortintertext{Setting $t=0$,}
&\depth_t[\beta - \OO\cdot\gama\times(\gama\times\t)] + \depth [\beta_t - \OO_t\cdot\gama\times(\gama\times\t) -
\OO\cdot [\gama\times(\gama\times\t)]_t]\notag\\
&= (\gama\times\t)_t\VV + (\gama\times\t) (\VV_t -\skewm\OO\VV)
\end{align}
Now, from Equation~\ref{eq:depth:t:Gamaw}, we can plug-in an expression for $\depth_t$ at
$t=0$, 
\begin{align}
&(\depth\e_3^\top\skewm\OO\gama + V_z + \e_3^\top\Gama^w_t)[\beta - \OO\cdot\gama\times(\gama\times\t)] + \depth (\beta_t - \OO_t\cdot\gama\times(\gama\times\t) -
\OO\cdot [\gama\times(\gama\times\t)]_t)\notag\\
&= (\gama\times\t)_t\VV + (\gama\times\t) (\VV_t -\skewm\OO\VV),
\end{align}
which is analogous to Equation 7.28 of~\cite[p.167]{Papadopoulo:PhD:96}, but
this time with occluding contours also
being included. Now, eliminating depth $\depth$
using Equation~\ref{eq:normal:velocity}, \eg, by multiplying the above by
$[\beta - \OO\cdot\gama\times(\gama\times\t)]$, we obtain
\begin{equation}
\begin{aligned}
&[V_z(\beta - \OO\cdot\gama\times(\gama\times\t)) +
(\VV\cdot\gama\times\t)\,\e_3\cdot\skewm\OO\gama +\\
&\e_3\cdot\Gama_t^w(\beta - \OO\cdot\gama\times(\gama\times\t))]\,[\beta -
\OO\cdot\gama\times(\gama\times\t)] \\
&+ \VV\cdot\gama\times\t\left[ \beta_t -
\OO_t\cdot\gama\times(\gama\times\t) - \OO\cdot
[\gama\times(\gama\times\t)]_t \right] =\\
&[\VV_t\cdot\gama\times\t + \VV(\gama\times\t)_t -
(\skewm\OO\VV)(\gama\times\t)]\,[\beta -
\OO\cdot\gama\times(\gama\times\t)].
\end{aligned}
\end{equation}
Rearranging the terms, we obtain the desired equation.
\end{proof}

\begin{remark}
Note that previously reported results for the rigid case~\cite[eq.\ 7.12]{Papadopoulo:PhD:96} have
an apparently missing term corresponding to the last term in our Equation~\ref{eq:papadopoulo:l1}, 
\begin{equation}\notag
(\skewm\OO\VV)(\gama\times\t)[\beta - \OO\cdot\gama\times(\gama\times\t)].
\end{equation}
This is due to the fact that they used slightly different variables for the
translational component of the infinitesimal motion equations, but the results
are mathematically the same for the rigid case.
\end{remark}



\begin{theorem}\label{th:spatio:motion:derivative:fixed}
The first spatial derivative of image apparent motion of both a fixed curve
and an occluding contour under epipolar correspondence is given by
\begin{equation}\label{eq:spatio:motion:derivative:fixed}
\gama_{st} = \left(-\frac{\VV}{\depth} + \frac{V_z}{\depth}\gama\right)\frac{\depth_s}{\depth}
- \frac{V_z}{\depth}\gama_s + \skewm\OO\gama_s -
(\e_3^\top\skewm\OO\gama_s)\gama - (\e_3^\top\skewm\OO\gama)\gama_s.
\end{equation}
Note that the 
derivative of depth $\depth_s$ can be expressed in terms of 3D
curve geometry as $\depth_s = \e_3^\top\Gama_s$.
\end{theorem}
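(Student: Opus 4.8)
The plan is to obtain Equation~\ref{eq:spatio:motion:derivative:fixed} simply by differentiating, with respect to the spatial parameter $s$, the already-established expression for the image velocity of a point. Recall that for a stationary curve the velocity at $t=0$ is $\gama_t = \skewm\OO\gama - (\e_3^\top\skewm\OO\gama)\gama + \VV/\depth - (V_z/\depth)\gama$, namely Equation~\ref{eq:fixed:point:flow:vectorial}, and that by Theorem~\ref{thm:occl:contours:flow} this \emph{same} formula holds for an occluding contour once the epipolar parametrization is imposed, since the $\Gama^w_t$-dependent terms cancel identically there. Consequently the two cases named in the statement can be treated at once: it suffices to differentiate this one identity along the curve.

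First I would observe that the infinitesimal motion parameters $\VV$, $\OO$, and $V_z$ are constants along the curve, so differentiation with respect to $s$ acts only on $\gama$ and on $\depth$. Applying the product rule term by term, using $\frac{d}{ds}(1/\depth) = -\depth_s/\depth^2$, gives $\skewm\OO\gama_s$ from the first term; $(\e_3^\top\skewm\OO\gama_s)\gama + (\e_3^\top\skewm\OO\gama)\gama_s$ from the second; $-(\VV/\depth)(\depth_s/\depth)$ from the third; and $(V_z/\depth)(\depth_s/\depth)\gama - (V_z/\depth)\gama_s$ from the fourth. Collecting these contributions and grouping the two $\depth_s/\depth$ terms into $\left(-\VV/\depth + (V_z/\depth)\gama\right)(\depth_s/\depth)$ reproduces Equation~\ref{eq:spatio:motion:derivative:fixed} exactly. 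The auxiliary identity $\depth_s = \e_3^\top\Gama_s$ follows by differentiating $\depth = z = \e_3^\top\Gama$ from Equation~\ref{eq:projection} with respect to $s$, which is consistent with the $S$-derivative form recorded in Equation~\ref{eq:depth:derivs}.

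I do not expect a genuine obstacle: the computation is routine bookkeeping. The one conceptual point worth stating with care is the reduction to a single formula via Theorem~\ref{thm:occl:contours:flow}. If instead one started from the general deforming-curve velocity, Equation~\ref{eq:gama:time:deriv:Gamaw:moving}, then differentiating in $s$ would generate additional terms in $\Gama^w_{st}$ and $\e_3^\top\Gama^w_t\,\gama_s$ — indeed the corollary for a general deforming curve, Equation~\ref{eq:spatio:motion:derivative:deforming}, contains precisely these. The epipolar condition $\Gama^w_t = \lambda(\Gama^w - \bc)$ forces $\Gama^w_t - (\e_3^\top\Gama^w_t)\gama = 0$, which annihilates those contributions in the velocity formula \emph{before} any spatial differentiation; so the only thing to be careful about is invoking Theorem~\ref{thm:occl:contours:flow} at the outset rather than differentiating the general expression and hoping the cancellation survives. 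With that in place, everything reduces to the one-line product-rule calculation above.
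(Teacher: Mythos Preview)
Your proposal is correct and follows essentially the same approach as the paper: differentiate Equation~\ref{eq:fixed:point:flow:vectorial} with respect to $s$, noting that only $\gama$ and $\depth$ vary, and obtain $\depth_s = \e_3^\top\Gama_s$ by differentiating $\depth = \e_3^\top\Gama$. Your treatment is in fact more explicit than the paper's, particularly in invoking Theorem~\ref{thm:occl:contours:flow} up front to justify that the same velocity formula applies to both the fixed and occluding cases before the spatial differentiation is carried out.
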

\begin{proof}
Equation~\ref{eq:spatio:motion:derivative:fixed} follows by differentiating the fixed flow~\eqref{eq:fixed:point:flow:vectorial} 
with respect to $s$, observing
that only $\depth$ and $\gama$ depend on $s$. The formula for $\depth_s$ is
obtained from the observation that the dot product of $\Gama = \depth\gama$ with
$\e_3$ gives $\e_3^\top\Gama =
\depth$. Differentiating this with respect to $s$ gives $\depth_s = \e_3^\top\Gama_s$.
\end{proof}

Theorem~\ref{th:derivatives:allcontours} gives an expression for the
image acceleration of a moving 3D point, which includes points lying on any type
of contour (even non-rigid), in terms of the evolution of the 3D curve. Since the
latter is expressed in terms of a fixed world coordinate system, the motion of
the object and the motion of the cameras are written down separately, even
though they exert joint effects on image velocity.

\begin{theorem}
\label{th:image:acceleration:occluding}
The image acceleration of an \textbf{occluding contour} under epipolar
parametrization is given by
\begin{equation}
\begin{aligned}
\gama_{tt} &= (\skewm\OO^2 + \skewm{[\OO_t]})\gama 
- [\e_3^\top(\skewm\OO^2 + \skewm{[\OO_t]})\gama]\gama
+ 2\skewm\OO\frac{\Gama^w_t}{\depth}
+ \frac{\VV_t}{\depth} - \frac{2\depth_t}{\depth}\gama_t
+ \frac{\e_3^\top\Gama^w_t}{\depth}\gama_t \\
&-\frac{\e_3^\top\Gama^w_t}{\depth}\skewm\OO\gama 
- \frac{\e_3^\top\VV_t}{\depth}\gama
- \frac{2\e_3^\top\skewm\OO\Gama^w_t}{\depth}\gama
+\frac{(e_3^\top\Gama^w_t)(\e_3^\top\skewm\OO\gama)}{\depth}\gama
\qquad\qquad\text{at $t=0$,}
\end{aligned}
\end{equation}
where $\gama_t$ and $\depth_t$ are given by
Equations~\ref{eq:fixed:point:flow:vectorial}
and~\ref{eq:fixed:point:flow:depth}, and
$\Gama^w_t$ is dependent on curvature,
Equation~\ref{eq:3D:velocity:occluding:img}.
\end{theorem}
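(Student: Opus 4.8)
The plan is to obtain the statement as a specialization of the general moving-point acceleration formula of Theorem~\ref{th:derivatives:allcontours}, where the only nontrivial step is eliminating the unknown $\Gama^w_{tt}$. I would start from $\gama_{tt}$ as given by Equation~\eqref{eq:gama:time:secondderiv:Gamaw:moving} together with the expression for $\depth_{tt}$ in Equation~\eqref{eq:depthtt:moving}, and first make the structural observation that $\Gama^w_{tt}$ enters $\gama_{tt}$ \emph{only} through the combination $\frac{1}{\depth}\bigl(\Gama^w_{tt}-(\e_3^\top\Gama^w_{tt})\gama\bigr)$: the raw term $\frac{1}{\depth}\Gama^w_{tt}$ from \eqref{eq:gama:time:secondderiv:Gamaw:moving} and the term $-\frac{\e_3^\top\Gama^w_{tt}}{\depth}\gama$ coming from $-\frac{\depth_{tt}}{\depth}\gama$ assemble into exactly that. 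Hence only the component of $\Gama^w_{tt}$ transverse to $\gama$ can survive.

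Next I would bring in the epipolar parametrization. By Equation~\eqref{eq:epipolar:param:eq} the contour generator satisfies $\Gama^w_t=\lambda(\Gama^w-\bc)$ for a scalar $\lambda(s,t)$; exactly as in the proof of Theorem~\ref{thm:occl:contours:flow}, at $t=0$ this forces $\Gama^w_t-(\e_3^\top\Gama^w_t)\gama=0$, so the image velocity $\gama_t$ collapses to the stationary-contour flow~\eqref{eq:fixed:point:flow:vectorial}, whereas $\depth_t$ keeps its full form~\eqref{eq:depth:t:Gamaw} since $\e_3^\top\Gama^w_t\neq 0$ in general. For the second order, I would differentiate the epipolar condition once more in $t$ and evaluate at $t=0$ using $\bc(0)=0$ and $\bc_t(0)=-\VV(0)$ from Equation~\eqref{eq:vv:def}; this yields $\Gama^w_{tt}=(\lambda_t\depth+\lambda^2\depth)\gama+\lambda\VV$ at $t=0$, whence $\Gama^w_{tt}-(\e_3^\top\Gama^w_{tt})\gama=\lambda(\VV-V_z\gama)$. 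The crucial point is that the one genuinely new unknown, $\lambda_t(0)$, multiplies only the $\gama$-parallel part and therefore drops out; substituting $\lambda=\e_3^\top\Gama^w_t/\depth$ (from the $t=0$ constraint), the surviving contribution is $\frac{\e_3^\top\Gama^w_t}{\depth^2}(\VV-V_z\gama)$.

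I would then put this into the Frenet-free form of the statement by recognizing $\frac{1}{\depth}(\VV-V_z\gama)$ as the translational part of the fixed-point flow, i.e.\ $\frac{\VV}{\depth}-\frac{V_z}{\depth}\gama=\gama_t-\skewm\OO\gama+(\e_3^\top\skewm\OO\gama)\gama$ by \eqref{eq:fixed:point:flow:vectorial}, so that the $\Gama^w_{tt}$ contribution becomes $\frac{\e_3^\top\Gama^w_t}{\depth}\gama_t-\frac{\e_3^\top\Gama^w_t}{\depth}\skewm\OO\gama+\frac{(\e_3^\top\Gama^w_t)(\e_3^\top\skewm\OO\gama)}{\depth}\gama$. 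Collecting this with the remaining terms of \eqref{eq:gama:time:secondderiv:Gamaw:moving} and with the non-$\Gama^w_{tt}$ part of $-\frac{\depth_{tt}}{\depth}\gama$, which supplies $-[\e_3^\top(\skewm\OO^2+\skewm{[\OO_t]})\gama]\gama$, $-\frac{2\e_3^\top\skewm\OO\Gama^w_t}{\depth}\gama$ and $-\frac{\e_3^\top\VV_t}{\depth}\gama$, reproduces the stated ten-term expression (the $-\frac{2\depth_t}{\depth}\gama_t$ contribution using the full $\depth_t$ of~\eqref{eq:depth:t:Gamaw} and the stationary-flow $\gama_t$). Finally I would identify $\Gama^w_t$ via Equation~\eqref{eq:3D:velocity:occluding:img} of Theorem~\ref{th:3D:velocity:occluding}, which is where the dependence on the normal curvature $K^t$ of the occluding surface enters.

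The main obstacle is precisely the $\Gama^w_{tt}$ reduction: one must be confident that $\Gama^w_{tt}$ enters only through its $\gama$-transverse component and that this component, computed from the second time-derivative of the epipolar constraint, closes up in terms of $\VV$ and $\Gama^w_t$ alone, with the $\lambda_t$ term eliminated — this is what makes the final formula self-contained. Everything else is routine vector algebra with the same triple-product identities and cancellations already used in Theorems~\ref{th:derivatives:allcontours} and~\ref{thm:occl:contours:flow}.
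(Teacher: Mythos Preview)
Your proposal is correct and follows the same strategy as the paper's proof: both specialize Theorem~\ref{th:derivatives:allcontours} by differentiating the epipolar constraint and showing that the unknown $\lambda_t$ sits entirely in the $\gama$-parallel part of $\Gama^w_{tt}$ and therefore cancels against the $-\frac{\e_3^\top\Gama^w_{tt}}{\depth}\gama$ contribution from $\depth_{tt}$. The only difference is in bookkeeping: the paper introduces the world-frame viewing direction $\boldv=\rot^\top\gama$ (and $\ff=\rot^\top\e_3$), writes the epipolar condition as $\Gama^w_t=\lambda\boldv$, differentiates to $\Gama^w_{tt}=\lambda_t\boldv+\lambda\boldv_t$, and then computes $\boldv_t(0)=\gama_t-\skewm\OO\gama$; you instead differentiate $\Gama^w_t=\lambda(\Gama^w-\bc)$ directly, use $\bc(0)=0$ and $\bc_t(0)=-\VV$, and then invoke~\eqref{eq:fixed:point:flow:vectorial} to rewrite $\frac{1}{\depth}(\VV-V_z\gama)$ in terms of $\gama_t-\skewm\OO\gama+(\e_3^\top\skewm\OO\gama)\gama$. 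Both routes yield the same transverse contribution, yours being slightly leaner since it avoids the auxiliary frame.

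One point worth flagging: you are right to keep $\depth_t$ in its full form~\eqref{eq:depth:t:Gamaw} (with the $\e_3^\top\Gama^w_t$ term) rather than the fixed-point version~\eqref{eq:fixed:point:flow:depth} cited in the theorem statement; the paper's own derivation, which reaches the final formula through~\eqref{eq:gamatt:moving:substituted:occl}, implicitly uses the full $\depth_t$ as well.
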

\begin{proof}
Substituting
Equation~\ref{eq:depthtt:moving} into Equation~\ref{eq:gamatt:moving:compact}, we
get
{\small \begin{equation}\label{eq:gamatt:moving:substituted:occl}
\gama_{tt} = (\skewm\OO^2 + \skewm{[\OO_t]})\gama +
\frac{2\skewm\OO\Gama_t^w}{\depth}+ \frac{\Gama^w_{tt}}{\depth} +
\frac{\VV_t}{\depth} - \frac{2\depth_t\gama_t}{\depth} - [
\e_3^\top(\skewm\OO^2 + \skewm{[\OO_t]})\gama]\gama
-\frac{\e_3^\top\VV_t}{\depth}\gama -
\frac{2\e_3^\top\skewm\OO\Gama_t^w}{\depth}\gama -
\frac{\e_3^\top\Gama_{tt}^w}{\depth}\gama.
\end{equation}}%
Now, let $\boldv$ be the viewing direction in world coordinates, so that
\begin{equation}
\gama = \rot\boldv,
\end{equation}
and let $\ff$ be the normal to the image plane in world coordinates, so that
\begin{equation}
\e_3 = \rot\ff.
\end{equation}
Thus, 
\begin{equation}
\e_3^\top\gama = \ff^\top \rot^\top \rot \boldv = \ff^\top\boldv = 1.
\end{equation}
Note also that at $t=0$ we have $\ff = \e_3 $ and $\gama = \boldv$.
Now, the condition for epipolar parametrization of an occluding contour,
Equation~\ref{eq:epipolar:param:eq}, can be expressed as
\begin{equation}\label{eq:Gamatw:epipolar}
\Gama^w_t = \lambda \boldv,
\end{equation}
for some scalar factor $\lambda$. Taking the dot product with $\ff$ we have
\begin{empheq}[left=\empheqlbrace]{align}\label{eq:lambda:epipolar}
\lambda &= \ff^\top\Gama^w_t,\\
\Gama_t^w &= \ff^\top\Gama_t^w\boldv.
\end{empheq}
Differentiating~\eqref{eq:Gamatw:epipolar} with respect to time and using~\eqref{eq:lambda:epipolar} gives
\begin{align}
\Gama_{tt}^w = \lambda_t\boldv + \lambda\boldv_t &= \lambda_t\boldv +
\ff^\top\Gama_t^w\boldv_t.
\shortintertext{Taking the dot product with $\ff$,}
\ff^\top\lambda_t\boldv + \ff^\top(\ff^\top\Gama_t^w)\boldv_t&= \ff^\top\Gama^w_{tt}\\
\lambda_t &= \ff^\top\Gama^w_{tt} - (\ff^\top\Gama_t^w)\ff^\top\boldv_t.
\shortintertext{Thus,}
\Gama^w_{tt} = (\ff^\top\Gama^w_{tt})\boldv  -
(\ff^\top&\Gama^w_t)(\ff^\top\boldv_t)\boldv + (\ff^\top\Gama^w_t)\boldv_t,\\
\label{eq:gama:tt:epipolar:condition:temp}
\Gama^w_{tt} = \e_3^\top\Gama^w_{tt}\gama -
(\e_3^\top&\Gama^w_t)(\e_3^\top\boldv_t)\gama + \e_3^\top\Gama_t^w\boldv_t
& \text{at $t=0$}.
\end{align}
In order to get $\boldv_t(0)$ in terms of $\gama$ we write
\begin{align}
\gama_t &= \rot_t\boldv + \rot\boldv_t.
\shortintertext{Thus}
\boldv_t &= \gama_t - \skewm\OO\gama\ \ \ \ \ \ \ \ \text{at $t=0$.}
\shortintertext{Substituting back into~\eqref{eq:gama:tt:epipolar:condition:temp},}
\Gama^w_{tt} = \e_3^\top\Gama^w_{tt}\gama + \e_3^\top\Gama^w_t\gama_t &-
\e_3^\top\Gama_t^w\skewm\OO\gama
+(\e_3^\top\Gama^w_t)(\e_3^\top\skewm\OO\gama)\gama & \text{at $t=0$.}
\end{align}
Plugging this equation onto~\eqref{eq:gamatt:moving:substituted:occl},
the $\e_3^\top\Gama^w_{tt}\gama/\depth$ terms cancel out, giving the final equation.
\end{proof}

\section{Mathematical Experiment}
To illustrate and test the proposed theoretical framework, we have devised an
experiment around a synthetic dataset constructed for this research. This
dataset has already been used for validating a pose estimation system~\cite{Fabbri:Kimia:Giblin:ECCV12}. The dataset is
composed of the following components:
\begin{enumerate}
  \item A variety of synthetically generated 3D curves (helices, parabolas,
    ellipses, straight lines, and saddle curves) with well-known parametric
    equations, as shown in Figure~\ref{fig:synth:data:sample}. 
  \item Ground-truth camera models for a video sequence around the curves.
  \item Differential geometry of the space curves analytically computed up to
    third-order (torsion and curvature derivative), using Maple when necessary.
    The dataset together with C++ code implementing these expressions from Maple
    are listed in the supplementary material Online Resource~1.
  \item The 3D curves are densely sampled, each 3D sample having attributed differential geometry
    from the analytic computation (up to torsion and curvature).
  \item A video sequence containing a family of 2D curve samples with attributed
    differential geometry is rendered by projecting the 3D samples onto a
    $500\times400$ view using the ground truth cameras. These subpixel edgels
    with attributed differential geometry simulate ideal aspects of what in
    practice could be the output of high-quality subpixel edge detection and
    grouping~\cite{Tamrakar:Kimia:ICCV07,Tamrakar:PHD:2008,Yuliang:etal:CVPR14}.
  \item Correspondence between all samples obtained by keeping track of the underlying 3D points.
  \item Specific analytic expressions for 2D differential geometry were derived using Maple 
    since these are often too long due to perspective projection. These
    expressions are also provided in the C++ code that synthesizes the dataset.
  \item C++ code implementing the formulas in this paper is also provided with
    the dataset, and can be readily used in other projects.
\end{enumerate}

For the present theoretical paper, the experiments consist of checking the
proposed expressions against the analytic expressions that are obtained by
differentiating each specific parametric equation. After projecting differential
geometry using our formulas applied to the 3D samples attributed with
differential geometry, we obtain corresponding 2D projected differential
geometry at each sample. We compare this to the differential geometry on the
curve projections analytically computed from the parametric equations, observing
a match. We then reconstruct these correspondences up to third-order
differential geometry using the proposed expressions, and observe that they
indeed match to the original analytic expressions from Maple. We have also
performed a similar experiment for the expressions involving occluding contours,
using a 3D ellipsoid and sphere.

We have observed a complete agreement between our code and the specific analytic
expressions, confirming that the formulas as presented in this manuscript are
correct. The source code of this illustrative experiment also
serves as an example of how to use the proposed framework in programming
practice, how to check for degenerate conditions stated in the theorems, among others.

\begin{figure}[ht]
   \centering
   \includegraphics[height=2in]{}
   \caption{
   Two views of the synthetic multiview curve differential geometry
   dataset~\cite{Fabbri:Kimia:Giblin:ECCV12}.
 }
\label{fig:synth:data:sample}
\end{figure}

\section{Conclusion}
We presented a unified differential-geometric theory of projection and
reconstruction of general curves from multiple views. By gathering previously
scattered results using a coherent notation and proof methodology that scale to
expressing more sophisticated ideas, we were able to
prove novel results and to provide a comprehensive study on how the differential 
geometry of curves behaves under perspective projection, including
the effects of intrinsic parameters. For instance, we derived how the tangent,
curvature, and curvature derivative of a space curve projects onto an image, and
how the motion of the camera and of the curve relates to the projections. This
lead to the \emph{novel} result that torsion -- which characterizes 
the tri-dimensionality of space curves -- projects to curvature derivative in an
image, and the \emph{novel} result of how the parametrization of corresponding
image curves are linked across different views, up to third order to reflect
the underlying torsion.  We
also proved formulas for reconstructing differential geometry, given differential
geometry at corresponding points measured in at least two views.  In particular, this
gives the \emph{novel} result of reconstructing space curve torsion, given corresponding
points, tangents, curvatures, and curvature derivatives measured in two views.
We determined that
there are no correspondence constraints in two views -- any pair of
points with attributed tangents, curvatures, and curvature
derivatives are possible matches, as long as the basic point epipolar constraint
is satisfied. There is, however, a constraint in three or more views: from two
views one
can transfer differential geometry onto other views and enforce measurements to
match the reprojections using local curve shape, avoiding clutter. This has been
demonstrated in a recent work in curve-based multiview stereo by the
authors~\cite{Fabbri:Kimia:CVPR10,Fabbri:Kimia:CVPR16}, namely for matching linked curve fragments
from a subpixel edge detector across many views. Experiments clearly show that
differential-geometric curve structure is essential for the matching
to be immune to edge clutter and linking instability, by enforcing reprojections
to match to image data in \emph{local} shape.

This paper is part of a greater effort of augmenting multiple view geometry to
model general curved structures~\cite{Fabbri:PhD:2010}. Work on camera pose
estimation based on curves using the formulas in this paper has been recently
published~\cite{Fabbri:Kimia:Giblin:ECCV12}, and trifocal relative pose
estimation using local curve geometry to bootstrap the system is currently under
investigation.  In a complete system, once a core set of three views are
estimated and an initial set of curves are reconstructed, more views can be
iteratively registered~\cite{Fabbri:Kimia:Giblin:ECCV12}. These applications of
the theory presented in this paper and their ongoing extensions would enable a practical
structure from motion pipeline based on curve fragments, complementing interest
point technology.  We have also been working on the multiview differential
geometry of surfaces and their shading.


{\small

}



\end{document}